\newtheorem{theorem}{Theorem}
\newtheorem{proposition}{Proposition}
\newtheorem{lemma}{Lemma}
\theoremstyle{definition}
\newtheorem{definition}{Definition}
\newtheorem{assumption}{Assumption}
\newtheorem{remark}{Remark}
\newcommand{\matr}[1]{\mathbf{#1}}
\newcommand{\bm}{\mathbf}
\begin{document}

\title{Clustered Federated Learning based on Nonconvex Pairwise Fusion}

\author{Xue~Yu,
	Ziyi~Liu,
	Wu~Wang
	and~Yifan~Sun}

\affil{Center for Applied Statistics, School of Statistics, Renmin University of China, Beijing, China \\
\{xueyu\_2019, ziyiliu, wu.wang, sunyifan\}@ruc.edu.cn.}

\maketitle
\begin{abstract}
	This study investigates clustered federated learning (FL), one of the formulations of FL with non-i.i.d. data, where the devices are partitioned into clusters and each cluster optimally fits its data with a localized model. We propose a clustered FL framework that incorporates a nonconvex penalty to pairwise differences of parameters. Without \emph{a priori} knowledge of the set of devices in each cluster and the number of clusters, this framework can autonomously estimate cluster structures. To implement the proposed framework, we introduce a novel clustered FL method called Fusion Penalized Federated Clustering (FPFC). Building upon the standard alternating direction method of
	multipliers (ADMM), FPFC can perform partial updates at each communication round and allows parallel computation with variable workload. These strategies significantly reduce the communication cost while ensuring privacy, making it practical for FL. We also propose a new warmup strategy for hyperparameter tuning in FL settings and explore the asynchronous variant of FPFC (asyncFPFC). Theoretical analysis provides convergence guarantees for FPFC with general losses and establishes the statistical convergence rate under a linear model with squared loss. Extensive experiments have demonstrated the superiority of FPFC compared to current methods, including robustness and generalization capability.
\end{abstract}


\maketitle


\section{Introduction}
Increasing amounts of data are generated by end users' own devices (also called clients), such as mobile phones, wearable devices, and autonomous vehicles. The fast-growing storage and computational capacities of these terminal devices, coupled with concerns over privacy, have led to a growing interest in federated learning (FL)\cite{Federated2020, MAL-083}. Standard FL approaches collaboratively train a shared global model for all devices while maintaining data on each device \cite{konecny2016, Mc2017, mohri19a, Li2018, Karimireddy20a}. As such, critical concerns such as privacy, security, and data access rights are well-addressed in the FL framework. However, as discussed in \cite{Sattler2021}, the local data stored on each device usually have heterogeneous conditional distributions due to diverse user characteristics. The data and statistical heterogeneity observed across different devices obfuscate the efforts to build a global model in many FL applications.  

In addressing the limitations of a global model, a recent proposal, clustered FL \cite{Sattler2021, Ghosh2020, Mansour2020,Yu_Liu_Sun_Wang_2023}, assumes that devices can be grouped into clusters, where devices within the same cluster share identical models. Cluster structures are very common in many applications, such as recommendation systems \cite{Li2003} and precision medicine \cite{Lagakos2006}. Thus, clustered FL has drawn increasing attention and is considered a formulation of FL that can address the data heterogeneity problem. 
However, the key issue regarding clustered FL is that the set of devices in each cluster is unknown \emph{a priori}. To tackle this challenge, several methods have been proposed. Sattler et al.\cite{Sattler2021} proposed a clustered FL (CFL) algorithm under the clustered FL framework in the spirit of the hierarchical divisive clustering algorithm. However, the clustering procedure is implemented at the central server, resulting in high computation costs, especially when the number of devices is large. Inspired by the classical K-means algorithm, Mansour et al. \cite{Mansour2020} introduced the HYPCLUSTER algorithm and Ghosh
et al. \cite{Ghosh2020} presented the Iterative Federated Clustering Algorithm (IFCA). More recently, Marfoq et al. \cite{Marfoq2021} developed the Federated Expectation-Maximization (FedEM) algorithm, which offers a versatile approach for clustered FL and personalized FL. Although the three algorithms eliminate the need for centralized clustering procedures, they require specifying the number of clusters, and the clustering results are usually sensitive to the initial values. 

Motivated by the limitations mentioned above in current studies, we propose a fusion penalization-based framework for clustered FL and develop a novel clustered FL method called \emph{fusion penalized federated clustering} ({FPFC}). In this work, we introduce a nonconvex function, such as the smoothly clipped absolute deviation (SCAD) penalty \cite{Fan2001}, to penalize the pairwise differences of local model parameters. The penalty function can force pairs of local parameters to be exactly equal if they are close to each other, effectively creating clusters within the local parameters. This method places the clustered FL on a solid theoretical footing based on a well-defined objective function. In addition to conventional convergence analysis of algorithms, the statistical properties, e.g., the consistency of identifying the cluster structure of devices, of this method can be rigorously established. The main contributions of our work can be summarized as follows:

\begin{itemize}
	\item We formulate clustered FL as a penalized optimization problem, where the nonconvex fusion penalization term shrinks the pairwise differences of model parameters of devices and encourages minimal differences, thus clustering the local devices.
	
	\item Under the proposed framework, we develop a novel method called \emph{fusion penalized federated clustering} (FPFC) to address general clustered FL problems. This method can automatically determine the cluster membership of devices and derive optimal models specific to each cluster in a distributed setting, without requiring prior knowledge regarding the number of clusters and the composition of devices within each cluster.
	
	\item The proposed method is a general framework that can handle learning tasks with nonconvex losses, such as neural networks. Regarding the algorithm, we extend the standard Alternating Direction Method of Multipliers (ADMM) algorithm to the FL setup. Specifically, we decouple the joint optimization problem into a series of subproblems that can be solved by local devices in parallel, in the meantime, the communication cost is greatly reduced compared to similar methods. Moreover, it allows for inexact minimization of each subproblem, further enhancing computational efficiency.
	
	\item We introduce a novel warmup strategy for hyperparameter tuning in the FL setting. This approach significantly reduces the communication cost compared to the conventional cross-validation method, while simultaneously achieving better performance.
	
	\item Theoretically, we establish the convergence rate of FPFC for general nonconvex losses under standard assumptions. More importantly, unlike IFCA\cite{Ghosh2020}, which assumes full device participation to achieve convergence, our analysis allows partial participation by selecting a subset of devices to perform local updates at each communication round. Furthermore, the statistical convergence rate of the proposed method has been established under a linear model with squared loss. 
	
	\item Finally, we conduct extensive experiments to compare our method with state-of-the-art approaches. We explore the performance of FPFC in aspects of robustness, generalization capability, and communication efficiency. We also introduce several variants of FPFC, including asynchronous updates. The results consistently demonstrate the versatility and effectiveness of FPFC across a diverse range of settings.
	
\end{itemize}

This paper is structured as follows. Section 2 outlines related works. Section 3 provides the relevant background and formulates the main problem. Section 4 details the proposed FPFC algorithm and the regularization parameter tuning strategy. Section 5 presents the theoretical results of FPFC and its statistical properties. The experimental results are presented in Section 6. Section 7 concludes the paper.

\section{Related work}
As this work addresses several issues of existing clustered FL algorithms via a pairwise fusion penalty, we review the related work in two subtopics: heterogeneous FL and clustering via pairwise fusion penalty.

\subsection{Heterogeneous federated learning}
Federated learning has sparked significant interest since it was first proposed by \cite{Mc2017}. The most popular method is the FederatedAveraging (FedAvg) algorithm, and it has been shown to work well empirically, particularly for non-convex problems. FedAvg converges when the data is i.i.d \cite{Stich2018}, but it can diverge in practical settings when the data on the users' devices are non-i.i.d, i.e., heterogeneous \cite{Zhang2020}. Several formulations have been proposed to tackle these heterogeneity issues, such as adding some adaptation terms to the global model, personalized FL, and clustered FL.

Karimireddy et al. \cite{Karimireddy20a} pointed out that FedAvg may suffer from device drift, so they used control variables to reduce the drift between different devices and achieve favorable performance. Li et al. \cite{Li2018} applied a proximal term to limit the impact of statistical heterogeneity under an assumption of bounded local dissimilarity. When data distributions across devices exhibit significant non-i.i.d. characteristics, it becomes challenging to train a global model that performs well across all devices. Another choice is to build personalized models for each device instead of learning a single shared model. As one of the formulations, personalized FL aims to jointly train personalized models that can work better than the global model and the local models without collaboration. The most common method for personalization involves two steps \cite{Kulkarni2020}. First, a global model is trained under the FL framework, and then each device can train a customized model with its private data based on the global model \cite{Wang2019, Finn2017}. \cite{Fallah2020} proposed Personalized FedAvg (Per-FedAvg) to train a shared model that performs well after each device updates it by running a few steps of gradient descent with respect to its own data. Another efficient technique is to regularize personalized models using the relationships between related devices \cite{Zantedeschi2019, Hanzely2020, Dinh2020, Huang2021AAAI}. Smith et al. \cite{Smith2017} suggested a multi-task learning (MTL) framework to handle the statistical challenges of FL and proposed MOCHA to address the systems challenges of federated MTL based on the relationships among tasks. Lin et al. \cite{Lin2022} proposed to project local
models into a shared-and-fixed low-dimensional random subspace and use infimal convolution to achieve personalization.

The formulation of clustered FL has been proposed in recent years. Instead of building a unified model for all devices, clustered FL aims to construct a set of heterogeneous models, each corresponding to one cluster of devices. As one of the earliest clustered FL methods, clustered FL (CFL) \cite{Sattler2021} follows the idea of the divisive hierarchical clustering algorithm. Specifically, it divides the devices recursively in a top-down manner, relying on the cosine similarity between the gradient updates of the devices. This method has several favorable properties. In particular, it is applicable to general nonconvex losses, does not require the number of clusters to be known \emph{a priori}, and comes with theoretical guarantees on the clustering quality. However, it also has several limitations. Practically, optimally bisecting devices requires significant computational resources for the central server, especially when dealing with a large number of devices. Theoretically, its convergence behavior has not been rigorously characterized. Another line of research aims to minimize an objective function (without penalization) involving possible clusterings by using some iterative algorithms that are in the similar spirit of K-means clustering, e.g., Iterative Federated Clustering Algorithm (IFCA) \cite{Ghosh2020} and {HYPCLUSTER} \cite{Mansour2020}. Although their convergence behaviors have been rigorously characterized, these methods need to know the number of clusters in advance and rely on a relatively good initialization to ensure optimal results. Moreover, convergence analysis of IFCA has only been shown for strongly convex losses, although it has good performances in some nonconvex settings. Federated Expectation-Maximization (FedEM) \cite{Marfoq2021} is a state-of-the-art federated multi-task learning method with clustered FL as a special case. Similar to IFCA and HYPCLUSTER, it requires specification of the number of clusters and underlying data distribution, which is infeasible in practice.

Recently, Cho et al. \cite{Cho2023} proposed a framework named COMET which combines clustered knowledge transfer and personalized federated learning to reduce communication costs and accommodate model heterogeneity. Similar to IFCA, COMET also utilizes the K-means clustering algorithm on the server for clustering. Devices are required to have access to a public dataset in this framework. Additionally, Learning Across Domains and Devices (LADD) \cite{shenaj2023ladd} applied K-means clustering to partition devices into clusters based on the image styles of each device in autonomous driving applications. To address the challenges of the straggler effect in synchronous FL and model staleness in asynchronous FL, Zhang et al. \cite{Zhang2023} presented a semi-asynchronous clustered federated learning framework. They utilized spectral clustering on an affinity matrix that incorporates information about devices' similarity, gradient direction, and the latency of model updates. Since clustering is conducted at a pre-processing stage, the proposed method can be categorized as a one-shot clustering approach. Another one-shot clustering method is Principal Angles analysis for Clustered Federated Learning (PACFL) \cite{Vahidian2023}. By constructing a proximity matrix based on the principal angles between the local data subspaces, PACFL performed agglomerative hierarchical clustering.
The benefit of one-shot clustering is that it reduces communication costs as it only requires running clustering before training. However, both algorithms require specifying the number of groups or adjusting the clustering threshold to determine the structure of clusters.

In contrast to the aforementioned methods, our proposed method, FPFC, can automatically cluster devices without the knowledge of the number of clusters or underlying data distribution. In addition, it does not require a complex centralized clustering procedure. It updates only a subset of devices at each communication round and allows each participating device to solve the corresponding minimization problem inexactly up to a given relative accuracy.

\subsection{Clustering via pairwise fusion penalty}       
The pairwise fusion penalty is a recently proposed technique for the clustering problem. Pelckmans et al. \cite{Pelckmans2005} and Hocking
et al. \cite{Hocking2011} applied the pairwise fusion penalty to unsupervised clustering and proposed the convex clustering method. Recently, Ma et al. \cite{Ma2017A, Ma2019} generalized this technique to supervised clustering. Clustered FL can be deemed as a supervised clustering problem in a distributed setting, where subjects are the devices. Another related method is the {Network Lasso} \cite{Hallac2015}. As a generalization of the group lasso to a network setting, {Network Lasso} allows simultaneous clustering and optimization on a network by penalizing the $ \ell_2 $-norm of differences of parameters at adjacent nodes. However, it is specifically designed for the \emph{fully decentralized scenario, where the network structure of devices is known in advance}. The communication cost and requirements of exact minimization and full device participation make it less applicable to FL. (See Section \ref{sec:NL} for further discussion). In the existing literature, one of the most relevant works is Federated Attentive Message Passing (FedAMP) \cite{Huang2021}, which focuses on personalized FL by encouraging collaboration between similar devices. While FedAMP also applies a pairwise fusion penalty, it differs from FPFC in that it aims to seek the optimal personalized model for \emph{each device}. FedAMP cannot cluster devices because it penalizes the \emph{squared norm} instead of the norm of the pairwise differences of parameters. Moreover, it requires full device participation, and each device solves the corresponding minimization problem exactly, making it impractical for FL.

In this work, we apply the pairwise fusion penalty to supervised clustering in the FL setting and develop a distributed optimization algorithm. In contrast to Network Lasso, our focus is on a standard FL setting with a hub-and-spoke topology, where the hub represents a coordinating server and the spokes connect to devices. The server only sends one parameter to each participating device at each communication round, significantly reducing communication costs. Moreover, to attain a reliable cluster structure, we resort to nonconvex penalties, which are asymptotically unbiased and are more aggressive than convex penalties, e.g. $\ell_{1}$ norm penalty, in enforcing a sparser solution. The convergence analysis of FPFC for general nonconvex losses has been rigorously established under standard assumptions (Assumptions \ref{as:smooth1}-\ref{as:pi}). 

\textbf{Notations.} $[m]$ is the set of integers $\{1,\ldots,m\}$, and $\|\cdot\|$ represents the $\ell_2$ norm of vectors.

\section{Clustered federated learning via nonconvex pairwise fusion}
In a system consisting of $m$ devices and one server, communication between the server and devices occurs according to predefined protocols. Each device $i$ is associated with its parameter $\bm{\omega}_i \in \mathbb{R}^d$. The devices are grouped into $L$ disjoint clusters, where devices within the same cluster share identical parameters. Specifically, let $G=\{G_1,\ldots,G_L\}$ represents a mutually exclusive partition of $[m]$ and $\mathbf  \alpha_l$ denotes the common value for $\mathbf  \omega_i$ within the $l$-th cluster. Thus, for all $i\in G_l$, $\mathbf \omega_i=\mathbf \alpha_l$. In practice, the number of $L$ is unknown, and we also have no prior knowledge of the cluster structures. To address these issues, we minimize the following objective function:
\begin{equation}
\label{eq:obj}
\min_{\mathbf \omega} \{F(\mathbf \omega)= \sum_{i=1}^m f_i(\mathbf  \omega_i)+\frac{1}{2m}\sum_{i=1}^m \sum_{j=1}^m g(\|\mathbf \omega_i-\mathbf \omega_j\|,\lambda)\},
\end{equation}
where $\mathbf \omega=(\mathbf  \omega_1^\top,\ldots,\mathbf \omega_m^\top)^\top$, and $g(\cdot,\lambda)$ is a penalty function with hyperparameter $\lambda>0$. 
Here, $f_i(\mathbf \omega_i)=\frac{1}{n_i}\sum_{s=1}^{n_i}\ell(\omega_{i}; z_{i}^s)$, where $ \{z_{i}^s\equiv (X_i^s,y_i^s), s\in [n_i]\} $ is the dataset stored at the $i$-th device and $\ell(\omega_{i}; z_{i}^s)$ is the loss function associated with model parameter $ \omega_{i} $. The first term $\sum_{i=1}^m f_i(\mathbf  \omega_i)$ in \eqref{eq:obj} is the sum of the empirical losses of all devices. 
The second term serves as a pairwise fusion penalty, encouraging the partitioning of devices into clusters. This framework is advantageous over IFCA and many others, as the number and structure of clusters are automatically and simultaneously determined in the minimization process. A popular choice for the penalty function $g(\cdot,\lambda)$ is the $\ell_{1}$ norm or Lasso penalty with $g(t,\lambda)=\lambda|t|$. Nevertheless, this convex function will result in biased parameter estimates. Therefore, our attention is directed towards nonconvex functions. A well-known example is the smoothly clipped absolute deviation (SCAD) penalty \cite{Fan2001}, denoted as $P_{a}(t,\lambda)$, which has the form
\begin{equation}\label{eq:SCAD}
\begin{aligned}
P_{a}(t,\lambda) = \begin{cases} \lambda |t|, & |t| \leq \lambda \\  \frac{a\lambda |t|- 0.5(t^{2}+ \lambda^{2})}{a-1},& \lambda < |t| \leq a\lambda \\ \frac{\lambda^{2}(a+1)}{2}, & |t| > a \lambda. \end{cases}
\end{aligned}
\end{equation}

It is worth noting that, in contrast to FedAMP \cite{Huang2021}, we use $\|\mathbf \omega_i-\mathbf \omega_j\|$ instead of $\|\mathbf \omega_i-\mathbf\omega_j\|^2$ in the penalty function. To see why the nonconvex SCAD penalty consistently recovers the underlying cluster memberships, the first three subfigures of Fig. \ref{fig:solution_paths} present the solution paths for the estimated values of $ \omega $ against $ \lambda $ using the squared $\ell_{2}$ norm, LASSO ($\ell_{1}$ norm), and SCAD penalties in a toy example \cite{Ma2017A}. In this example, we consider a simple linear regression problem with 50 devices divided into 2 clusters. It can be observed that for the SCAD penalty, the estimated values of each device converge to two different values around $ -1 $ and $ 1 $ which are the true values of the two groups when $ \lambda $ reaches a certain value (around 0.4). However, for the $\ell_{1}$ norm penalty, the estimated values $\omega$ converge to a common value quickly as $ \lambda $ increases to $ 0.08 $. For the squared $\ell_{2}$ norm penalty ($ \|\cdot\|_2^2 $), the estimated values of different devices gradually approach each other but do not ultimately converge to a common value.  Neither squared $\ell_{2}$ norm nor $\ell_{1}$ norm penalties can identify the clusters for any $ \lambda $.

\begin{figure}[h]
	\centering
	\includegraphics[width=0.8\columnwidth]{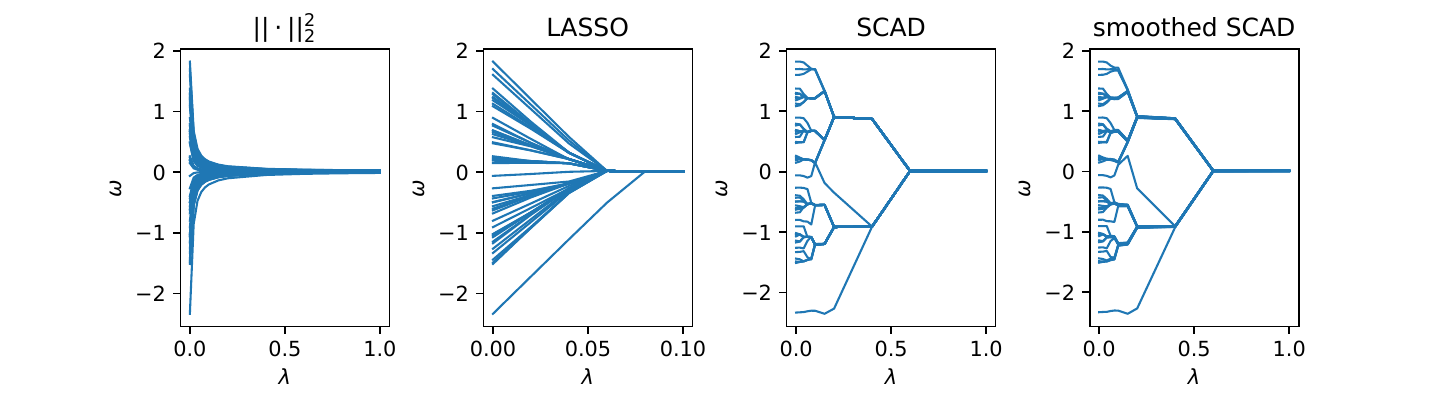}
	\caption{Solution paths for the estimated values of each device against $ \lambda $ by using the squared $\ell_{2}$ norm, LASSO ($\ell_{1}$ norm), SCAD, and smoothed SCAD penalties, respectively.}
	\label{fig:solution_paths}
\end{figure}

\noindent\textbf{Smoothing technique for the SCAD penalty.}
The non-differentiability of the SCAD penalty poses challenges in both implementation and theoretical analysis when employing gradient descent algorithms. To overcome the computational and theoretical challenges, we approximate the non-differentiable SCAD penalty $P_a(t,\lambda)$ by the following continuously differentiable function, called the smoothed SCAD penalty: 
\begin{equation}
\label{eq:sSCAD}
\tilde{P}_{a}(t,\lambda) = \left(\frac{\lambda}{2\xi}t^{2} + \frac{\xi\lambda}{2}\right)I_{\{|t|\leq \xi\}}(t)+P_a(t,\lambda)I_{\{|t|>\xi\}}(t), 
\end{equation}
where $0 < \xi<\lambda$ and $ I_{C}(t) $ is the indicator function of the set $ C $. The last subgraph of Fig. 1 shows the solution paths for the estimated values against $ \lambda $ by using the smoothed SCAD penalty. Because we only approximate the SCAD function around the nondifferential point (i.e., $ 0 $), using the smoothed SCAD function does not affect the clustering results. When $ \lambda $ reaches a certain value, the smoothed SCAD penalty term still identifies the clusters correctly. The properties of $\tilde{P}_a(t,\lambda)$ are summarized in the following proposition.

\begin{proposition}\label{pro:sSCAD}
	The function $\tilde{P}_{a}(t,\lambda)$ is continuously differentiable at any $t\in \mathbb{R}$. Its derivative $\frac{\partial}{\partial t} \tilde{P}_{a}(t,\lambda)$ is Lipschitz continuous with Lipschitz constant $L_{\tilde{g}}=\max\{\frac{\lambda}{\xi},\frac{1}{a-1}\}$. Moreover, 
	$\begin{small}
	P_a(t,\lambda)\leq \tilde{P}_{a}(t,\lambda)\leq P_a(t,\lambda)+\frac{\xi\lambda}{2}, 
	\end{small}$
	for any $t\in \mathbb{R}$, $\lambda>0$, and $0<\xi<\lambda$. 
\end{proposition}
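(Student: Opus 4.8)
The plan is to exploit the fact that both $P_a(\cdot,\lambda)$ and $\tilde{P}_a(\cdot,\lambda)$ are even functions of $t$, so it suffices to analyze $t\ge 0$. On the nonnegative half-line, since $\xi<\lambda$, the smoothed penalty has the explicit four-piece form: $\tilde{P}_a(t,\lambda)$ equals $\frac{\lambda}{2\xi}t^2+\frac{\xi\lambda}{2}$ on $[0,\xi]$, equals $\lambda t$ on $(\xi,\lambda]$, equals $\frac{a\lambda t-0.5(t^2+\lambda^2)}{a-1}$ on $(\lambda,a\lambda]$, and equals $\frac{\lambda^2(a+1)}{2}$ on $(a\lambda,\infty)$. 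The condition $\xi<\lambda$ is exactly what guarantees that the quadratic patch meets the \emph{linear} branch of SCAD rather than its quadratic branch, so this piecewise description is the one to work with.

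For continuous differentiability, I would first note that $\tilde{P}_a$ is smooth on the interior of each piece, so the only thing to check is agreement of one-sided derivatives at the interior breakpoints $t=\xi,\lambda,a\lambda$ (and at $t=0$ by evenness). Computing: at $t=\xi$ the left value is $\frac{\lambda}{\xi}\cdot\xi=\lambda$ and the right value is $\lambda$; at $t=\lambda$ the left value is $\lambda$ and the right value is $\frac{a\lambda-\lambda}{a-1}=\lambda$; at $t=a\lambda$ both sides give $0$; and at $t=0$ the right derivative is $0$, matching the left derivative forced to be $0$ by evenness. Hence $\tfrac{\partial}{\partial t}\tilde{P}_a(\cdot,\lambda)$ exists and is continuous on all of $\mathbb{R}$.

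For the Lipschitz estimate, differentiate once more on each open piece to obtain $\tfrac{\partial^2}{\partial t^2}\tilde{P}_a(t,\lambda)=\frac{\lambda}{\xi}$ on $(0,\xi)$, $=0$ on $(\xi,\lambda)$, $=-\frac{1}{a-1}$ on $(\lambda,a\lambda)$, and $=0$ on $(a\lambda,\infty)$, with the mirror image on the negative side. Thus the magnitude of the second derivative is at most $\max\{\frac{\lambda}{\xi},\frac{1}{a-1}\}$ wherever it exists. Since $\tfrac{\partial}{\partial t}\tilde{P}_a(\cdot,\lambda)$ is continuous and piecewise affine, hence absolutely continuous, the fundamental theorem of calculus yields, for any $t_1<t_2$, that the derivative increment equals $\int_{t_1}^{t_2}\tfrac{\partial^2}{\partial s^2}\tilde{P}_a(s,\lambda)\,ds$, giving the bound $L_{\tilde g}=\max\{\frac{\lambda}{\xi},\frac{1}{a-1}\}$. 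The one subtlety worth stating is that the Lipschitz constant of a continuous, piecewise-affine map is exactly the largest slope magnitude among its pieces, with no extra contribution at the junctions precisely because the derivative is continuous there.

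Finally, for the two-sided bound: when $|t|>\xi$ we have $\tilde{P}_a(t,\lambda)=P_a(t,\lambda)$, so both inequalities are immediate, the left with equality and the right because $\frac{\xi\lambda}{2}>0$. When $|t|\le\xi$ we also have $|t|<\lambda$, so $P_a(t,\lambda)=\lambda|t|$ and $\tilde{P}_a(t,\lambda)=\frac{\lambda}{2\xi}t^2+\frac{\xi\lambda}{2}$; multiplying through by $2\xi/\lambda$, the left inequality $\lambda|t|\le\frac{\lambda}{2\xi}t^2+\frac{\xi\lambda}{2}$ is equivalent to $(|t|-\xi)^2\ge 0$, and the right inequality $\frac{\lambda}{2\xi}t^2+\frac{\xi\lambda}{2}\le\lambda|t|+\frac{\xi\lambda}{2}$ reduces to $|t|(|t|-2\xi)\le 0$, which holds since $|t|\le\xi<2\xi$. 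I do not expect a genuine obstacle anywhere; the only place to be careful is the bookkeeping of the piecewise definition at the new breakpoint $t=\xi$ introduced by the smoothing, and the repeated use of $\xi<\lambda$ there.
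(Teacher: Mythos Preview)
Your argument is correct in all three parts: the matching of one-sided derivatives at $t=0,\xi,\lambda,a\lambda$, the piecewise second-derivative bound yielding $L_{\tilde g}=\max\{\lambda/\xi,1/(a-1)\}$, and the two-sided sandwich reduced to $(|t|-\xi)^2\ge 0$ and $|t|(|t|-2\xi)\le 0$ on $[0,\xi]$. The paper states Proposition~\ref{pro:sSCAD} without proof, so there is no argument to compare against; your direct piecewise verification is exactly the natural route and would serve as the missing proof.
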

Proposition \ref{pro:sSCAD} shows that $\tilde{P}_{a}(t,\lambda)\to P_a(t,\lambda)$ as $\xi\to 0^+$. This justifies the use of $ \tilde{P}_{a}(t,\lambda) $ as a surrogate of the original SCAD penalty. Our experimental results analyze the effect of different choices of $ \xi $ (see \ref{app:E.2.6} in Appendix E.) and show that
the parameter $ \xi $ will not affect the performance as long as it is small. Let $\tilde{g}(t,\lambda) = \tilde{P}_{a}(t,\lambda)$ and employing the smoothed SCAD penalty $\tilde{g}(t,\lambda)$, we obtain an approximate objective function $\tilde{F}(\mathbf \omega)$ of the original objective function $F(\mathbf \omega)$ in \eqref{eq:obj}. In this manner, our problem is transformed into solving the following optimization problem:
\begin{equation}
\label{eq:sobj}
\min_{\mathbf \omega} \{\tilde{F}(\mathbf \omega)= \sum_{i=1}^m f_i(\mathbf  \omega_i)+\frac{1}{2m}\sum_{i=1}^m \sum_{j=1}^m \tilde{g}(\|\mathbf \omega_i-\mathbf \omega_j\|,\lambda)\}.
\end{equation}
To simplify notations, we omit $\lambda$ in $g(\cdot,\lambda)$ and $\tilde{g}(\cdot,\lambda)$ in the following and will specify it if necessary.


\section{Fusion penalized federated clustering}
Because the penalty function ${g}(\|\omega_i-\omega_j\|)$ (and $\tilde{g}(\|\omega_i-\omega_j\|)$) is non-separable in $\omega_i$, a natural idea is to use ADMM \cite{Boyd2011, Han2022, Bai2022} to recast the optimization problem into separable pieces. Network Lasso \cite{Hallac2015} provides an ADMM algorithm in a distributed manner. In this section, we first present a brief overview of the Network Lasso algorithm and discuss the potential drawbacks of this algorithm in FL. Then, in Section \ref{subsec:FPFC}, we propose a novel distributed optimization algorithm based on ADMM, {FPFC}, to address the clustered FL problem.   
\subsection{Network Lasso} 
\label{sec:NL}
Network Lasso (NetLasso) generalizes the group lasso to a network setting that allows simultaneous clustering and optimization on networks. In fact, our objective function in \eqref{eq:obj} resembles that of NetLasso under a fully connected network, where any two devices are connected by an edge. An ADMM algorithm has been proposed for the NetLasso problem. With ADMM, each individual node solves its own private optimization problem, broadcasts the solution to its neighbors, and repeats the process until the algorithm converges. 

Despite satisfactory numerical performances, the ADMM algorithm of NetLasso has several key drawbacks such that it cannot be directly applied to the FL setup. First, at the local update step, each node (device) requires the model parameters of all its neighbors. In the setup of FL with a hub-and-spoke topology, the server must broadcast all the model parameters of $m$ devices to each device, which will incur heavy communication costs, and more importantly, may lead to private information leaks due to the exposure of model parameters of other devices. Second, the minimization problem must be solved exactly at each node. This is impractical in FL because some devices may have low computational power which makes it difficult to ensure that each minimization step is conducted exactly, especially when $f(\cdot)$ is a complex model, such as a neural network. Third, NetLassso requires all devices to update at each communication round, making it less practical and applicable to FL due to the constraint on communication in federated networks. In addition,  NetLasso only considers the convex case. Although it can be extended to nonconvex cases, convergence analysis for the nonconvex losses has not been established.  

\subsection{Fusion penalized federated clustering}
\label{subsec:FPFC}
To address the aforementioned concerns, we propose a novel distributed optimization method called {FPFC} to cluster devices in the setup of FL. First, we reformulate \eqref{eq:sobj} into a constrained optimization problem by introducing new parameters $\theta_{ij}\equiv \omega_i-\omega_j$ \eqref{eq:const}. Next, we apply a Douglas-Rachford (DR) splitting strategy employed by ADMM to minimize the augmented Lagrangian $ \tilde{\mathcal{L}}_{\rho} $\eqref{eq:lag}. Finally, we revise the $\omega$-update step by utilizing an inexact randomized-block strategy to reduce the communication cost and preserve privacy. The algorithm is presented in Algorithm \ref{alg:1} and its derivation is given in Appendix A.

\begin{algorithm}[htb]
	\caption{Fusion Penalized Federated Clustering (FPFC)}
	\label{alg:1}
	\begin{algorithmic}
		\STATE {\bfseries Input:} number of devices $m$; number of communication rounds $K$; number of local gradient epochs $T_i$, $i\in [m]$; stepsize $\alpha$; parameters of $\tilde{g}(\cdot)$: $\xi$, $\lambda$, $a$; and $\rho$.
		\STATE {\bfseries Output:} $ \{\bm\omega_{i}^{K} \}_{i \in [m]} $.
		\STATE Initialize each device $i\in [m]$ with $\bm\omega_1^0=\ldots=\bm\omega_m^0$.  
		Initialize the server with $\bm\zeta_i^0=\bm\omega_i^0$ for $i\in[m]$, $\bm\theta_{ij}^0=\bm 0$ and $v_{ij}^0=\bm 0$ for $i,j\in [m]$. 
		\FOR{$k=0$ {\bfseries to} $K-1$}
		\STATE 1: \textbf{[Active devices]} Server randomly chooses a subset of devices $\mathcal{A}_{k}$.
		\STATE 2: \textbf{[Communication]} Server downloads $ \bm{\zeta}_{i}^{k}$ to each device $i\in \mathcal{A}_k$.
		\STATE 3: \textbf{[Local update]} For $ i \in \mathcal{A}_{k}$: 
		\item \quad
		$\bm\omega_i^{k,0}= \bm\omega_i^{k}$,
		\FOR{$ t = 0$ {\bfseries to} $T_i-1 $}
		\item
		\vspace{-4mm}
		\begin{equation}
		\label{eq:local update}
		\bm{\omega}_{i}^{k,t+1} =  \bm{\omega}_{i}^{k,t} - \alpha[\nabla f_i(\bm{\omega}_{i}^{k,t})+ \rho( \bm{\omega}_{i}^{k,t} - \bm{\zeta}_{i}^{k})]
		\end{equation}
		\ENDFOR
		\item \quad
		$\bm\omega_i^{k+1} = \bm\omega_i^{k,T_i}.$
		\STATE 4: \textbf{[Communication]} Each device $i\in \mathcal{A}_k$ uploads $ \bm{\omega}_{i}^{k+1} $ to the server.
		\STATE 5: \textbf{[Server update]} For $ i,j \in \mathcal{A}_{k} $ ($ i<j $):
		\item Server computes $\bm{\delta}_{ij}^{k+1} = \bm{\omega}_{i}^{k+1} - \bm{\omega}_{j}^{k+1} +\frac{v_{ij}^{k}}{\rho}$ and then updates
		\item 
		\begin{equation} \label{eq:theta update}
		\hspace{-9mm}
		\bm{\theta}_{ij}^{k+1} = \begin{cases} 
		\frac{\xi \rho}{\lambda + \xi\rho} \bm{\delta}_{ij}^{k+1}, &\|\bm{\delta}^{k+1}_{ij}\|  \leq \xi + \frac{\lambda}{\rho} \\
		(1 - \frac{\lambda}{\rho \|\bm{\delta}_{ij}^{k+1}\|}) \bm{\delta}_{ij}^{k+1}, & \xi + \frac{\lambda}{\rho} \leq  \|\bm{\delta}_{ij}^{k+1}\| \leq \lambda + \frac{\lambda}{\rho} \\  
		\frac{\max\{0,1 - \frac{a \lambda}{(a-1)\rho \|\bm{\delta}_{ij}^{k+1}\|}\}}{1-1/[(a-1)\rho]}\bm{\delta}_{ij}^{k+1}, & \lambda + \frac{\lambda}{\rho} < \|\bm{\delta}_{ij}^{k+1}\| \leq  a\lambda \\ \bm{\delta}_{ij}^{k+1}, & \|\bm{\delta}_{ij}^{k+1}\| > a \lambda. \end{cases}
		\end{equation}
		\item  $v_{ij}^{k+1} = v_{ij}^{k} + \rho(\bm{\omega}_{i}^{k+1} - \bm{\omega}_{j}^{k+1} - \bm{\theta}_{ij}^{k+1})$. 
		\item  $ \bm{\theta}_{ji}^{k+1} = -\bm{\theta}_{ij}^{k+1} $, \ $ v_{ji}^{k+1} =- v_{ij}^{k+1} $.
		\item For $ i \notin \mathcal{A}_{k}  $ and $ j \notin \mathcal{A}_{k} $: $ \bm{\theta}_{ij}^{k+1} = \bm{\theta}_{ij}^{k}$, $ v_{ij}^{k+1} =v_{ij}^k $.
		\item For $ i \in [m] $: $\bm\zeta_i^{k+1}=\frac{1}{m}\sum_{j=1}^m(\bm\omega_j^{k+1}+\bm\theta_{ij}^{k+1}-\frac{v_{ij}^{k+1}}{\rho})$.
		\ENDFOR	
	\end{algorithmic}
\end{algorithm}

\begin{equation}
\label{eq:const}
\begin{aligned}
&\min_{\bm\omega,\bm\theta} \sum_{i=1}^m f_i(\bm\omega_i)+\frac{1}{2m}\sum_{i=1}^m\sum_{j=1}^m \tilde{g}(\|\bm\theta_{ij}\| ), \\
&\ \text{s.t.} \ \bm\omega_i-\bm\omega_j = \bm \theta_{ij},\ i,j\in[m],
\end{aligned}
\end{equation}
where $\theta=\{\theta_{ij}^\top,i,j\in[m]\}^\top$. 
\begin{equation}
\begin{aligned}
\label{eq:lag}
\tilde{\mathcal{L}}_{\rho}(\bm\omega,\bm{\theta}, v)&= \sum_{i=1}^m f_i(\bm\omega_i)+ \frac{1}{2m}\sum_{i=1}^m\sum_{j=1}^m \big[\tilde{g}(\|\theta_{ij}\|) +\langle v_{ij},\bm\omega_i-\bm\omega_j-\theta_{ij}\rangle +\frac{\rho}{2}\|\bm\omega_i-\bm\omega_j-\theta_{ij}\|^2\big],
\end{aligned}
\end{equation}

Similar to the standard ADMM, FPFC updates three sequences $\{\omega_i^k\}$, $\{\theta_{ij}^k\}$, and $\{v_{ij}^k\}$, where $\{v_{ij}^k\}$ are the dual variables. The differences between FPFC and ADMM lie in the $\omega$-update step. First, instead of updating $\omega_i$ in a joint manner, FPFC updates them in a distributed manner. Specifically, we divide the minimization over $\bm\omega$ into a series of subproblems solved by different devices simultaneously:
\begin{equation}
\label{eq:w1}
\omega_i^{k+1}\approx \arg\min\limits_{{\omega_i}} f_i(\omega_i)+\frac{\rho}{2m}\sum_{j=1}^m\|\omega_i-\omega_j^{k}-\theta^k_{ij}+\frac{v_{ij}^k}{\rho}\|^2. 
\end{equation}
Second, as defined in Definition \ref{def:bounded relative error}, the minimization is inexactly solved by running $T_i$ epochs of local (stochastic) gradient descent steps \eqref{eq:local update}. Because \eqref{eq:local update} only involves $\bm\omega_i^k$, $\nabla f_i(\bm\omega_i^k)$ and $\bm\zeta_i^k$, where $\bm\omega_i^k$ is stored in the $i$-th device, $\nabla f_i(\bm\omega_i^k)$ is determined by its private training data $\mathcal{D}_i$, and $\bm\zeta_i^k$ can be requested from the server, the $i$-th device can compute $\bm \omega_i^{k+1}$ locally without exposing its private training data $\mathcal{D}_i$ to any other devices or server. Furthermore, because $\bm{\zeta}_{i}^{k} $ is an average of $\{\bm\omega^k_j,j\in [m]\}$, $\{\bm\theta^k_{ij},j\in [m]\}$, and $\{v^k_{ij},j\in [m]\}$, the $i$-th device can not obtain the model parameters or the private data of any other devices, thus preserving data privacy. Note that transferring $\bm{\zeta}_{i}^{k} $ may also leak the personal information, but we do not
consider this "advanced" privacy in this work. More importantly, different from the exact $\bm\omega_i$-minimization, the central server only needs to send $\bm{\zeta}_{i}^{k}$ to the $i$-th device, instead of broadcasting all parameters, leading to a substantial reduction in communication cost. Lastly, instead of performing updates for all devices $i\in[m]$, we adopt a randomized-block strategy, where only a subset of devices $\mathcal{A}_k\subseteq [m]$ perform local updates and send their model parameters to the server. The overview of FPFC framework is shown in Fig \ref{fig:FPFC}.

FPFC is different from existing proximal gradient-based methods, such as {FedAMP} \cite{Huang2021}, because we rely on a DR splitting scheme and can cluster the devices. Here, three iterates $\bm\omega_i^k$, $\bm\theta_{ij}^k$, and $v_{ij}^k$ are updated sequentially, making it challenging to analyze convergence. Moreover, as existing clustered FL methods, e.g., {IFCA} \cite{Ghosh2020}, in each iteration, only a subset of devices participate in the training process, get the updated $\bm\omega_i$ values, and send them to the server.

\begin{figure*}
	\centering
	\subfigure[]{
		\includegraphics[width=0.3\columnwidth]{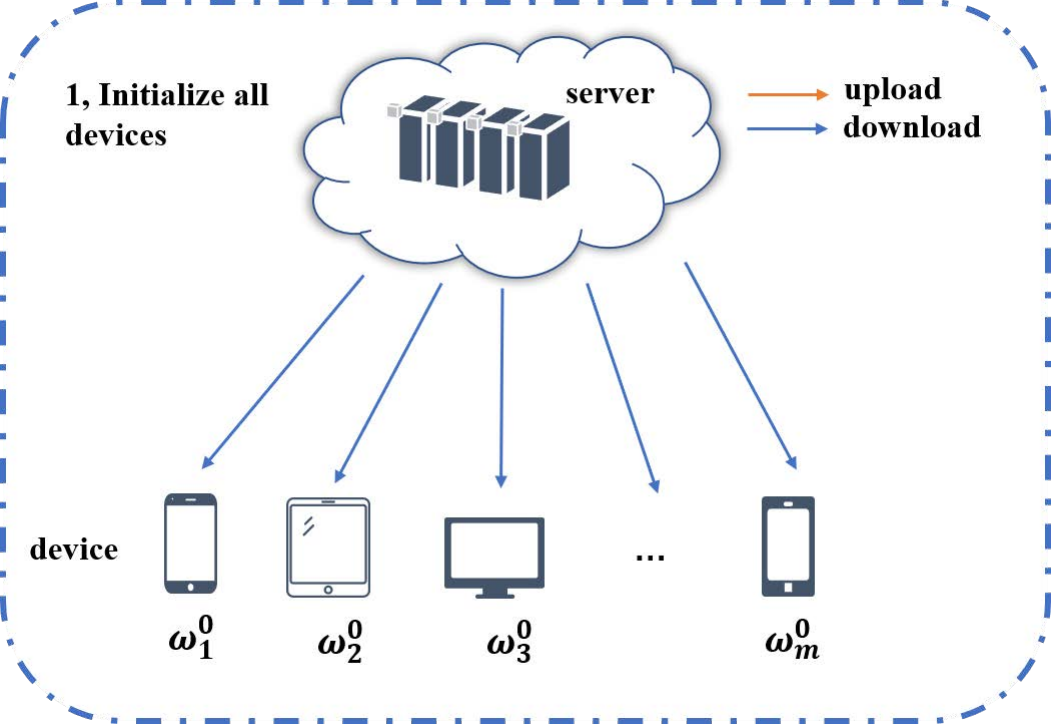}}
	\subfigure[]{
		\includegraphics[width=0.3\columnwidth]{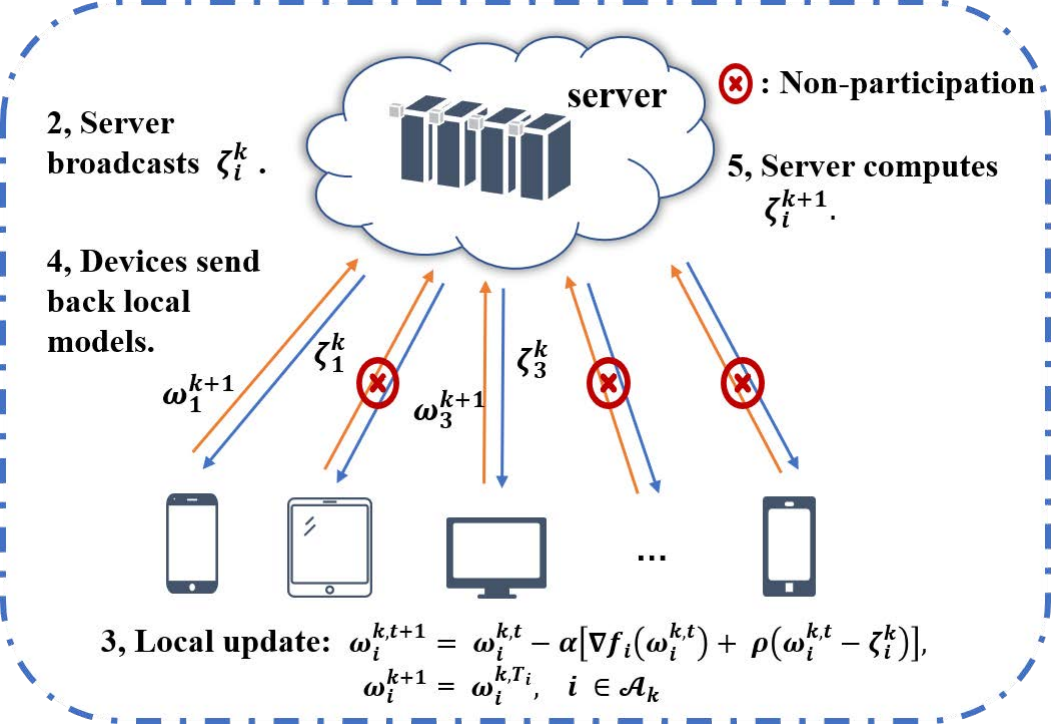}}
	\subfigure[]{
		\includegraphics[width=0.3\columnwidth]{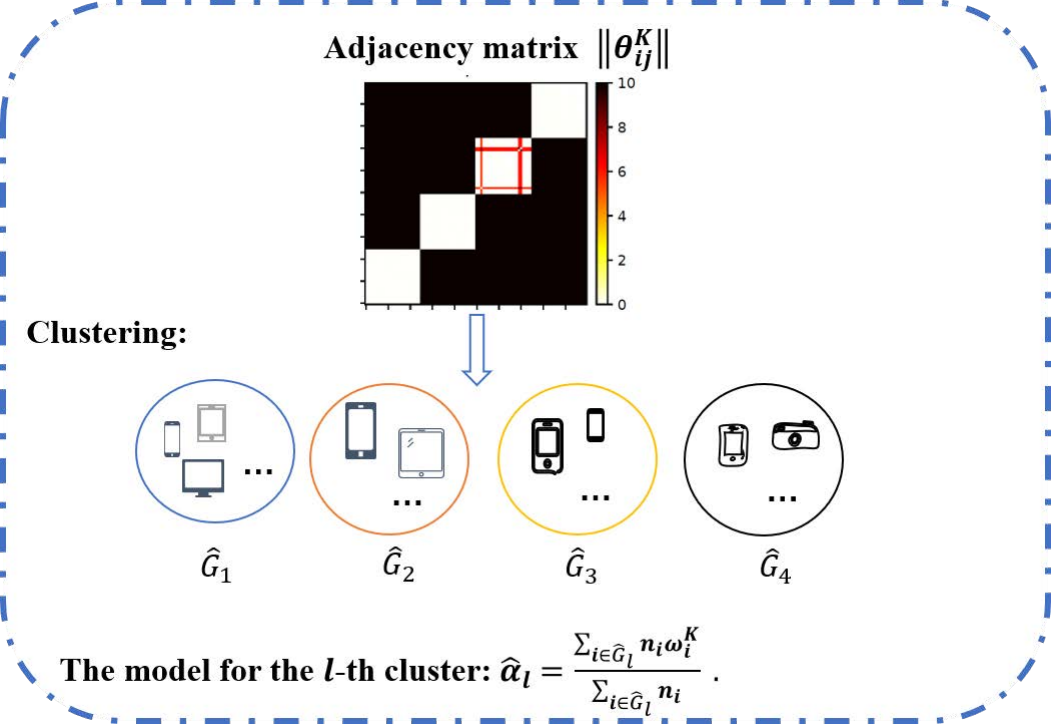}}
	\caption{The framework of FPFC. (a) Initialize each device $i\in [m]$ with $\bm\omega_1^0=\ldots=\bm\omega_m^0$. (b) The server randomly selects a subset of devices $\mathcal{A}_{k}$ and then sends $ \bm{\zeta}_{i}^{k}$ to each device $i\in \mathcal{A}_k$. The devices update their models using their local data and send back the updated models to the server. The server updates $ \theta_{ij}^{k+1} , v_{ij}^{k+1}$ and computes $ \zeta_{i}^{k+1} $. (c) Devices are clustered according to $\|\bm\theta_{ij}^{K}\|$.
	}
	\label{fig:FPFC}
\end{figure*}

\begin{remark}({\bf Computational complexity of the server})
	It is usually assumed that the server has far greater computational power than devices. Note that the server only needs to conduct simple assignment operations in FPFC and only a subset of devices participate in training at each communication round. For all $ i \notin \mathcal{A}_{k} $ and $ j \notin  \mathcal{A}_{k}$, $ \theta_{ij}^{k+1} = \theta_{ij}^{k}, v_{ij}^{k+1} = v_{ij}^{k} $. Together with the symmetry of $ \theta $ and $ v $, the server only needs to conduct  $ \mathcal{O}(\tau(2-\tau)m^2) $ assignment operations at each round, where $ \tau $ is the proportion of participating devices, e.g., $\tau=30\%$. The computation cost of {FPFC} is significantly smaller than that of {CFL}, which is $\mathcal{O}(m^3)$.
\end{remark}

\begin{remark}
	({\bf Clustering in practice})
	As there is no available prior knowledge of the cluster structure, clustering is performed only after the algorithm converges. For the SCAD penalty, $\bm\theta_{ij}^{K}$ can be zero if devices $i$ and $j$ belong to the same cluster. Because we adopt the smoothed SCAD penalty, $\bm\theta_{ij}^{K}$ can only be approximately zero but not exactly zero. In practice, devices $i$ and $j$ are assigned to the same cluster if $\|\bm{\theta}_{ij}^{K}\| \leq \nu$, where $\nu$ represents a threshold. In our experiments, we observed that the results are not very sensitive to $ \nu $, and we can obtain satisfactory results by setting $\nu \in [\xi,0.5]$. As a result, we obtain $\hat{L}$ clusters $\hat{G}_1,\ldots,\hat{G}_{\hat{L}}$, and $\bm{\hat{\alpha}}_l=\frac{\sum_{i\in \hat{G}_l} n_i\bm{{\omega}}^{K}_i}{\sum_{i\in \hat{G}_l} n_i}$ ($ l $ in $[\hat{L}] $).        
\end{remark}
\begin{remark}
	({\bf Choices of initial values})
	In Algorithm 1, the initial values $\bm\omega_i^0$($i\in [m]$) are generated randomly. Our experimental results (see \ref{app:E.2.2} in Appendix E.) show that the final results are very robust to the initial values although the objective function is nonconvex. 
\end{remark}

\subsection{Regularization parameter tuning scheme}
\label{sec:warmup}
The regularization parameter $\lambda$ is a key parameter of {FPFC}, it controls clustering and in turn prediction performance. The popular cross-validation method for parameter tuning divides the data on each device into training, testing, and validation sets, and then chooses the parameter that performs the best on the validation set. Usually, we are interested in training models not just for a single value of $ \lambda $, but for a range of values $ \{\lambda_{1},...,\lambda_{S}\} $. However, it is infeasible to train all regularization parameters in parallel, making it difficult to tune parameters in FL. Although hyperparameter selection is a vital topic, most existing works only tune parameters from a range of values sequentially, which is not plausible due to the large communication cost \cite{Li2018, Deng2020, Huang2021}. We informally have not found relevant discussions about parameter tuning in FL. In this section, we provide a practical guide to tuning the regularization parameter $ \lambda $ in FL.

Breheny and Huang \cite{Breheny2011} argued that the estimated coefficients vary continuously with $ \lambda $ when the objective function is strictly convex. They constructed a decreasing sequence of values for $ \lambda $ and used the estimated coefficients based on the previous value of $ \lambda $ as the initial values for the current $ \lambda $. Inspired by this heuristic method, we present a regularization parameter tuning strategy. Specifically, given a sequence of $ \lambda $: $ \{\lambda_{1},...,\lambda_{S}\}(\lambda_{1} <\lambda_{2}< .. < \lambda_{S}) $, we first run {FPFC} algorithm with the smallest parameter $ \lambda_{1} $, and then consecutively increase $ \lambda $ to the next value when certain criteria are met, such as constraints on rounds of communication or the change of accuracy. The model parameters returned by training with $ \lambda_{i-1} $ are used as the initial values for the model with $ \lambda_i $. We refer to this process as the \emph{warmup} strategy for parameter tuning.

Intuitively, a small increase of $ \lambda $ will only cause a small change in the local minimizers of $ F(\omega) $, and the optimized model parameters with the tuning parameter $ \lambda_{i-1} $ will serve as a reasonably good initial value for the model with $ \lambda_i $. Therefore, the warmup strategy makes the algorithm converge very fast. During the process, we always keep the model which has the best performance on validation sets. As shown in our experiments (see Fig. \ref{fig:warmup_sepetate}), the validation accuracy will usually increase as $ \lambda $ increases, particularly when $ \lambda $ is relatively small. However, beyond a certain level, further increments in $ \lambda $ result in a decrease in the validation accuracy. This occurs because a large $ \lambda $ will aggregate all devices into a single cluster as illustrated in Fig.\ref{fig:solution_paths} and destroy the underlying cluster structure. Finally, we return to train the best model obtained with $ \lambda_{s} $ until the algorithm converges. 


For the traditional cross-validation method, the initial values corresponding to each  $ \lambda_i $ are randomly generated or prespecified, thus it may be time-consuming to approach the minima point for each $\lambda_{i} $. In contrast, by adopting the warmup strategy, only the initial values of the first parameter $ \lambda_{1} $ are randomly generated, and then the initial values corresponding to the following regularization parameters are based on the model obtained from the previous parameters. Because the initial values are close to the minimum values, the required communication rounds for each following parameter can be reduced significantly. It also allows a substantial reduction in running time without parallel tuning. In addition, we may acquire better performance with good initialization for nonconvex losses, see the results in section \ref{sec:sim_warmup}.

\section{Theoretical guarantees}
In this section, we first study the convergence properties of {FPFC} in the general setting of nonconvex loss functions. Then, we characterize the statistical convergence rate of the proposed penalization framework under a linear model with squared loss. All technical details and proofs are deferred to the appendix due to space limitations. We first make the following assumptions.  

\begin{assumption}{($L_f$-smoothness)}
	\label{as:smooth1}
	There exists a Lipschitz constant $L_f>0$ such that 
	\begin{equation*}
	\|\nabla f_i(x)-\nabla f_i(y)\|\leq L_f\|x - y\|, \forall x, y \in \mathbb{R}^d.
	\end{equation*}
\end{assumption} 
\begin{assumption}{(Boundedness)}
	\label{as:bound}
	The optimal value $ F^{*} = \inf_{\bm \omega \in \mathbb{R}^{d}} F(\omega) > -\infty$.
\end{assumption}

Assumption \ref{as:smooth1} implies that all functions $f_i(\cdot)$ ($i\in [m]$) are continuously differentiable. Assumption \ref{as:smooth1} and \ref{as:bound} are standard in nonconvex optimization, and have been widely used in FL \cite{Li2018, Quoc2021, Huang2021}. 

Our convergence analysis allows only a subset of devices to update parameters at each communication round. Similar to the work in \cite{Quoc2021}, we define a proper sampling scheme $ \hat{\mathcal{A}} $ of $ [m] $, which is a random set-valued mapping with values in $ 2^{[m]} $, and $ \mathcal{A}_{k} $ is an i.i.d. realization of $ \hat{\mathcal{A}} $. We make the following assumption about the sampling scheme $ \hat{\mathcal{A}} $.
\begin{assumption}\label{as:pi}
	For each device $ i \in [m] $, there exist $ p_{i}>0 $ such that $ P(i \in \hat{\mathcal{A}}) = p_{i}>0 $.
\end{assumption}
According to Assumption \ref{as:pi}, we have $ p_{i} = \sum_{\mathcal{A}: i\in \mathcal{A}}P(\mathcal{A})$. Define $ \hat{p} = \min_{i \in [m]} \{p_{i}\} $. This assumption suggests that every device has a non-zero probability of participating in the training process.

One of the improvements of {FPFC} over standard ADMM is that all subproblems in (\ref{eq:w1}) can be solved inexactly, which is more realistic in FL. Specifically, we define the $\epsilon$-inexact solution of $\bm\omega_i$-minimization for the $i$-th device at the $k$-th iteration below.
\begin{definition}{\rm{($\epsilon$-inexact solution)}}\label{def:bounded relative error} 
	For the function $h_i(\bm\omega_i)=f_i(\bm\omega_i)+\frac{\rho}{2m}\sum_{j=1}^m \|\bm\omega_i-\bm\omega_j^k-\bm\theta_{ij}^k+\frac{v_{ij}^k}{\rho}\|^2$, we say that $ \bm{\omega}_{i}^{k+1}$ is the $\epsilon$-inexact solution of $\min_{\bm\omega_i}h_i(\bm \omega_i)$ if there is a constant $ \epsilon\in [0,1]$ such that 
	\begin{equation*}
	\|\bm{\omega}_{i}^{k+1} - \hat{\bm{\omega}}_{i}\| \leq \epsilon \|\bm{\omega}_{i}^{k+1} - \bm{\omega}_{i}^{k} \|,
	\end{equation*} 
	where $ \hat{\bm{\omega}}_{i}$ is the exact solution of $\min_{\bm\omega_i}h_i(\bm\omega_i)$.
\end{definition} 
Definition \ref{def:bounded relative error} measures the inexactness of the solution by a relative error $\epsilon$, and such idea concept has been widely used in the literature \cite{Quoc2021, Liu2021}. 
Because the objective function in \eqref{eq:obj} is nonconvex, we only expect to find a stationary point $\omega^*$ of \eqref{eq:obj}. Because \eqref{eq:obj} is equivalent to the constrained problem
\begin{equation}
\label{eq:const1}
\begin{aligned}
&\min_{\bm\omega,\bm\theta} \sum_{i=1}^m f_i(\bm\omega_i)+\frac{1}{2m}\sum_{i=1}^m\sum_{j=1}^m {g}(\|\bm\theta_{ij}\| ), \\
&\ \text{s.t.}\ \bm\omega_i-\bm\omega_j = \bm \theta_{ij},\ i,j\in[m],
\end{aligned}
\end{equation}
each stationary point $\omega^*$ of \eqref{eq:obj} satisfies the KKT condition of \eqref{eq:const1}, i.e., 
\begin{equation}
\label{eq:KKT}
\begin{cases} 
\nabla_{\bm\omega}\mathcal{L}_0 (\bm\omega^*,\bm\theta^*,  v^*)= 0,\\ 
0 \in \partial_{\theta} \mathcal{L}_{0}(\omega^{*}, \theta^{*}, v^{*}),\\
\bm\omega_i^*- \bm\omega_j^*-\bm\theta_{ij}^*= 0, i,j\in [m],
\end{cases} 
\end{equation}
and vice versa, where $\mathcal{L}_0$ is the Lagrange function of \eqref{eq:const1}:	
\begin{equation}
\small
\hspace{-2mm}
\begin{aligned}
\mathcal{L}_{0}(\bm\omega,\bm{\theta}, v) = \sum_{i=1}^mf(\bm{\omega}_i) + \frac{1}{2m}\sum_{i=1}^m\sum_{j=1}^m \big[g(\|\bm\theta_{ij}\|) +\langle v_{ij},\bm\omega_i-\bm\omega_j-\bm\theta_{ij}\rangle \big].
\end{aligned}
\end{equation}
Let us define the mapping  $ \mathcal{G}(\omega,\bm{\theta},v)= \bm{\theta} - \text{prox}_{\mathcal{L}_{0}}(\bm{\theta}) $. The second condition $ 0 \in \partial_{\theta} \mathcal{L}_{0}(\omega^{*}, \theta^{*}, v^{*})$ in \eqref{eq:KKT} is equal to $ \mathcal{G}(\omega^*, \bm{\theta}^{*}, v^*)  = 0 $. In practice, we often wish to find an $\varepsilon$-stationary point of \eqref{eq:obj} defined as follows. 
\begin{definition}\label{def:sta}
	$\bar{\bm\omega}$ is called an $\varepsilon$-stationary point of \eqref{eq:obj} if there exists $\bar{\bm\theta}$ and $\bar{v}$ such that 
	\begin{equation*}
	\begin{cases}
	\mathbb{E}[\|\nabla_{\bm\omega}\mathcal{L}_0 (\bar{\bm\omega},\bar{\bm\theta}, \bar{ v})\|^2]\leq \varepsilon^2, \\
	\mathbb{E}[\| \mathcal{G}(\bar{\bm\omega},\bar{\bm\theta}, \bar{ v})\|^2]\leq\varepsilon^2, \\
	\mathbb{E}[\|\{(\bar{\bm\omega}_i-\bar{\bm\omega}_j-\bar{\bm\theta}_{ij})^\top,i,j\in [m]\}^\top\|^2]\leq \varepsilon^2.
	\end{cases}
	\end{equation*}
	where the expectation is taken overall the randomness generated by the underlying algorithm.
\end{definition}

We first show that at the $k$-th iteration, $\bm\omega_i^{k+1}$ can be an $\epsilon_i$-inexact solution of $\min_{\bm\omega_i}h_i(\bm\omega_i)$ if we run gradient descent in the local update of Algorithm 1 for sufficient epochs. Proof is provided in Appendix B. 

\begin{theorem}\label{them:inexact}
	Suppose Assumption 1 holds. Assume there exists $L_->0$ such that $\nabla^2 f_i  \succeq -L_{-}\bm{I}$ with $\mu=\rho-L_->0$. For any $ i \in \mathcal{A}_{k} $, $\bm\omega_i^{k,0} =\bm\omega_i^{k}$. The stepsize $ \alpha $ satisfies $ 0 < \alpha \leq  \frac{1}{L_f+\rho+\mu}$. Then, for $ \epsilon_{i} >0 $, after $ T_i=\frac{2\log(\frac{\epsilon_{i}}{1+\epsilon_{i}})}{\log c} $ epochs of gradient descent with $ c = 1-\alpha \frac{2\mu (L_{f}+\rho)}{L_f+\rho+\mu}$, we have
	\begin{equation*}
	\|\bm{\omega}_{i}^{k+1} - \hat{\bm{\omega}}_{i}^{k+1}\| \leq \epsilon_{i} \|\bm{\omega}_{i}^{k+1} - \bm{\omega}_{i}^{k}\|,
	\end{equation*}
	where $ \hat{\bm{\omega}}_{i}^{k+1}=\arg\min_{\bm\omega_i}h_i(\bm\omega_i)$.
\end{theorem}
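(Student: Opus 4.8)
The plan is to observe that the local loop (\ref{eq:local update}) is exactly gradient descent on $h_i$, to establish that $h_i$ is strongly convex and smooth so that this gradient descent contracts linearly toward its unique minimizer $\hat{\bm\omega}_i^{k+1}$, and finally to convert the resulting bound (which is naturally phrased in terms of $\|\bm\omega_i^k-\hat{\bm\omega}_i^{k+1}\|$) into the relative-error form $\|\bm\omega_i^{k+1}-\hat{\bm\omega}_i^{k+1}\|\le\epsilon_i\|\bm\omega_i^{k+1}-\bm\omega_i^k\|$ by a triangle-inequality rearrangement. The value of $T_i$ stated in the theorem is then exactly what makes the converted bound tight.

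First I would record the relevant properties of $h_i$. The Hessian of the quadratic term $\frac{\rho}{2m}\sum_{j=1}^m\|\bm\omega_i-\bm\omega_j^k-\bm\theta_{ij}^k+v_{ij}^k/\rho\|^2$ is $\rho\bm I$, so $\nabla^2 h_i=\nabla^2 f_i+\rho\bm I$; combining Assumption \ref{as:smooth1} ($\nabla^2 f_i\preceq L_f\bm I$) with the hypothesis $\nabla^2 f_i\succeq -L_-\bm I$ gives $\mu\bm I\preceq\nabla^2 h_i\preceq(L_f+\rho)\bm I$ with $\mu=\rho-L_->0$. Thus $h_i$ is $\mu$-strongly convex and $(L_f+\rho)$-smooth and has a unique minimizer $\hat{\bm\omega}_i^{k+1}$ with $\nabla h_i(\hat{\bm\omega}_i^{k+1})=0$. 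Moreover $\nabla h_i(\bm\omega_i)=\nabla f_i(\bm\omega_i)+\rho(\bm\omega_i-\bm\zeta_i^k)$, so the update (\ref{eq:local update}) is $\bm\omega_i^{k,t+1}=\bm\omega_i^{k,t}-\alpha\nabla h_i(\bm\omega_i^{k,t})$.

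Next I would derive the one-step contraction. Writing $L:=L_f+\rho$ and expanding $\|\bm\omega_i^{k,t+1}-\hat{\bm\omega}_i^{k+1}\|^2=\|\bm\omega_i^{k,t}-\hat{\bm\omega}_i^{k+1}\|^2-2\alpha\langle\nabla h_i(\bm\omega_i^{k,t}),\bm\omega_i^{k,t}-\hat{\bm\omega}_i^{k+1}\rangle+\alpha^2\|\nabla h_i(\bm\omega_i^{k,t})\|^2$, I would apply the standard co-coercivity inequality for a $\mu$-strongly convex, $L$-smooth function, $\langle\nabla h_i(x),x-\hat{\bm\omega}_i^{k+1}\rangle\ge\frac{\mu L}{\mu+L}\|x-\hat{\bm\omega}_i^{k+1}\|^2+\frac{1}{\mu+L}\|\nabla h_i(x)\|^2$, and use the stepsize bound $\alpha\le\frac{1}{L_f+\rho+\mu}\le\frac{2}{\mu+L}$ to absorb the gradient-norm terms, which yields $\|\bm\omega_i^{k,t+1}-\hat{\bm\omega}_i^{k+1}\|^2\le c\,\|\bm\omega_i^{k,t}-\hat{\bm\omega}_i^{k+1}\|^2$ with $c=1-\frac{2\alpha\mu L}{\mu+L}$; here $c\in(0,1)$ since $2\mu L\le(\mu+L)^2$ and $\alpha\le\frac{1}{\mu+L}$. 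Iterating from $t=0$ to $T_i-1$ with $\bm\omega_i^{k,0}=\bm\omega_i^k$ and $\bm\omega_i^{k,T_i}=\bm\omega_i^{k+1}$ gives $\|\bm\omega_i^{k+1}-\hat{\bm\omega}_i^{k+1}\|\le c^{T_i/2}\|\bm\omega_i^k-\hat{\bm\omega}_i^{k+1}\|$. Then $\|\bm\omega_i^k-\hat{\bm\omega}_i^{k+1}\|\le\|\bm\omega_i^k-\bm\omega_i^{k+1}\|+\|\bm\omega_i^{k+1}-\hat{\bm\omega}_i^{k+1}\|$, and rearranging gives $\|\bm\omega_i^{k+1}-\hat{\bm\omega}_i^{k+1}\|\le\frac{c^{T_i/2}}{1-c^{T_i/2}}\|\bm\omega_i^{k+1}-\bm\omega_i^k\|$; substituting $T_i=\frac{2\log(\epsilon_i/(1+\epsilon_i))}{\log c}$ makes $c^{T_i/2}=\epsilon_i/(1+\epsilon_i)$, hence $\frac{c^{T_i/2}}{1-c^{T_i/2}}=\epsilon_i$, which is the claim.

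The main obstacle is the one-step contraction: one must handle the fact that $f_i$ itself may be nonconvex (only $\succeq-L_-\bm I$) and verify carefully that the specific stepsize window $\alpha\le 1/(L_f+\rho+\mu)$ produces precisely the rate constant $c=1-\alpha\frac{2\mu(L_f+\rho)}{L_f+\rho+\mu}$ appearing in the statement, rather than a looser factor such as $(1-\alpha\mu)^2$. The remaining steps — the Hessian bounds for $h_i$, the geometric iteration, and the triangle-inequality conversion — are routine bookkeeping.
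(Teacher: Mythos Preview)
Your proposal is correct and follows essentially the same route as the paper's proof: establish that $h_i$ is $\mu$-strongly convex and $(L_f+\rho)$-smooth, invoke the linear contraction of gradient descent to obtain $\|\bm\omega_i^{k+1}-\hat{\bm\omega}_i^{k+1}\|\le c^{T_i/2}\|\bm\omega_i^k-\hat{\bm\omega}_i^{k+1}\|$, then convert to relative-error form via the triangle inequality and solve for $T_i$. The only difference is cosmetic---you spell out the one-step contraction from the co-coercivity inequality, whereas the paper simply cites it as a standard result of Polyak.
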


To prove the convergence bounds of \eqref{eq:obj}, we first prove two descent lemmas (Lemmas \ref{lem:descent duo to w} and \ref{lem:descent of L}) regarding $ \tilde{\mathcal{L}}_{\rho} $ during the update and then show the boundedness of $ \tilde{\mathcal{L}}_{\rho} $ from below (Lemma \ref{lem:bounded}). All proofs are provided in Appendix C. 

\begin{lemma}
	\label{lem:descent duo to w}
	(Descent of $\tilde{\mathcal{L}}_{\rho}$ during $\bm\omega$ update) Suppose the assertions in Theorem \ref{them:inexact} holds. For each device $ i \in \mathcal{A}_{k} $, there exists a $\epsilon_i\in [0,1]$ such that $T_{i} = \frac{2\log(\frac{\epsilon_{i}}{1+\epsilon_{i}})}{\log c} $ ($c$ is defined in Theorem \ref{them:inexact}). Let $ (\bm{\omega}^{k}, \bm{\theta}^{k},v^{k} ) $ be the sequence generated by Algorithm 1. Then, for any $ k \in \mathbb{N} $, we have 		
	\begin{equation*}
	\begin{aligned}
	&\quad \tilde{\mathcal{L}}_{\rho}(\bm{\omega}^{k}, \bm{\theta}^{k}, v^{k}) -  \tilde{\mathcal{L}}_{\rho}(\bm{\omega}^{k+1}, \bm{\theta}^{k}, v^{k}) \geq \sum_{i\in \mathcal{A}_{k}} \frac{\rho - L_{f} - 2(c^{-\frac{T_{i}}{2}} -1)^{-1}L_{h}}{2}\|\bm{\omega}_{i}^{k} - \bm{\omega}_{i}^{k+1}\|^2,
	\end{aligned}
	\end{equation*}
	where $L_h=L_f+\rho$.
\end{lemma}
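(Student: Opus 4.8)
\textbf{Proof plan for Lemma \ref{lem:descent duo to w}.}
The goal is a sufficient-decrease estimate for the augmented Lagrangian $\tilde{\mathcal{L}}_\rho$ across one $\bm\omega$-update, summed only over the active devices $\mathcal{A}_k$. The plan is to isolate, for each active device $i\in\mathcal{A}_k$, the difference $\tilde{\mathcal{L}}_\rho(\bm\omega^k,\bm\theta^k,v^k)-\tilde{\mathcal{L}}_\rho(\bm\omega^{k+1},\bm\theta^k,v^k)$ into the sum over $i$ of the per-device change, since with $\bm\theta^k$ and $v^k$ held fixed, $\tilde{\mathcal{L}}_\rho$ decomposes as $\sum_i h_i(\bm\omega_i) + (\text{terms not involving }\bm\omega_i)$ up to the coupling in the quadratic $\|\bm\omega_i-\bm\omega_j-\theta_{ij}^k\|^2$; here one uses that inactive devices do not change, so only the cross-terms with at least one active index move. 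First I would write the per-device function $h_i$ exactly as in Definition \ref{def:bounded relative error}, observe that it is $L_h$-smooth with $L_h=L_f+\rho$ and $\mu$-strongly convex (with $\mu=\rho-L_-$), and note $\hat{\bm\omega}_i^{k+1}$ is its unique minimizer.

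The core of the argument is a standard "descent from an inexact minimizer" computation. Using $L_h$-smoothness of $h_i$ and $\nabla h_i(\hat{\bm\omega}_i^{k+1})=0$, I would write
\begin{equation*}
h_i(\bm\omega_i^k)-h_i(\bm\omega_i^{k+1}) \;\ge\; \frac{\mu}{2}\|\bm\omega_i^k-\hat{\bm\omega}_i^{k+1}\|^2 \;-\; \frac{L_h}{2}\|\bm\omega_i^{k+1}-\hat{\bm\omega}_i^{k+1}\|^2,
\end{equation*}
or a variant thereof obtained by comparing both $\bm\omega_i^k$ and $\bm\omega_i^{k+1}$ to $\hat{\bm\omega}_i^{k+1}$ via the strong-convexity lower bound and the smoothness upper bound. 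Then I would invoke Theorem \ref{them:inexact} in the form $\|\bm\omega_i^{k+1}-\hat{\bm\omega}_i^{k+1}\|\le (c^{-T_i/2}-1)^{-1}\|\bm\omega_i^{k+1}-\bm\omega_i^k\|$ (rearranged from the relative-error bound using $\epsilon_i=1/(c^{-T_i/2}-1)$, which the Remark on hyperparameters makes explicit; the condition $T_i>-2\log 2/\log c$ guarantees $\epsilon_i<1$, i.e. $(c^{-T_i/2}-1)^{-1}$ is well-defined and positive). To convert the $\|\bm\omega_i^k-\hat{\bm\omega}_i^{k+1}\|^2$ term into $\|\bm\omega_i^k-\bm\omega_i^{k+1}\|^2$ I would use the triangle inequality $\|\bm\omega_i^k-\bm\omega_i^{k+1}\|\le\|\bm\omega_i^k-\hat{\bm\omega}_i^{k+1}\|+\|\bm\omega_i^{k+1}-\hat{\bm\omega}_i^{k+1}\|$ together with the same inexactness bound, then drop lower-order negative contributions; collecting constants should yield the stated coefficient $\tfrac12\bigl(\rho-L_f-2(c^{-T_i/2}-1)^{-1}L_h\bigr)$, where the $\rho-L_f$ part is $\mu+(\rho-L_f-\mu)=\mu + 0$... more precisely it comes from combining $\mu=\rho-L_-\ge\rho-L_f$ appropriately, so I would be careful to track whether the clean $\rho-L_f$ arises from bounding $\mu$ below by $\rho-L_f$ or directly; a cleaner route is to prove the $L_h$-smoothness descent inequality $h_i(\bm\omega_i^k)-h_i(\bm\omega_i^{k+1})\ge \langle \nabla h_i(\bm\omega_i^{k+1}),\bm\omega_i^k-\bm\omega_i^{k+1}\rangle - \tfrac{L_h}{2}\|\bm\omega_i^k-\bm\omega_i^{k+1}\|^2$ and then lower-bound $\langle\nabla h_i(\bm\omega_i^{k+1}),\cdot\rangle$ via $\nabla h_i(\bm\omega_i^{k+1})$ being small (of size $L_h\|\bm\omega_i^{k+1}-\hat{\bm\omega}_i^{k+1}\|$) and Cauchy–Schwarz.

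Finally I would sum the per-device inequalities over $i\in\mathcal{A}_k$ and argue that the left side of the summed inequality is exactly $\tilde{\mathcal{L}}_\rho(\bm\omega^k,\bm\theta^k,v^k)-\tilde{\mathcal{L}}_\rho(\bm\omega^{k+1},\bm\theta^k,v^k)$: the quadratic coupling $\tfrac{\rho}{4m}\sum_{i,j}\|\bm\omega_i-\bm\omega_j-\theta^k_{ij}+v^k_{ij}/\rho\|^2$ must be reassembled from the $h_i$'s without double counting, which needs a short bookkeeping check using the symmetry $\theta^k_{ij}=-\theta^k_{ji}$, $v^k_{ij}=-v^k_{ji}$ and the fact that $\bm\omega_j$ is unchanged for inactive $j$ while for active $j$ the move is captured by that device's own $h_j$ decrease — here I would likely pass through the joint $\bm\omega$-update formulation \eqref{eq:parallel}/\eqref{eq:subiteration} to see that $\sum_i[h_i(\bm\omega_i^k)-h_i(\bm\omega_i^{k+1})]$ telescopes to the Lagrangian difference when all other blocks are frozen, or equivalently argue block-coordinate-wise. \textbf{The main obstacle} I anticipate is precisely this last bookkeeping: making rigorous that updating the active blocks of $\bm\omega$ (each by an independent inexact local solve, in parallel rather than Gauss–Seidel) produces a decrease in $\tilde{\mathcal{L}}_\rho$ that is the sum of the individual $h_i$-decreases, rather than only an inequality with cross-terms of indefinite sign — one has to exploit that the coupling is through $\|\bm\omega_i-\bm\omega_j\|^2$ with a separable-after-fixing-one-coordinate structure, or else absorb the cross terms using the per-device strong convexity margin. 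The inexact-solve estimate from Theorem \ref{them:inexact} and the smoothness/strong-convexity constants are routine once the decomposition is pinned down.
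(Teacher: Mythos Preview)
Your per-device inexactness estimate is essentially the paper's: bound $\|\nabla h_i(\bm\omega_i^{k+1})\|\le L_h\|\bm\omega_i^{k+1}-\hat{\bm\omega}_i^{k+1}\|\le (c^{-T_i/2}-1)^{-1}L_h\|\bm\omega_i^{k+1}-\bm\omega_i^k\|$ via Theorem~\ref{them:inexact}, then use Cauchy--Schwarz on $\langle\nabla h_i(\bm\omega_i^{k+1}),\bm\omega_i^k-\bm\omega_i^{k+1}\rangle$. That part is fine.

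The gap is exactly the one you flagged, and your proposed resolutions do not work. The identity $\sum_{i\in\mathcal{A}_k}[h_i(\bm\omega_i^k)-h_i(\bm\omega_i^{k+1})]=\tilde{\mathcal{L}}_\rho(\bm\omega^k,\bm\theta^k,v^k)-\tilde{\mathcal{L}}_\rho(\bm\omega^{k+1},\bm\theta^k,v^k)$ is \emph{false}: in $h_i$ the neighbors are frozen at $\bm\omega_j^k$, whereas in $\tilde{\mathcal{L}}_\rho(\bm\omega^{k+1},\cdot,\cdot)$ both endpoints of each pair move simultaneously. There is no telescoping, and ``block-coordinate-wise'' reasoning applies to Gauss--Seidel, not to this Jacobi-type parallel update. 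Using $L_h$-smoothness on $h_i$ also produces the wrong leading constant ($-\tfrac{L_h}{2}=-\tfrac{L_f+\rho}{2}$ instead of $\tfrac{\rho-L_f}{2}$), which is why you were unable to see where $\rho-L_f$ comes from.

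The paper avoids $h_i$-differences altogether. It expands $\tilde{\mathcal{L}}_\rho(\bm\omega^k,\bm\theta^k,v^k)-\tilde{\mathcal{L}}_\rho(\bm\omega^{k+1},\bm\theta^k,v^k)$ directly, collects the result as $\sum_{i\in\mathcal{A}_k} r_i^k$ with
\[
r_i^k=f_i(\bm\omega_i^k)-f_i(\bm\omega_i^{k+1})+\tfrac{1}{m}\sum_j\langle\bm\eta_{ij}^k,\bm\omega_i^k-\bm\omega_i^{k+1}\rangle+\tfrac{\rho}{2m}\sum_j\langle\bm\omega_i^{k+1}-\bm\omega_i^k,\bm\omega_j^{k+1}-\bm\omega_j^k\rangle+\tfrac{\rho}{2}\|\bm\omega_i^k-\bm\omega_i^{k+1}\|^2,
\]
applies $L_f$-smoothness to $f_i$ \emph{only} (this is where $\tfrac{\rho-L_f}{2}$ appears), and recognizes $\nabla h_i(\bm\omega_i^{k+1})=\nabla f_i(\bm\omega_i^{k+1})+\tfrac{1}{m}\sum_j\bm\eta_{ij}^k$. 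After summing over $i\in\mathcal{A}_k$, the cross-coupling term becomes $\tfrac{\rho}{2m}\bigl\|\sum_{i\in\mathcal{A}_k}(\bm\omega_i^{k+1}-\bm\omega_i^k)\bigr\|^2\ge 0$ (inactive increments vanish), which is simply discarded. No strong-convexity margin is spent on the coupling; it is a nonnegative complete square. That single observation is the missing idea in your plan.
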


\begin{lemma}\label{lem:descent of L}
	(Monotonically non-increasing of $\tilde{\mathcal{L}}_{\rho}$) Suppose that the assertions in Lemma \ref{lem:descent duo to w} hold. Let $ (\bm{\omega}^{k}, \bm{\theta}^{k}, v^{k} ) $ be the sequence generated by the $k$-th iteration of Algorithm 1. Then, we have
	\begin{equation*}
	\begin{aligned}
	\tilde{\mathcal{L}}_{\rho}(\bm{\omega}^{k}, \bm{\theta}^{k}, v^{k}) -  \tilde{\mathcal{L}}_{\rho}(\bm{\omega}^{k+1}, \bm{\theta}^{k+1}, v^{k+1}) &\geq  \sum_{i\in \mathcal{A}_{k}} \frac{\rho - L_{f} - 2(c^{-\frac{T_{i}}{2}} -1)^{-1}L_{h}}{2}\|\bm{\omega}_{i}^{k} - \bm{\omega}_{i}^{k+1}\|^2 \\
	&\quad + \frac{1}{2m}(\frac{\rho - L_{\tilde{g}}}{2} - \frac{L_{\tilde{g}}^{2}}{\rho})\sum_{i=1}^{m}\sum_{j=1}^{m}\|\bm{\theta}_{ij}^{k} - \bm{\theta}_{ij}^{k+1}\|^2.
	\end{aligned}
	\end{equation*}	
\end{lemma}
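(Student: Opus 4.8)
The plan is to track $\tilde{\mathcal{L}}_{\rho}$ through the three block updates of one round of Algorithm 1 — the inexact $\bm\omega$-update, the exact $\bm\theta$-update, and the dual $v$-update — and add up a lower bound for each. I would write the total decrease $\tilde{\mathcal{L}}_{\rho}(\bm\omega^{k},\bm\theta^{k},v^{k})-\tilde{\mathcal{L}}_{\rho}(\bm\omega^{k+1},\bm\theta^{k+1},v^{k+1})$ as the telescoping sum of $\tilde{\mathcal{L}}_{\rho}(\bm\omega^{k},\bm\theta^{k},v^{k})-\tilde{\mathcal{L}}_{\rho}(\bm\omega^{k+1},\bm\theta^{k},v^{k})$, $\tilde{\mathcal{L}}_{\rho}(\bm\omega^{k+1},\bm\theta^{k},v^{k})-\tilde{\mathcal{L}}_{\rho}(\bm\omega^{k+1},\bm\theta^{k+1},v^{k})$, and $\tilde{\mathcal{L}}_{\rho}(\bm\omega^{k+1},\bm\theta^{k+1},v^{k})-\tilde{\mathcal{L}}_{\rho}(\bm\omega^{k+1},\bm\theta^{k+1},v^{k+1})$. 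The first term is exactly what Lemma \ref{lem:descent duo to w} bounds, and it produces the $\sum_{i\in\mathcal{A}_{k}}$ contribution; inactive devices satisfy $\bm\omega_i^{k+1}=\bm\omega_i^{k}$ and drop out, which is why this sum ranges only over $\mathcal{A}_k$ while the $\bm\theta$-sum below ranges over all pairs (pairs with both endpoints inactive have $\bm\theta_{ij}^{k+1}=\bm\theta_{ij}^{k}$ and contribute zero).

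For the $\bm\theta$-step I would freeze $\bm\omega^{k+1}$ and complete the square in the quadratic-plus-linear part of $\tilde{\mathcal{L}}_{\rho}$, so that, up to a constant independent of $\bm\theta$, $\tilde{\mathcal{L}}_{\rho}(\bm\omega^{k+1},\cdot,v^{k}) = f(\bm\omega^{k+1}) + \frac{1}{2m}\sum_{i,j}\psi_{ij}(\bm\theta_{ij})$ with $\psi_{ij}(\bm\theta)=\tilde{g}(\|\bm\theta\|)+\frac{\rho}{2}\|\bm\omega_i^{k+1}-\bm\omega_j^{k+1}+v_{ij}^{k}/\rho-\bm\theta\|^2$ — precisely the objective minimized in the $\bm\theta$-update \eqref{app:theta}. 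By Proposition \ref{pro:sSCAD}, $t\mapsto\tilde{P}_a(t,\lambda)$ is even with $L_{\tilde g}$-Lipschitz derivative (and $\tilde{P}_a'(0,\lambda)=0$), so the radial map $\bm\theta\mapsto\tilde{g}(\|\bm\theta\|)$ has $L_{\tilde g}$-Lipschitz gradient and is $L_{\tilde g}$-weakly convex; since $\rho>2L_{\tilde g}$ under the hypotheses of Theorem 2, $\psi_{ij}$ is $(\rho-L_{\tilde g})$-strongly convex. As $\bm\theta_{ij}^{k+1}$ is its minimizer, strong convexity gives $\psi_{ij}(\bm\theta_{ij}^{k})-\psi_{ij}(\bm\theta_{ij}^{k+1})\geq\frac{\rho-L_{\tilde g}}{2}\|\bm\theta_{ij}^{k}-\bm\theta_{ij}^{k+1}\|^2$, hence the $\bm\theta$-step is at least $\frac{1}{2m}\cdot\frac{\rho-L_{\tilde g}}{2}\sum_{i,j}\|\bm\theta_{ij}^{k}-\bm\theta_{ij}^{k+1}\|^2$.

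For the $v$-step, substituting the dual recursion $v_{ij}^{k+1}-v_{ij}^{k}=\rho(\bm\omega_i^{k+1}-\bm\omega_j^{k+1}-\bm\theta_{ij}^{k+1})$ into the linear term of $\tilde{\mathcal{L}}_{\rho}$ gives the exact identity $\tilde{\mathcal{L}}_{\rho}(\bm\omega^{k+1},\bm\theta^{k+1},v^{k})-\tilde{\mathcal{L}}_{\rho}(\bm\omega^{k+1},\bm\theta^{k+1},v^{k+1})=-\frac{1}{2m\rho}\sum_{i,j}\|v_{ij}^{k+1}-v_{ij}^{k}\|^2$, a negative quantity that must be absorbed. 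To bound it I would use the first-order optimality of the $\bm\theta$-update: $\nabla_{\bm\theta}\tilde{g}(\|\bm\theta_{ij}^{k+1}\|)=\rho(\bm\omega_i^{k+1}-\bm\omega_j^{k+1}+v_{ij}^{k}/\rho-\bm\theta_{ij}^{k+1})=v_{ij}^{k+1}$, and likewise $\nabla_{\bm\theta}\tilde{g}(\|\bm\theta_{ij}^{k}\|)=v_{ij}^{k}$ from the previous round (with the base case $k=0$ holding since $\bm\theta_{ij}^{0}=v_{ij}^{0}=\mathbf{0}$ and $\nabla_{\bm\theta}\tilde{g}(\|\mathbf{0}\|)=\mathbf{0}$). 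The $L_{\tilde g}$-Lipschitzness of $\nabla_{\bm\theta}\tilde{g}(\|\cdot\|)$ then gives $\|v_{ij}^{k+1}-v_{ij}^{k}\|\leq L_{\tilde g}\|\bm\theta_{ij}^{k+1}-\bm\theta_{ij}^{k}\|$, so the $v$-step is at least $-\frac{L_{\tilde g}^2}{2m\rho}\sum_{i,j}\|\bm\theta_{ij}^{k+1}-\bm\theta_{ij}^{k}\|^2$. Adding the three bounds yields the $\bm\theta$-coefficient $\frac{1}{2m}(\frac{\rho-L_{\tilde g}}{2}-\frac{L_{\tilde g}^2}{\rho})$, which is exactly the claimed inequality, and which is positive since $\rho>2L_{\tilde g}$.

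The step I expect to be the main obstacle is the nonconvex-penalty bookkeeping: carefully promoting the one-dimensional statement of Proposition \ref{pro:sSCAD} to the assertion that $\bm\theta\mapsto\tilde{g}(\|\bm\theta\|)$ is continuously differentiable on all of $\mathbb{R}^{d}$ with a globally $L_{\tilde g}$-Lipschitz gradient and is $L_{\tilde g}$-weakly convex — the origin needs a short separate check, via the eigenvalues $\tilde{P}_a''(r,\lambda)$ and $\tilde{P}_a'(r,\lambda)/r$ of the radial Hessian and the fact that $\tilde{P}_a'(0,\lambda)=0$ — and then using this simultaneously to certify strong convexity of each $\bm\theta$-subproblem and to control the dual increment. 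A secondary, more routine point is checking that the $\bm\theta$- and $v$-sums over all $i,j\in[m]$ are legitimate under the convention (stated before Lemma \ref{lem:descent duo to w}) that all $\bm\theta_{ij},v_{ij}$ are formally updated each round, the pairs with both endpoints inactive contributing nothing.
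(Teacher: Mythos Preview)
Your proposal is correct and follows essentially the same route as the paper's proof: the same three-term telescoping, Lemma~\ref{lem:descent duo to w} for the $\bm\omega$-step, the dual identity $\tilde{\mathcal{L}}_\rho(\cdot,\cdot,v^{k})-\tilde{\mathcal{L}}_\rho(\cdot,\cdot,v^{k+1})=-\frac{1}{2m\rho}\sum_{i,j}\|v_{ij}^{k+1}-v_{ij}^k\|^2$, and the bound $\|v_{ij}^{k+1}-v_{ij}^k\|\leq L_{\tilde g}\|\bm\theta_{ij}^{k+1}-\bm\theta_{ij}^k\|$ obtained from the optimality relation $v_{ij}^{k+1}=\nabla\tilde{g}(\|\bm\theta_{ij}^{k+1}\|)$. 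The only cosmetic difference is in the $\bm\theta$-step: you invoke $(\rho-L_{\tilde g})$-strong convexity of $\psi_{ij}$ at its minimizer, whereas the paper arrives at the same $\frac{\rho-L_{\tilde g}}{2}$ coefficient by expanding the Lagrangian difference explicitly and applying the $L_{\tilde g}$-smoothness descent inequality to $\tilde{g}$ before simplifying with the $v$-update identity.
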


\begin{lemma}\label{lem:bounded}
	(Boundedness of $\tilde{\mathcal{L}}_{\rho}$) Let $( \bm{\omega}^{k}, \bm{\theta}^{k}, v^{k} )$ be the sequence generated by the $k$-th iteration of Algorithm 1. If $ \rho> \max\{\frac{\lambda}{\xi},\frac{1}{a-1}\} $ and Assumption 2 holds, for any $ k \in \mathbb{N}^+ $, we have $ \tilde{\mathcal{L}}_{\rho}(\bm{\omega}^{k}, \bm{\theta}^{k}, v^{k}) \geq F^* $.
\end{lemma}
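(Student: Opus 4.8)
The plan is to eliminate the dual variables from $\tilde{\mathcal{L}}_{\rho}$ using the first-order optimality condition of the $\bm\theta$-update, and then recognise the resulting expression as bounded below by $\tilde{F}(\bm\omega^{k})$, which in turn dominates $F^{*}$ via Proposition \ref{pro:sSCAD} and Assumption \ref{as:bound}. Throughout, write $\psi(\bm\theta):=\tilde{g}(\|\bm\theta\|)$ for $\bm\theta\in\mathbb{R}^{d}$; this is the function whose $L_{\tilde g}$-smoothness was already used in the proof of Lemma \ref{lem:descent of L}.

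First I would invoke the stationarity identity \eqref{eq:theta v}, shifted by one iteration: since $\rho>\max\{\lambda/\xi,1/(a-1)\}=L_{\tilde g}$, the $\bm\theta_{ij}$-subproblem in \eqref{app:theta} is $(\rho-L_{\tilde g})$-strongly convex, so its solution $\bm\theta_{ij}^{k}$ (given in closed form by \eqref{eq:theta update}) is its unique stationary point and, together with the $v$-update \eqref{eq:v}, satisfies $v_{ij}^{k}=\nabla\psi(\bm\theta_{ij}^{k})$ for all $i,j\in[m]$ and all $k\ge 1$. (This is where $k\in\mathbb{N}^{+}$ enters; note that the initialisation $\bm\theta_{ij}^{0}=v_{ij}^{0}=0$ also satisfies it since $\nabla\psi(0)=0$.) Substituting $v_{ij}^{k}=\nabla\psi(\bm\theta_{ij}^{k})$ into the definition \eqref{eq:lag} of $\tilde{\mathcal{L}}_{\rho}$ and writing $u_{ij}:=\bm\omega_{i}^{k}-\bm\omega_{j}^{k}=\mathbf{A}_{ij}\bm\omega^{k}$, each $(i,j)$-summand of the double sum equals $\psi(\bm\theta_{ij}^{k})+\langle\nabla\psi(\bm\theta_{ij}^{k}),u_{ij}-\bm\theta_{ij}^{k}\rangle+\tfrac{\rho}{2}\|u_{ij}-\bm\theta_{ij}^{k}\|^{2}$. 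Applying the descent inequality for the $L_{\tilde g}$-smooth $\psi$, namely $\psi(u_{ij})\le\psi(\bm\theta_{ij}^{k})+\langle\nabla\psi(\bm\theta_{ij}^{k}),u_{ij}-\bm\theta_{ij}^{k}\rangle+\tfrac{L_{\tilde g}}{2}\|u_{ij}-\bm\theta_{ij}^{k}\|^{2}$, this summand is at least $\psi(u_{ij})+\tfrac{\rho-L_{\tilde g}}{2}\|u_{ij}-\bm\theta_{ij}^{k}\|^{2}\ge\psi(u_{ij})=\tilde{g}(\|\bm\omega_{i}^{k}-\bm\omega_{j}^{k}\|)$, using $\rho\ge L_{\tilde g}$. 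Summing over $i,j$ and adding $f(\bm\omega^{k})$ yields $\tilde{\mathcal{L}}_{\rho}(\bm\omega^{k},\bm\theta^{k},v^{k})\ge\tilde{F}(\bm\omega^{k})$. Finally, Proposition \ref{pro:sSCAD} gives $g(t,\lambda)\le\tilde{g}(t,\lambda)$ for all $t$, hence $F(\bm\omega^{k})\le\tilde{F}(\bm\omega^{k})$, and Assumption \ref{as:bound} then gives $\tilde{\mathcal{L}}_{\rho}(\bm\omega^{k},\bm\theta^{k},v^{k})\ge\tilde{F}(\bm\omega^{k})\ge F(\bm\omega^{k})\ge F^{*}$, as desired.

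The genuinely careful points — what I view as the main obstacle — are the two facts invoked rather than the algebra. One must verify that $\psi(\bm\theta)=\tilde{g}(\|\bm\theta\|)$ is $L_{\tilde g}$-smooth on all of $\mathbb{R}^{d}$: this holds because $\tilde{g}'(0)=0$ (so $\psi\in C^{1}$, using the quadratic branch of \eqref{eq:sSCAD} near $0$) and because the eigenvalues of $\nabla^{2}\psi(\bm\theta)$ away from the origin are $\tilde{g}''(\|\bm\theta\|)$ and $\tilde{g}'(\|\bm\theta\|)/\|\bm\theta\|$, both of which lie in $[-L_{\tilde g},L_{\tilde g}]$ by the explicit piecewise form of $\tilde{g}$. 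One must also confirm that $v_{ij}^{k}=\nabla\psi(\bm\theta_{ij}^{k})$ holds for every pair even under the randomized-block scheme; this is exactly the point of the convention stated before the lemmas that all $v_{ij},\bm\theta_{ij}$ are taken to be refreshed each round via \eqref{app:theta}--\eqref{eq:v}, and it is consistent for inactive pairs since their $\bm\theta,v$ are carried over from an earlier round where the identity was established (or from the initialisation, which satisfies it). With these in hand, the remainder is a single application of the descent lemma together with Proposition \ref{pro:sSCAD}.
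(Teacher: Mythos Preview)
Your proof is correct and follows essentially the same route as the paper: substitute $v_{ij}^{k}=\nabla\psi(\bm\theta_{ij}^{k})$ from the $\bm\theta$-optimality condition, apply the $L_{\tilde g}$-smoothness descent inequality to get $\tilde{\mathcal{L}}_{\rho}\ge\tilde F(\bm\omega^{k})$, and then descend to $F^{*}$ via Proposition~\ref{pro:sSCAD} and Assumption~\ref{as:bound}. Your final chain $\tilde F(\bm\omega^{k})\ge F(\bm\omega^{k})\ge F^{*}$ is slightly more direct than the paper's, which routes through the minimizer $\tilde\omega^{*}$ of $\tilde F$ to show $\tilde F(\tilde\omega^{*})\ge F^{*}$, but the substance is identical.
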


Finally, we utilize the Proposition \ref{pro:sSCAD} to obtain the following theorem.

\begin{theorem}\label{them:kkt}
	Given the Assumptions \ref{as:smooth1}, \ref{as:bound}, \ref{as:pi}, and the existence of $L_->0$ such that $\nabla^2 f_i \succeq -L_{-}\bm{I}$ with $\mu=\rho-L_->0$. Let $ (\bm{\omega}^{k}, \bm{\theta}^{k},v^{k} )$ be the sequence generated by Algorithm \ref{alg:1}. If $\xi$, $\lambda$,  $a$, $\rho$, $T_i$, and $\alpha$ are chosen such that 
	\begin{small}
		\begin{equation}
		\begin{aligned}
		\label{eq:req} 
		\rho> \max\{\frac{L_f}{1-2c^{\frac{T}{2}}},\frac{2\lambda}{\xi},\frac{2}{a-1} ,L_-\}, 	T_i> -\frac{2\log 2}{\log c}, \ 0<\alpha\leq \frac{1}{L_f+2\rho-L_-}, 
		\end{aligned}
		\end{equation}
	\end{small}
	where $ c = 1-\alpha \frac{2\mu (L_{f}+\rho)}{L_f+\rho+\mu}$ and $T=\min_{i\in[m]}T_i$, then we can derive the following results:
	\begin{small}
		\begin{equation*}
		\begin{aligned}	
		\frac{1}{K}\sum_{k=0}^{K-1}\mathbb{E}[\|\nabla_{\bm{\omega}} \mathcal{L}_{0} ( \bm{\omega}^{k+1}, \bm{\theta}^{k+1},v^{k+1}) \|^2 ]  &\leq \frac{C_{1}[F(\omega^0)- F^* + m \xi \lambda/4]}{K},\\
		\mathbb{E}[\|\mathcal{G}(\bm{\omega}^{k+1},\bm{\theta}^{k+1},v^{k+1})\|^2] &\leq  \frac{m^2\lambda^2 \xi^2}{(\lambda + \xi)^2}, \\
		\frac{1}{K} \sum_{k=0}^{K-1}\mathbb{E}[\|\nabla_{v} \mathcal{L}_{0} [\bm{\omega}^{k+1},\bm{\theta}^{k+1},v^{k+1}]\|^2]
		& \leq \frac{C_{2}[F(\omega^0)- F^*+ m \xi \lambda/4]}{K},
		\end{aligned}
		\end{equation*}
	\end{small}
	where $ C_{1} =  \frac{1}{\hat{p}}[\frac{6[(c^{-\frac{T}{2}} - 1)^{-2}L_{h}^2 + \rho^2}{\rho - L_{f} - 2(c^{-\frac{T}{2}} - 1)^{-1}L_{h}} + \frac{12\rho^3}{\rho^2 - L_{\tilde{g}}\rho - 2L_{\tilde{g}}^2}]$, $ C_{2} = \frac{L_{\tilde{g}}^2}{m\rho  (\rho^2 - L_{\tilde{g}}\rho - 2L_{\tilde{g}}^2)}$.
\end{theorem}
Let $(\tilde{\omega}^K,\tilde{\theta}^K,\tilde{v}^{K})$ be selected uniformly at random from $\{(\omega^1,\theta^1,v^1),\ldots,(\omega^K,\theta^K,v^K)\}$ as the output of Algorithm \ref{alg:1}. Suppose $\xi< \frac{\lambda \varepsilon}{m\lambda - \varepsilon}$. Then, after at most $K= \mathcal{O}(\varepsilon^{-2})$
communication rounds, $(\tilde{\omega}^K,\tilde{\theta}^K,\tilde{v}^{K})$ becomes an  $\varepsilon$-stationary point of \eqref{eq:obj} in the sense of Definition \ref{def:sta}.  

\begin{remark}
	({\bf Choice of hyperparameters}) For local update steps $ T_{i} $, the corresponding relative inexact accuracy $ \epsilon_{i} = 1/(c^{-\frac{T_{i}}{2}}-1)$.
	The requirements on hyperparameters $T_i$, $\alpha$, $\xi$, $\lambda$, $a$, and $\rho$ 
	are not restrictive and can be easily met. For example, for any $\xi>0, \lambda>0, a>0$, one can choose $\rho=\max\{3L_f, \frac{2\lambda}{\xi}, \frac{2}{a-1},L_-\}+0.01$, $\alpha=\frac{1}{L_f+2\rho-L_-}$, and $T_i=-\frac{2\log 3}{\log\left[1-\frac{2(\rho-L_-)(L_f+\rho)}{(L_f+2\rho-L_-)^2}\right]}$, $ \epsilon_{i} = 0.5 $.
\end{remark}

\begin{remark}
	({\bf Comparison of communication complexity}) Because \eqref{eq:obj} is nonconvex, our $\mathcal{O}(\varepsilon^{-2})$ communication complexity is state-of-the-art, matching the lower bound complexity (up to a constant factor) of FL on non-i.i.d. data \cite{Zhang2021FedPD}. Although  
	{IFCA} has a faster convergence rate $\mathcal{O}(\log(1/\varepsilon))$, it only considers strongly convex loss functions, while we consider more general nonconvex losses.    
\end{remark}

To characterize the statistical convergence rate, we focus on a linear model. Specifically, we assume that the data on the devices in the $l$-th cluster are generated as following: For $i\in G_l$,  
$y_i^s=\langle \bm X_i^s, \bm\alpha_l^{true}\rangle+\tau_i^s, s\in [n_i],$
where $y_i^s\in \mathbb{R}$, $\bm X_i^s\in \mathbb{R}^d$, $\bm\alpha_l^{true}$ represents the true model parameters for the $l$-th cluster, and $\tau_i^s$ are the i.i.d. random errors. Further, we use the squared loss function. Denote the true values of parameters $\bm\omega$ as $\bm\omega^{true}=((\bm\omega_1^{true})^\top,\ldots,(\bm\omega_m^{true})^\top)^\top$. Then, $\bm\omega_i^{true}=\bm\alpha_l^{true}$ for $i\in G_l$. Define the minimal differences of the common model parameters between two clusters as 
\begin{equation*}
b=\min_{i\in G_l,j\in G_{l'},l\neq l'}\|\bm\omega_i^{true}-\bm\omega_j^{true}\|=\min_{l\neq l'}\|\bm\alpha_l^
{true}-\bm\alpha_{l'}^{true}\|.	
\end{equation*}
Let
\begin{equation*}
n=\sum_{i=1}^m{n_i}, \ n_{\min}=\min_{l\in [L]} \sum_{i\in G_l} n_i, \ n_{\max}=\max_{l\in [L]} \sum_{i\in G_l} n_i,
\end{equation*}
and $|G_{\min}|=\min_{l\in [L]}|G_{l}|$, where $|\cdot|$ is the cardinality of the set. Under the mild assumptions described in Appendix D, we can establish the following statistical convergence rate.
\begin{theorem}
	\label{thm:thm3}
	Suppose that Conditions (1)-(3) in Appendix D hold. Assume that $b>a\lambda$, where the SCAD penalty function $P_{a}(t,\lambda)$ is constant when $|t|>a\lambda$. If $$\lambda\gg m|G_{\min}|^{-1}\sqrt{Ld^3n_{\min}^{-1}\log n}+m\sqrt{n_{\min}^{-1}d\log n},$$ then there exists a local minimizer $\bm{\omega}^{*} = ((\bm\omega_1^{*})^\top,\ldots,(\bm\omega_m^{*})^\top)^\top $ of objective function $F(\bm\omega)$ such that
	\[P(\underset{i\in [m]}{\sup}\Vert{\bm \omega}_i^{*}-\bm\omega_i^{true}\Vert \leq \Lambda_n)\geq 1-2Ldn^{-n_{\min} n_{\max}^{-1}}-2dn^{-1},\]
	where $ \Lambda_n=C^{-1} |{G}_{\min}|^{-1}c_1^{-1/2}\sqrt{2Ldn_{\min}^{-1}\log n} $, $C$ and $c_1$ are positive constants defined in Conditions (1) and (3), respectively.
\end{theorem}

Theorem \ref{thm:thm3} demonstrates that the objective function (1) has a local minimizer $\bm\omega^*$ that each element $\bm\omega_{i}^*$ of $\bm\omega^*$ stays within a ball of radius $\Lambda_n$ centered at the true model parameters $\bm\omega_{i}^{true}$. $\Lambda_n$ is the uniform convergence rate of the least squares estimator of model parameters of all devices when the cluster structure is known. Similar to the convergence rate of least squares estimators in the conventional setting, $\Lambda_n$ decreases at a rate proportional to the sample size of the smallest cluster. The hyperparameter $\lambda$ is critical in FPFC. On the one hand, $\lambda$ must be less than the upper bound $b/a$ such that different devices in different clusters can be distinguished, while on the other hand, it must be greater than the lower bound listed in Theorem \ref{thm:thm3}, such that the devices in the same cluster can be grouped together. 

\section{Performance evaluation}
\label{sec:num}
In this section, we conduct extensive experiments to evaluate the performance of FPFC. We first describe the experimental settings and benchmark datasets. Then, we show the advantage of the proposed regularization parameter tuning strategy. Besides, we compare the performance of different methods in aspects of robustness, generality, and communication efficiency. We also present several variations of FPFC in practice. More experimental details are provided in Appendix E. Our code
is released at \url{https://github.com/xueyu-ubc/FPFC}.

\subsection{Experimental settings}

\textbf{Datasets and models.} We conduct experiments on different synthetic datasets and three real-world datasets:
\begin{itemize}
	\item \textit{Synthetic.} We begin with a multi-classification problem with synthetic data. There are $m=100$ devices, and the number of samples on each device follows a power law, in the range [250, 25810] \cite{Li2018}. There are $L=4$ clusters, each consisting of 25 devices. For each cluster $l\in [L]$, we first generate a weight matrix $ W_{l}\in \mathbb{R}^{10 \times 60}$ and a bias $ b_l \in \mathbb{R}^{10}$. Each element in $W_l$ and $ b_l$ is independently drawn from $\mathcal{N}(\mu_l,1)$, where $ \mu_{l} \sim \mathcal{N}(0,1)$. For $i\in G_l$, the features $ X_i^s$ for $s\in [n_i]$ are independently generated from $\mathcal{N}(\bm 0,\bm I_{60})$, and response $y_i^s$ is then generated from $y_{i}^s = \arg\max(\text{softmax}(W_{l}X_{i} + b_{l} + \bm{\tau}_{i}^s))$, where each noise vector $\bm{\tau}_{i}^s$ is generated from $\mathcal{N}(\bm 0, 0.5^2\bm I_{10})$. We use the cross-entropy loss functions. 
	
	\item \textit{Housing and Body fat (H$ \& $BF).} We consider a linear regression task based on the Housing and Body fat (H\&BF) benchmark datasets \cite{Dua2019}. We set $m=8$ devices. To simulate the cluster structure of devices, we evenly allocate the Housing dataset to the first six devices and the Body fat dataset to the last two devices, resulting in $L=2$ clusters. We use a linear model with squared loss.
	
	\item \textit{MNIST $ \& $ FMNIST.} We use two public benchmark datasets, MNIST \cite{Lecun1998} and FMNIST \cite{Xiao2017}. Take MNIST as an example, we first partition 60000 training images and 10000 testing images into $m=20$ devices by a Dirichlet distribution, each of which belongs to one of $ L = 4 $ clusters. We then modify data on each device by randomly swapping two labels\cite{Sattler2021}. Specifically, the first cluster consists of devices 1-5, where each devices' data labeled as "0" are relabeled as "8" and vice versa. The second cluster consists of devices 6-10, where each device's data labeled as "1" and "7" are swapped similarly, and so on. The testing data are processed in the same way. We use a multilayer convolutional neural network with the cross-entropy loss. To account for some common properties of the data, we adopt the weight sharing technique from multi-task learning \cite{Caruana1997}, as used in {IFCA}. Specifically, the weights for the first three layers are common across all devices and clustering is only applied for the last layer, producing a predicted label for each image.
\end{itemize}

\textbf{Evaluation metrics.} We focus on four metrics for comparisons: (1) testing accuracy ({Acc}) for classification problems or testing loss for linear regression problems measured by RMSE between the predicted responses and true responses, both of which are averaged over all devices, (2) number of identified clusters (Num), (3) adjusted Rand index (ARI), which measures the agreement between the identified cluster structure and true cluster structure, and (4) communication cost (Cost), which is measured by the number of parameters that need to be transmitted during the communication process. For each dataset, we run all algorithms with three random seeds and report the average and standard deviation of the four metrics. 

\textbf{Setup.} We evaluate the performance of FPFC and compare it with the state-of-the-art clustered FL algorithms (IFCA,  CFL, and PACFL). To make our experiments more comprehensive, we also report the performance of FedAvg and two personalized FL algorithms (Per-FedAvg and LG-FedAvg (LG) \cite{liang2020think}). LOCAL is a baseline that independently trains the personalized model of each device without collaboration. Moreover, we implement FPFC-$\ell_{1}$, a variant of FPFC, which uses an $\ell_1$ penalty, i.e., $g(\|\bm\omega_i-\bm\omega_j\|,\lambda)=\lambda\|\bm{\omega}_{i} - \bm{\omega}_{j}\|_{2}$. Note that FedAMP requires exact computations of each intermediate minimization subproblem, making it unrealistic for FL, we do not consider this method.

\textbf{Hyperparameter.} We randomly split the data on each device into an 80\% training set and a 20\% testing set, where the training set is used to train models, and the testing set is to make predictions. We further divide the training set for each device into $ 80\%$ for training and $ 20\% $ for validation, where the validation set is used to select hyperparameters. For {CFL}, we tune $\varepsilon_1$ in the range $[0,1]$ and $\varepsilon_2$ in the range $[\varepsilon_1,10\varepsilon_1]$, as suggested in \cite{Sattler2021}. For {FPFC} and {FPFC-$\ell_1$}, we tune $\lambda$ in the range $[0,5]$ to achieve the lowest {RMSE} on the validation set for H\&BF and $[0,1]$ to achieve the highest {Acc} for all synthetic data, MNIST and FMNIST. For a fair comparison, we do not adopt the warmup tuning scheme, but we discuss the advantages of different tuning strategies later. In addition, we set $\rho=1$, $a=3.7$, and $\xi=10^{-4}$. For IFCA, we implement gradient averaging in local updates and give the correct number of clusters in advance. For PACFL, there is a threshold controlling the number
of clusters and we tune it in a range that depends upon different datasets. For all algorithms, each device performs 10 epochs of local GD updates for all synthetic data, 20 epochs for H\&BF, and SGD updates for 10 epochs and a batch size of 100 for MNIST and FMNIST. The learning rates are 0.1 for all synthetic data and 0.01 for H\&BF. For MNIST and FMNIST, we initialize the learning rate to 0.01 and multiply it by 0.9 for every 5 communication rounds. For each dataset, we randomly sample $30\%\sim 50\%$ devices to perform the local update at each communication round. 

\subsection{Performance comparison}

\begin{figure}[h]
	\centering
	\includegraphics[width=0.8\columnwidth]{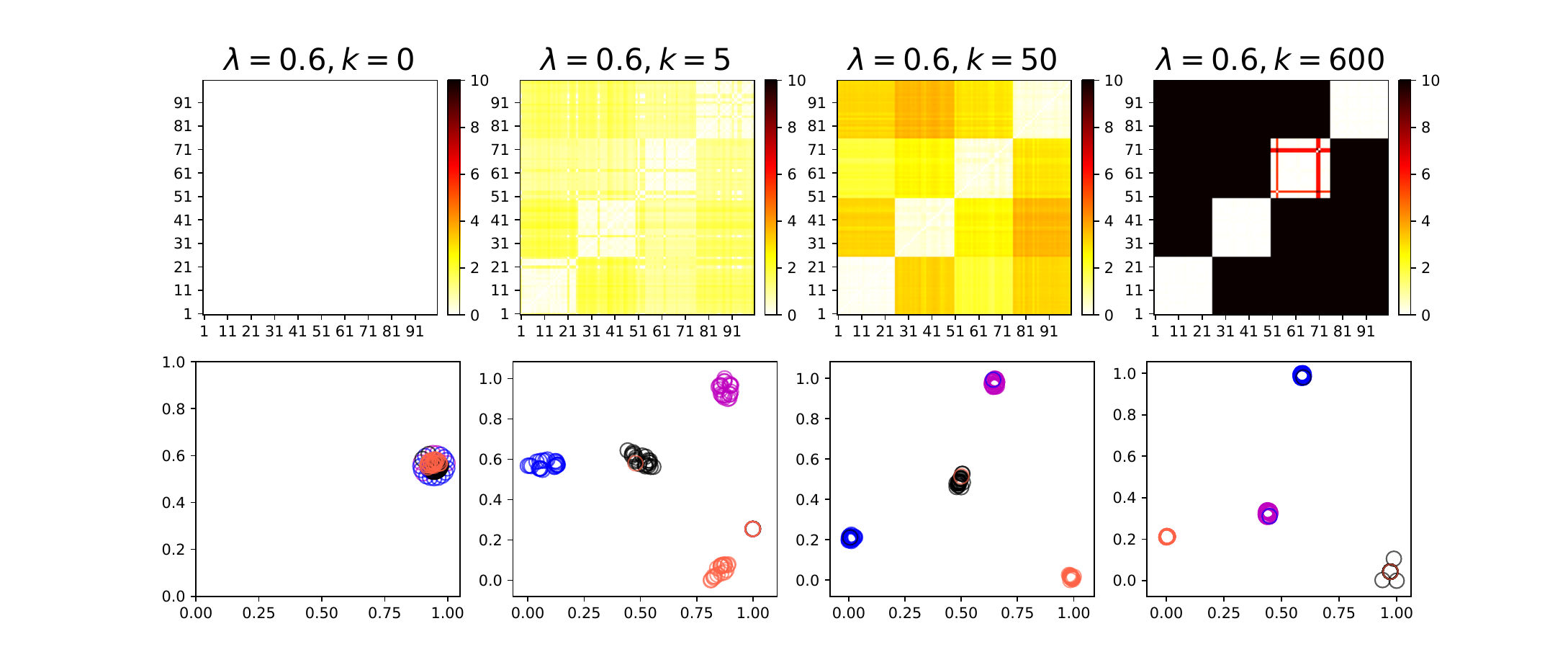}
	\hspace{-0.5cm}
	\caption{Visualization of $\|\bm\omega_i-\bm\omega_j\|$ computed by {FPFC}(upper) and the cluster structure by using t-SNE (below) at $0$-th, $5$-th, $50$-th, and $600$-th communication round under $\lambda=0.6$.}
	\label{fig:group4}
\end{figure}

\begin{table*}[t]
	\caption{Experimental results on synthetic and H\&BF datasets.}
	\label{table:linear}
	\centering
	\resizebox{0.9\linewidth}{!}{
		\begin{tabular}{c|cccc|cccc}
			\toprule
			& & Synthetic & &	& &  H\&BF & &  \\
			\midrule 
			&Acc &Num & ARI & Cost &RMSE & Num& ARI & Cost \\
			\midrule
			LOCAL    &84.97\% $\pm$ 0.06 & $\times$ & $\times$ & 0 & 4.89 $\pm$ 0.19 & $\times$ & $\times$  & 0\\
			FedAvg   & 30.35\% $\pm$ 0.05  & $\times$ & $\times$ &  $\mathbf{ 7.33 \times 10^7 }$& 6.00 $\pm$ 0.36  & $\times$ & $\times$  & \textbf{12120} \\
			LG   & 69.72\% $\pm$ 0.06  & $\times$ & $\times$ &  $\mathbf{7.33 \times 10^7 }$  & 4.52 $\pm$ 0.39  & $\times$ & $\times$  & \textbf{12120}\\
			Per-FedAvg   & 56.28\% $\pm$ 0.06  & $\times$ &$\times$  &  $\mathbf{ 7.33 \times 10^7 }$ & 5.29 $\pm$ 0.27  & $\times$ & $\times$ & \textbf{12120}   \\
			IFCA    & 60.42\% $\pm$ 0.17 & 2.33 $ \pm $ 0.94  & 0.47 $\pm$ 0.33  & $ 1.83 \times 10^9 $ & 5.08 $\pm$ 0.21 &  1.00 $\pm $ 0.00  & 0.00 $\pm$ 0.00   & 282000  \\
			CFL     &86.60\% $\pm$ 0.05 &42.0 $ \pm$ 4.55  & 0.28 $\pm$ 0.11  & $ 1.59 \times 10^8 $& 4.68 $\pm$ 0.41 &1.33 $\pm$ 0.47  & 0.15 $\pm$ 0.22 & 150120\\
			PACFL   &88.30\% $\pm$ 0.06 &32.0 $ \pm $ 37.53  & 0.43 $\pm$ 0.37 & $ 7.34 \times 10^{7} $ & \textbf{4.08 $\pm$ 0.29} & 2.33 $\pm$ 0.47  & 0.82 $\pm$ 0.25 & 12336 \\
			FPFC-$\ell_{1}$  & 83.85\% $\pm$ 0.06  & 5.67 $\pm$ 2.87& 0.63 $\pm$ 0.21 & $\mathbf{ 7.33 \times 10^7 }$  & 4.20 $\pm$ 0.23 & \textbf{2.00 $\pm$ 0.00}  & \textbf{1.00 $\pm$ 0.00} &\textbf{12120} \\
			FPFC    & \textbf{89.46\% $\pm$ 0.04}  &   \textbf{4.00 $\pm$ 0.00}  & \textbf{1.00 $\pm$ 0.00}  & $\mathbf{ 7.33 \times 10^7 }$ & 4.09 $\pm$ 0.24 &   \textbf{2.00 $\pm$ 0.00}  & \textbf{1.00 $\pm$ 0.00}    & \textbf{12120} \\
			\bottomrule
		\end{tabular}
	}
\end{table*}

\begin{figure*}
	\centering
	\includegraphics[width=0.9\columnwidth]{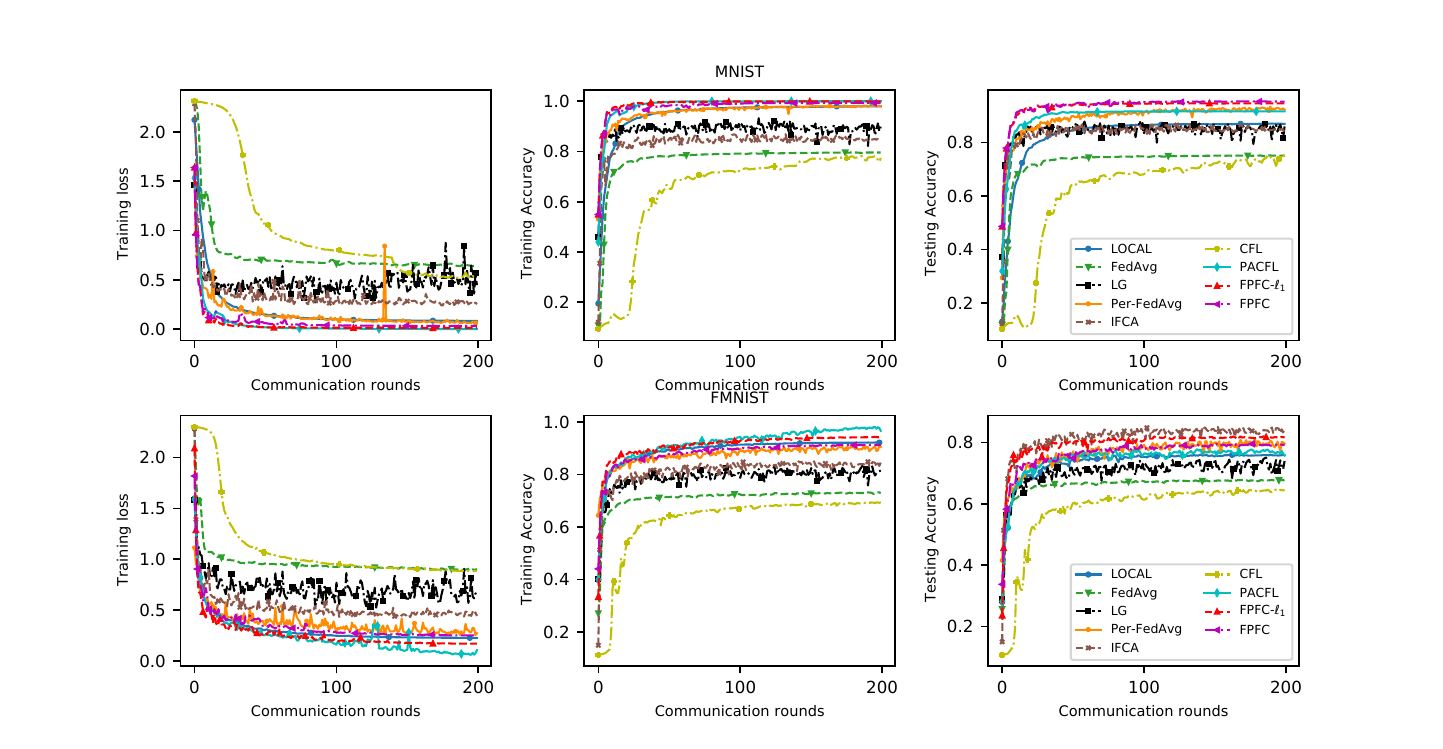}
	\vspace{-0.5cm}
	\caption{Training loss, training accuracy, and testing accuracy on MNIST and FMNIST.}
	\label{fig:mnist}
\end{figure*}

Fig.\ref{fig:group4} visualizes $\|\bm\omega_i-\bm\omega_j\|$($i,j\in [m]$) on a synthetic dataset over the course of 600 communication rounds. We use t-distributed stochastic neighbor embedding (t-SNE) \cite{Laurens2008} to visualize the cluster structure of $ m=100 $ devices. As can be seen, $\bm\omega_i$ converge to four clusters when $\lambda = 0.6$. These results illustrate that FPFC can effectively identify the cluster structure. Table \ref{table:linear} presents a statistical summary of four metrics on synthetic and H$ \& $BF datasets. Because LOCAL, FedAvg, LG, and Per-FedAvg lack the capability to cluster devices, we have omitted reporting their Num and ARI results. FedAvg exhibits the worst performance as expected, which can be attributed to its attempt to learn a single model for all devices, despite their distinct data distributions. These results highlight the limitations of FedAvg in terms of individualized predictions and cluster-based analysis. Both Per-FedAvg and LG have shown superior performance compared to FedAvg. This can be attributed to their ability to personalize the global model for each device. Due to overfitting to local data, LOCAL has worse prediction performance than FPFC. The two clustered FL algorithms, IFCA and CFL, do not perform as well as FPFC. Although they can also cluster devices, their clustering performances are inferior to that of FPFC. Although IFCA is implemented with the correct number of clusters, some clusters have no device in the end, resulting in fewer clusters. PACFL demonstrates competitive results, but it relies on a critical threshold parameter that controls the clustering results. FPFC-$\ell_{1}$ performs better than IFCA and CFL in clustering. However, due to the use of $\ell_{1}$ penalty in FPFC-$\ell_{1}$, when the coefficients of devices that belong to different clusters have significant differences, FPFC-$\ell_{1}$ imposes a stronger penalty to compress these coefficients, leading to incorrect grouping. In contrast, FPFC does not penalize large coefficient differences, and therefore the result of FPFC-$\ell_{1}$ is significantly inferior to FPFC in both clustering and prediction. LOCAL incurs zero communication costs as it does not involve any communication with the server. FedAvg, LG, Per-FedAvg, FPFC, and FPFC-$\ell_{1}$ have the same communication costs due to the consistent size of the transmitted information at each communication round. The communication cost of PACFL is determined by the combined cost of FedAvg and the cost associated with hierarchical clustering in one-shot clustering. IFCA has the highest communication cost due to its sensitivity to initialization, which needs more communication rounds to achieve convergence. Additionally, during each communication round, IFCA requires downloading all cluster models to all devices, resulting in a substantial increase in communication costs.

Fig. \ref{fig:mnist} shows the training loss, training accuracy and testing accuracy results of all methods on MNIST and FMNIST under some fine-tuned parameters. FPFC also achieves a comparable high test accuracy with only a small communication cost. Despite IFCA demonstrating favorable results on the FMNIST dataset, as shown in Fig. \ref{fig:mnist_cost}, it comes at the expense of the highest communication costs. We also consider different cluster structures, the effect of regularization parameter $ \lambda $, and the tolerance to heterogenous training, see more results in Appendix \ref{app:E}.

\begin{figure}
	\centering
	\includegraphics[width=0.4\columnwidth]{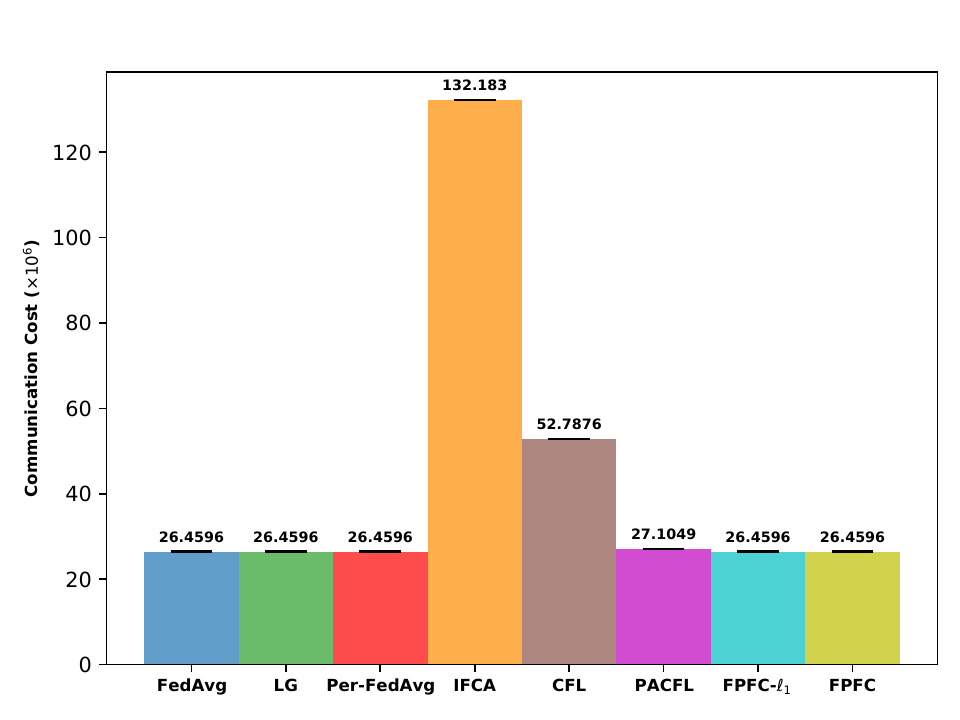}
	\caption{Total communication cost on MNIST/ FMNIST.}
	\label{fig:mnist_cost}
\end{figure}

\subsection{Regularization parameter tuning}
\label{sec:sim_warmup}
\begin{figure*}
	\centering
	\hspace{-1cm}
	\includegraphics[width=1\columnwidth]{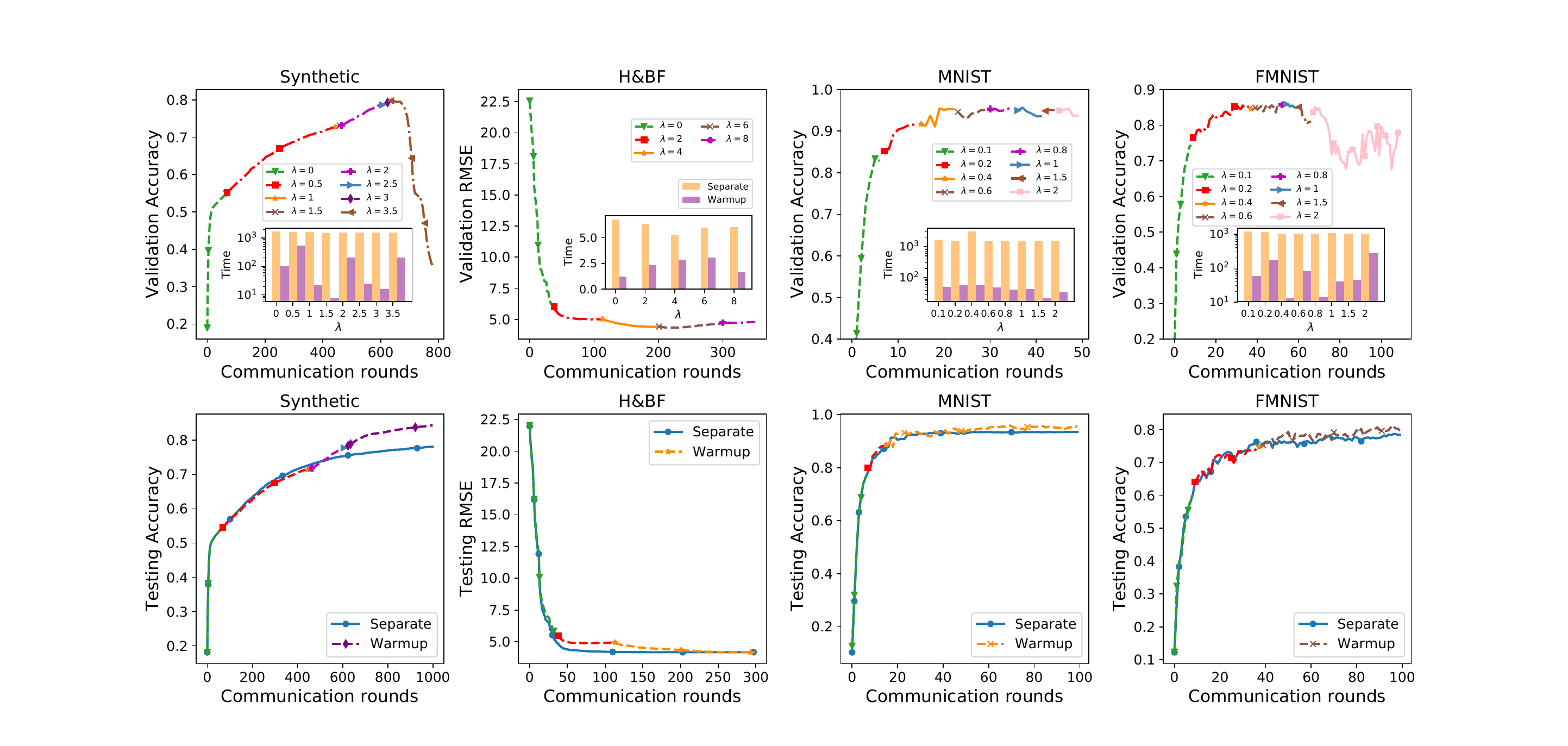}
	\vspace{-0.5cm}
	\caption{Two strategies for regularization parameter tuning on different datasets.}
	\label{fig:warmup_sepetate}
\end{figure*}

To compare different strategies for regularization parameter tuning, we conduct diverse experiments. The four subgraphs in the first row of Fig. \ref{fig:warmup_sepetate} show the process of parameter tuning via warmup. Taking the synthetic dataset as an example, we tune $ \lambda $ from $ \{0, 0.5, 1, 1.5, 2, 2.5, 3, 3.5\} $. Initially, we set $ \lambda = 0 $ and run Algorithm \ref{alg:1} until the change in accuracy on the validation set becomes smaller than a predefined tolerance threshold, e.g., $ 10^{-4} $. Afterward, we use the model obtained with $\lambda = 0$ as the initial model for $\lambda = 0.5$. We then run FPFC with $ \lambda = 0.5 $ until it converges (i.e., the change in the validation accuracy is below a specified threshold). As observed from the results, the final validation accuracy achieved with $ \lambda = 0.5 $ is higher than that obtained with the previous $ \lambda = 0 $.  Thus, we retain the model obtained with $ \lambda = 0.5 $ as the initial model for subsequent $ \lambda = 1 $. Then, we start to train FPFC with $ \lambda = 1 $ based on the initial model and repeat this process. As shown in the first subgraph in the first row of Fig. \ref{fig:warmup_sepetate}, once $ \lambda $ increases to $ 3.5 $, the validation accuracy begins to decline, indicating that $ \lambda = 3.5 $ is too large for training. Therefore, we stop increasing $ \lambda $ and revert to training FPFC using the model obtained with $ \lambda = 3 $ as the initial model until the algorithm converges. The bar charts display the running time(in seconds) of the two strategies for each $ \lambda $ during the parameter tuning process on various datasets. Notably, both strategies tune $ \lambda $ within the same range. The bar charts demonstrate that the warmup strategy which initializes the model with the latest model parameters as $ \lambda $ increases indeed speeds up convergence and saves computing time.

\begin{table*}
	\caption{Selected $ \lambda $ by two strategies and running time on synthetic, H\&BF, MNIST, and FMNIST datasets.}
	\label{table:warmup}
	\centering
	\begin{tabular}{c|cc|cc|cc|cc}
		\toprule
		& \multicolumn{2}{c|}{Synthetic} &\multicolumn{2}{c|}{ H\&BF} &\multicolumn{2}{c|}{MNIST} &\multicolumn{2}{c}{FMNIST} \\
		\midrule 
		&$ \lambda $ &Time &$ \lambda $ &Time &$ \lambda $ &Time &$ \lambda $ &Time \\
		\midrule
		Separate   &0.5& 4.02h  &4 & 39.41s & 0.1 & 3.94h &0.1 &2.80h \\
		Warmup   &3 &\textbf{0.46h} &4 &\textbf{14.74s} & 0.4 & \textbf{0.58h}& 0.6 & \textbf{0.59h}\\
		\bottomrule
	\end{tabular}
\end{table*}

The four subgraphs in the second row of Fig. \ref{fig:warmup_sepetate} show the generalization performance of the two strategies. For FPFC with separate tuning, the initial values corresponding to each  $ \lambda_i $ are randomly generated. Subsequently, we run FPFC using $ \lambda_i $ on the training set and assess its performance (accuracy or RMSE) on the validation set. Finally, we compute the testing accuracy/RMSE of FPFC using the $ \lambda $ that shows the best performance on the validation set. Take the synthetic dataset as an example, the testing accuracies during 0-68th communication rounds are obtained based on the models with $ \lambda = 0 $. The 69th-443th, 444th-458th, 459th-463th, 464th-605th, 606th-623th, 624th-634th, 635th-780th testing results are based on models with $\lambda =0.5, 1, 1.5, 2, 2.5, 3, 3.5$, respectively. Due to the lower validation accuracy observed with $\lambda =3.5$ compared to $\lambda =3$, we finally run the latest model trained with $ \lambda = 3 $ from the 635th communication round and continue running it until the maximum number of iterations is reached. During the warmup stage, the initial values associated with the subsequent $ \lambda $ values gradually converge towards local minimizers. As a result, using better initial values can potentially yield improved results. The final selected regularization parameter and total running time are reported in Table \ref{table:warmup}. Utilizing the warmup scheme significantly reduces the running time, for example, reducing it to 0.58 hours with warmup compared to 3.94 hours when tuning parameters separately on the MNIST dataset.

\subsection{Practical considerations}
In this part, we explore the performance of FPFC with all the mentioned state-of-the-art baselines, focusing on robustness and generality. Additionally, we introduce several practical variants of FPFC, including asynchronous updates (asyncFPFC) and a communication-efficient approach. Through these evaluations, we aim to gain valuable insights into the strengths and limitations of FPFC in practical application scenarios.

\subsubsection{Robustness}
One of the most common issues in FL is that it can suffer from security threats to model training, including malicious devices, communication failures, and adversarial attacks. A significant  challenge to the robustness of distributed systems is known as Byzantine failure or Byzantine fault \cite{Lamport1982}. Byzantine attacks can occur when participating devices provide completely arbitrary or misleading behaviors to the server, thereby disrupting the training process. To address Byzantine failures in FL, several approaches have been proposed to enhance the robustness of FL by designing a Byzantine-robust FL scheme \cite{Blanchard2017,Huang2021,Miao2022,Portnoy2022}.

To evaluate the robustness of different algorithms, similar to the work of \cite{Lin2022}, we consider three different types of Byzantine model update attacks in FL training. Define the model update sent by a malicious device $ k $ is $\breve{\omega}_{k} $, then the three types of attacks can be defined as：
\begin{itemize}
	\item[(a)] Same-value attack: the malicious device $k$ sends the constant value $\breve{\omega}_{k} = c \bm{I} $ to the server as its model update, where $ \bm{I} \in \mathbb{R}^{d} $ is the vector of ones and $ c \sim \mathcal{N}(0, \sigma^2) $.
	\item[(b)] Sign-flipping attack: the malicious device $k$ alters the sign of its model update before sending it to the server. $\breve{\omega}_{k} = -|c|\bm{\omega}_k$, where $ c \sim \mathcal{N}(0, \sigma^2) $.
	\item[(c)] Gaussian attack: the malicious device $k$ sends gaussian noise $\breve{\omega}_{k} \sim \mathcal{N}(\bm{0}, \sigma^2 \bm{I})$ to the server. 
\end{itemize}
\begin{figure*}
	\centering
	\includegraphics[width=1\columnwidth]{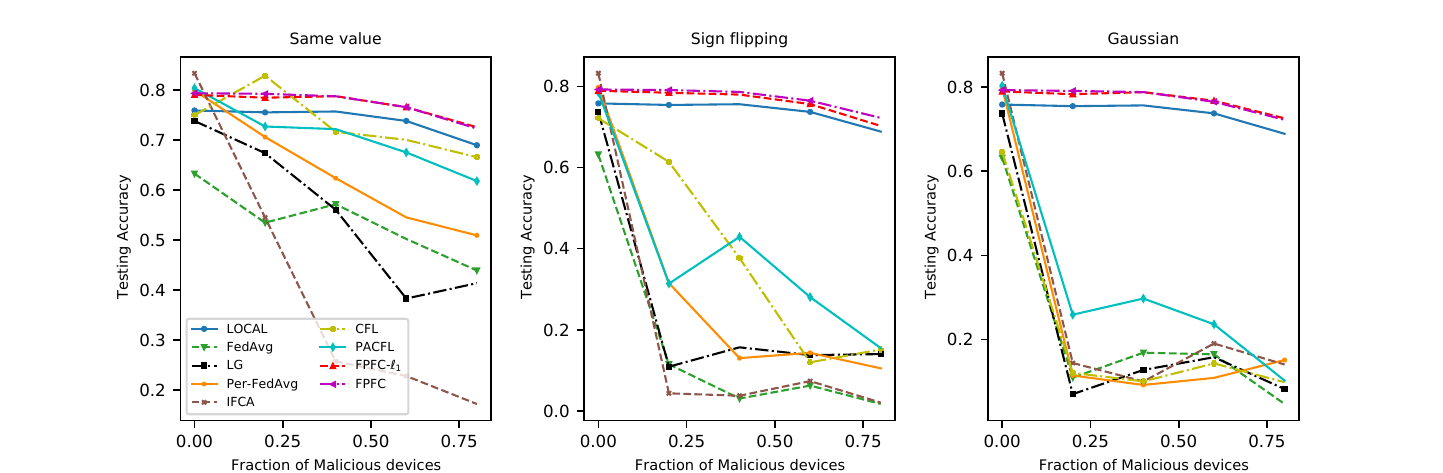}
	\caption{Robustness comparison of different methods on FMNIST dataset.}
	\label{fig:fmnist_attack}
\end{figure*}
To evaluate the robustness of the personalization layers in the model, we conduct experiments involving attacks with varying levels of noise. For the three Byzantine attacks, the noise level $ \sigma $ is set as $ 100, 10, 100 $ respectively, as specified in \cite{Lin2022}. We consider different ratios of malicious devices, ranging from 0.0 to 0.8 with increments of 0.2. Fig.\ref{fig:fmnist_attack} shows the average test accuracy of benign devices on the FMNIST dataset for different methods under the three Byzantine attacks. As we can see, FPFC consistently outperforms other baselines and it is the most stable algorithm across different attack scenarios. For full results on synthetic and MNIST datasets, please refer to Appendix \ref{app:E}. 

The robustness of FPFC can be attributed to the utilization of the nonconvex pairwise fusion penalization. According to the formulation of the SCAD penalty $P_{a}(t,\lambda)$ \eqref{eq:SCAD}, $P_{a}(t,\lambda)$ will become a constant when $ |t| > a\lambda $, resulting in the ineffective shrinkage of parameter values. The difference between malicious updates and normal updates in FPFC is typically greater than the predefined threshold $ a\lambda $ or can be controlled by appropriately selecting the parameters $ \lambda $ and $ a $. This characteristic limits the impact of attacks on the training process of FPFC. As a result, FPFC maintains its robustness and reliability under different attacks.

\subsubsection{Generalization to newcomers}

\begin{table}
	\caption{Average local test accuracy and standard deviation across newcomers on different datasets. The last column represents the average ranking of all algorithms across three datasets.}
	\label{table:generalization}
	\centering
	\begin{tabular}{c|cccc}
		\toprule
		& Synthetic & MNIST &	FMNIST  & Rank \\
		\midrule 
		LOCAL    &79.00\% $\pm$ 0.08 & 90.81\% $ \pm $ 0.06 & 75.29\% $\pm$ 0.06 &4.67 \\
		FedAvg   & 38.73\% $\pm$ 0.29 & 73.67\% $\pm$ 0.04 & 66.26\% $\pm$ 0.06 &8.67 \\
		LG   & 66.22\% $\pm$ 0.11  & 95.71\% $\pm$ 0.03 &\textbf{80.25\% $\pm$ 0.07} &3.33 \\
		Per-FedAvg   & 43.26\% $\pm$ 0.26  & 96.62\% $\pm$ 0.01 & 77.94\% $\pm$ 0.06  &3.67 \\
		IFCA    & 33.57\% $\pm$ 0.28 & \textbf{96.65\% $\pm $ 0.01}  & 73.93\% $\pm$ 0.10  &5.33  \\
		CFL     &69.43\% $\pm$ 0.33 & 95.46\% $\pm$ 0.02  & 70.66\% $\pm$ 0.07  & 5.00\\
		PACFL   &71.88\% $\pm$ 0.30 & 74.45\% $\pm$ 0.05   & 66.42\% $\pm$ 0.06 & 6.33\\
		FPFC-$\ell_{1}$  & 67.72\% $\pm$ 0.10  & 92.73\% $\pm$ 0.05 & 75.54\% $\pm$ 0.08  & 5.00\\
		FPFC    & \textbf{79.31\% $\pm$ 0.08}  &   93.51\% $\pm$ 0.06  & 75.73\% $\pm$ 0.07  &\textbf{3.00} \\
		\bottomrule
	\end{tabular}
\end{table}

When newcomers join after the federation procedure, FPFC can leverage the models of other devices to help newcomers learn their personalized models. The newcomer $ i $ conducts local training first and then sends its model to the server and the server computes the values of $ \theta_{ij} $ and $ v_{ij} $ between newcomer $i$ and other previously participating devices $ j $. The server then sends the corresponding $ \zeta_{i} $ to the newcomer $ i $ and the newcomer updates its local model based on the received $ \zeta_{i} $. This process is repeated until convergence is achieved. As not all baselines have provided official approaches for handling newcomers after training, we developed the following strategies for each method. For the LOCAL method, each newcomer updates its model using its local data. For FedAvg, newcomers utilize the final global model obtained from the server. For LG and Per-FedAvg, newcomers receive the final global model from the server and fine-tune it. For IFCA, each newcomer determines its cluster identity by selecting the cluster with the lowest loss. For CFL and PACFL, we adopt the official approaches described in their papers to obtain personalized models for newcomers. 

To evaluate the performance of newcomers' models, we consider three scenarios where 20\% of the participants join the federated network after the federated training process. Table \ref{table:generalization} reports the average local test accuracy and standard deviation of the newcomers on three different datasets. As observed, FedAvg has the worst performance as the global model is not well-suited for all devices. The two personalized baselines, LG and Per-FedAvg, outperform FedAvg as they enable newcomers to fine-tune the global model based on their local data. Compared to other baselines, FPFC has comparable performance and  demonstrates good generalization capabilities on various datasets.
\begin{figure}[ht]
	\centering
	\vspace{-0.5cm}
	\includegraphics[width=0.8\columnwidth]{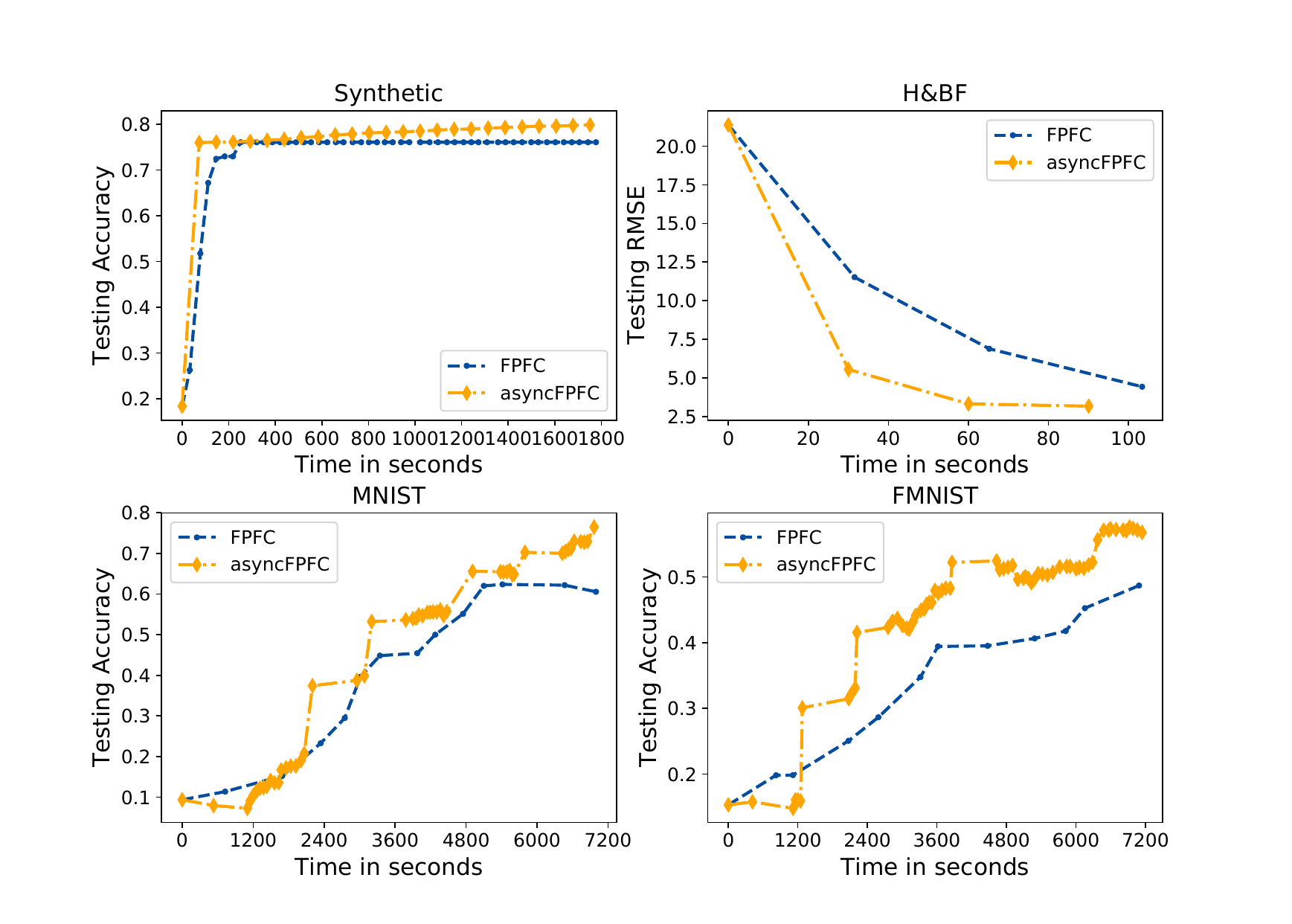}
	\vspace{-0.4cm}
	\caption{Synchronous versus asynchronous update.}
	\label{fig:async}
\end{figure}
\subsubsection{AsyncFPFC}
Note that in {FPFC}, the server needs to select a subset of devices and requires synchronous aggregation at each communication round. However, such algorithms may suffer from synchronization delays due to slower devices, especially in the FL setting. Asynchronous methods have been studied widely due to their advantages over synchronized methods \cite{Quoc2021, Chaturapruek2015, Assran2020, Hamid2021}. They do not need a global synchronous aggregation at each round and the server can perform updates once it receives the information from one device. Although the implementation and debugging of asynchronous algorithms are difficult, such methods can make more efficient use of computing resources, leading to faster convergence. In this part, we propose {asyncFPFC}, an asynchronous variant of {FPFC}. The detailed implementation of asyncFPFC is provided in Appendix A.

To illustrate that our algorithm can employ the asynchronous update strategy and the advantage of {asyncFPFC} over {FPFC}, we conduct diverse experiments using synthetic datasets and three real datasets. The number of computing nodes is 8 for H\&BF data and 20 for synthetic, MNIST, and FMNIST datasets. For {FPFC}, we randomly sample $ 3 $ nodes to perform the synchronous aggregation for H\&BF dataset, and 5 nodes for synthetic, MNIST, and FMNIST. Since the nodes on a computing cloud have similar configurations, we follow the procedure used by \cite{Quoc2021}, where they simulate the devices' heterogeneity of computation and communication by adding variable delays to each node’s update process, such that some nodes have a slower speed to update and upload its models to the server. Empirically, we set the computing node to have a random delay in the range [0, 20] seconds for H\&BF and synthetic, and [0, 800] seconds for MNIST and FMNIST. The results are depicted in Fig. \ref{fig:async}. As it can be seen, {asyncFPFC} outperforms {FPFC} in terms of training time when the computing power of the nodes is heterogeneous.

\subsubsection{Communication-efficient FPFC}

Communication is the key challenge in FL due to the limited bandwidth and device instability. Several different approaches are proposed to address this issue by increasing the computation over communication rounds, such as Local SGD \cite{Dieuleveut2019, Spiridonoff2021}. However, taking many steps to update each device’s local model will lead to immediate local over-fitting. A heuristic communication strategy is to gradually increase the number of local updates. Intuitively, at the beginning of training, devices do not have any prior information about the data distribution, so frequent communication among devices is required to collaboratively train models. As each device updates its local model towards its optima, less communication is required. 
\begin{figure}[h]
	\centering
	\vspace{-0.5cm}
	\includegraphics[width=0.8\columnwidth]{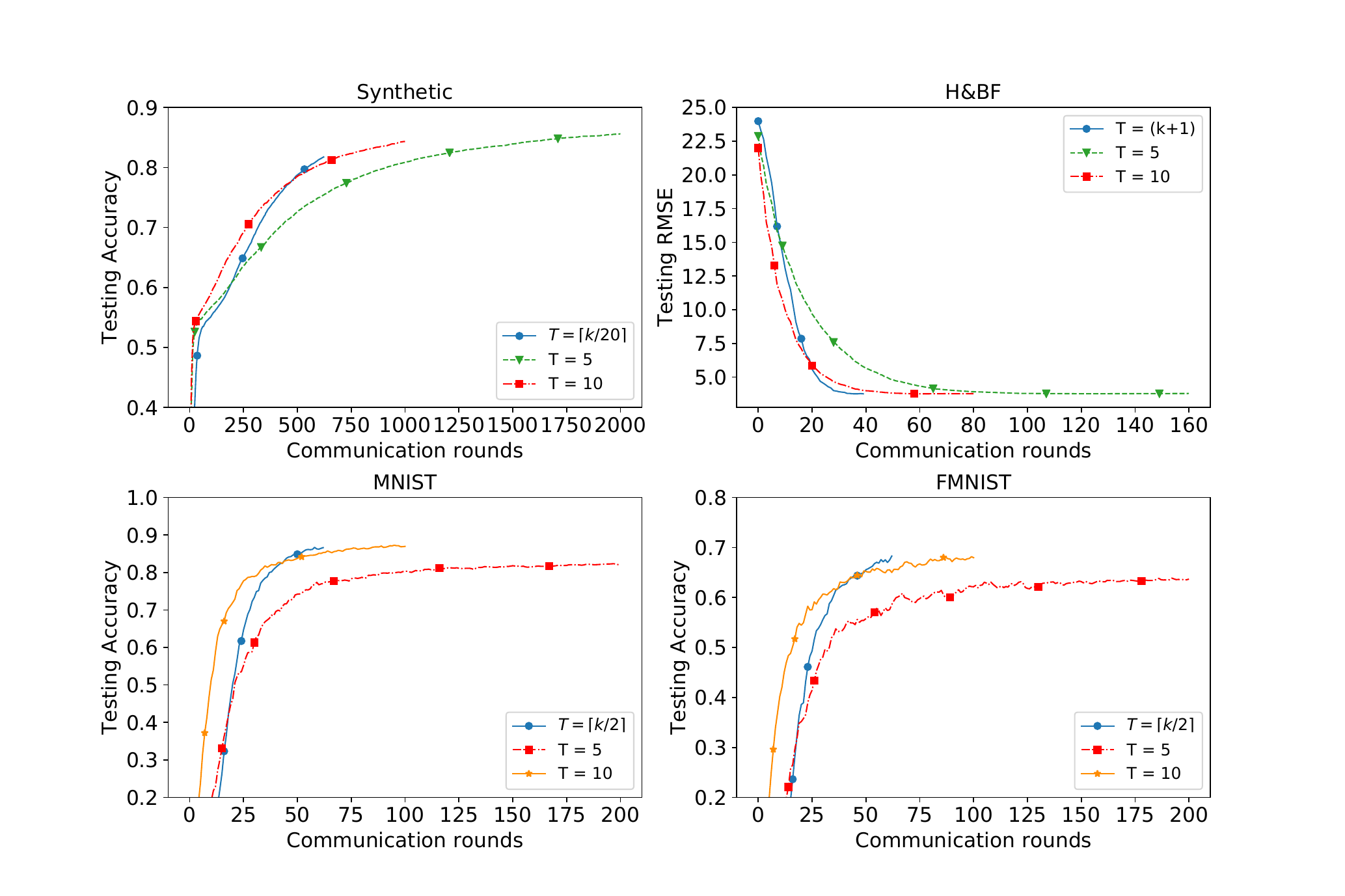}
	\vspace{-0.4cm}
	\caption{Running FPFC with different communication strategies.}
	\label{fig:vary}
\end{figure}

Following the procedure in \cite{Spiridonoff2021}, we compare the performance of FPFC using different communication strategies. For a fair comparison, we have fixed the total number of local iterations $ K \times T $ to a constant value, where $ K $ is the communication rounds and $ T $ is the number of local updates. Specifically, for synthetic and H$ \& $BF datasets, we run FPFC with local GD updates, and the total iterations $ K \times T $ is $ 10^{4} $ for the synthetic dataset, 800 for H$ \& $BF. We compare different strategies, namely, local GD with a constant step $ T = 5 $, local GD with $ T = 10 $, and local GD with a growing step, $ T = \lceil k/20 \rceil$ for synthetic, $ T = k+1 $ for H$ \& $BF. For MNIST and FMNIST, we run FPFC with local mini-batch SGD updates, and the total iterations $ K \times T $ is $ 10^3 $. We compare local SGD with a constant step $ T = 5 $, $ T = 10 $, and local SGD with a growing step, $ T = \lceil k/2 \rceil $. When the total number of iterations $ K \times T $ is fixed, conducting a different number of local updates $ T $ will result in different communication rounds. For example, for the MNIST dataset with  $ K \times T = 10^3 $, applying different strategies such as $ T = 5 $, $ T = 10 $, and $ T = \lceil k/2 \rceil $ will lead to communication rounds of 200, 100, and 63, respectively. Fig. \ref{fig:vary} shows that the communication strategy with increasing local steps outperforms all the other methods. For MNIST, FPFC with a growing local step can achieve $ 86.62\% $ accuracy after $ 62 $ communication rounds, whereas {FPFC} with $ T = 10 $ reaches $ 85.31\% $, and {FPFC} with $ T = 5 $ only attains $ 77.05\% $ after the same rounds. For FMNIST, after $ 62 $ communication rounds, the testing accuracy obtained by {FPFC} with a growing local step is $ 68.28\% $, for {FPFC} with $ T = 10 $ and $ T = 5 $, the accuracies are $ 65.62\% $ and $ 59.34\% $, respectively. We conclude that frequent communication at the beginning is beneficial to reduce the communication rounds. 

\section{Conclusion}
In this paper, we introduced a novel clustered FL method called FPFC to address the challenges of FL where devices are distributed and partitioned into clusters. By employing a nonconvex pairwise fusion penalty, FPFC can identify the cluster structure and estimate the optimal models in each cluster. We have provided convergence guarantees for FPFC with nonconvex objectives and established the statistical convergence rate under a linear model. Additionally, we proposed a warmup strategy for regularization parameter tuning in FL settings. Experimental results have demonstrated the advantages of FPFC, including its superior performance in terms of prediction accuracy, clustering capability, robustness to attacks, generalization ability, and communication efficiency. However, it is worth noting that FPFC has a key limitation in terms of its slow convergence rate due to the utilization of the ADMM scheme in FL. Future directions involve integrating state-of-the-art privacy-preserving methods and analyzing the convergence results of FPFC with the warmup tuning strategy and asyncFPFC algorithm.

\bibliographystyle{plain}
\bibliography{main}

\begin{thebibliography}{10}

\bibitem{Assran2020}
By~Mahmoud Assran, Arda Aytekin, Hamid~Reza Feyzmahdavian, Mikael Johansson,
  and Michael~G. Rabbat.
\newblock Advances in asynchronous parallel and distributed optimization.
\newblock {\em Proceedings of the IEEE}, 108(11):2013--2031, 2020.

\bibitem{Bai2022}
Jianchao Bai, Deren Han, Hao Sun, and Hongchao Zhang.
\newblock Convergence on a symmetric accelerated stochastic admm with larger
  stepsizes.
\newblock {\em CSIAM Transactions on Applied Mathematics}, 3(3):448--479, 2022.

\bibitem{Blanchard2017}
Peva Blanchard, El~Mahdi El~Mhamdi, Rachid Guerraoui, and Julien Stainer.
\newblock Machine learning with adversaries: Byzantine tolerant gradient
  descent.
\newblock In {\em Advances in Neural Information Processing Systems},
  volume~30. Curran Associates, Inc., 2017.

\bibitem{Boyd2011}
Stephen Boyd, Neal Parikh, Eric Chu, Borja Peleato, and Jonathan Eckstein.
\newblock Distributed optimization and statistical learning via the alternating
  direction method of multipliers.
\newblock {\em Foundations and Trends® in Machine Learning}, 3(1):1--122,
  2011.

\bibitem{Breheny2011}
Patrick Breheny and Jian Huang.
\newblock {Coordinate descent algorithms for nonconvex penalized regression,
  with applications to biological feature selection}.
\newblock {\em The annals of applied statistics}, 5(1):232--253, 2011.

\bibitem{Caruana1997}
Rich Caruana.
\newblock {Multitask learning}.
\newblock {\em Machine Learning}, 28(1):41--75, 1997.

\bibitem{Chaturapruek2015}
Sorathan Chaturapruek, John~C Duchi, and Christopher R\'{e}.
\newblock {Asynchronous stochastic convex optimization: the noise is in the
  noise and SGD don't care}.
\newblock In {\em Advances in Neural Information Processing Systems},
  volume~28. Curran Associates, Inc., 2015.

\bibitem{Chen2018}
Tianyi Chen, Georgios Giannakis, Tao Sun, and Wotao Yin.
\newblock {LAG: Lazily Aggregated Gradient for Communication-Efficient
  Distributed Learning}.
\newblock In {\em Advances in Neural Information Processing Systems (NeurIPS
  2018)}, volume~32, pages 5050--5060. Curran Associates, Inc., 2018.

\bibitem{Cho2023}
Yae~Jee Cho, Jianyu Wang, Tarun Chirvolu, and Gauri Joshi.
\newblock Communication-efficient and model-heterogeneous personalized
  federated learning via clustered knowledge transfer.
\newblock {\em IEEE Journal of Selected Topics in Signal Processing},
  17(1):234--247, 2023.

\bibitem{Deng2020}
Yuyang Deng, Mohammad~Mahdi Kamani, and Mehrdad Mahdavi.
\newblock Adaptive personalized federated learning.
\newblock {\em ArXiv}, abs/2003.13461, 2020.

\bibitem{Dieuleveut2019}
Aymeric Dieuleveut and Kumar~Kshitij Patel.
\newblock Communication trade-offs for local-sgd with large step size.
\newblock In {\em Advances in Neural Information Processing Systems},
  volume~32. Curran Associates, Inc., 2019.

\bibitem{Dua2019}
Dheeru Dua and Casey Graff.
\newblock {UCI} machine learning repository, 2017.

\bibitem{Fallah2020}
Alireza Fallah, Aryan Mokhtari, and Asuman Ozdaglar.
\newblock Personalized federated learning with theoretical guarantees: A
  model-agnostic meta-learning approach.
\newblock In {\em Advances in Neural Information Processing Systems},
  volume~33, pages 3557--3568. Curran Associates, Inc., 2020.

\bibitem{Fan2001}
Jianqing Fan and Runze Li.
\newblock {Variable selection via nonconcave penalized likelihood and its
  oracle properties}.
\newblock {\em Journal of the American Statistical Association},
  96(456):1348--1360, 2001.

\bibitem{Hamid2021}
Hamid~Reza Feyzmahdavian and Mikael Johansson.
\newblock {Asynchronous Iterations in Optimization: New Sequence Results and
  Sharper Algorithmic Guarantees}.
\newblock {\em arXiv preprint arXiv:2109.04522}, 2021.

\bibitem{Finn2017}
Chelsea Finn, Pieter Abbeel, and Sergey Levine.
\newblock {Model-Agnostic Meta-Learning for Fast Adaptation of Deep Networks}.
\newblock In {\em Proceedings of the 34 th International Conference on Machine
  Learning}, Sydney, Australia, 2017. PMLR.

\bibitem{Ghosh2020}
Avishek Ghosh, Jichan Chung, Dong Yin, and Kannan Ramchandran.
\newblock {An Efficient Framework for Clustered Federated Learning}.
\newblock In {\em Advances in Neural Information Processing Systems (NeurIPS
  2020)}, volume~34, pages 19586--19597. Curran Associates, Inc., 2020.

\bibitem{Hallac2015}
David Hallac, Jure Leskovec, and Stephen Boyd.
\newblock {Network Lasso: Clustering and Optimization in Large Graphs}.
\newblock In {\em Proceedings of the 21th ACM SIGKDD International Conference
  on Knowledge Discovery and Data Mining}, KDD '15, page 387–396, New York,
  NY, USA, 2015. ACM.

\bibitem{Han2022}
Deren Han.
\newblock {A Survey on Some Recent Developments of Alternating Direction Method
  of Multipliers}.
\newblock {\em Journal of the Operations Research Society of China}, 10:1--52,
  2022.

\bibitem{Hanzely2020}
Filip Hanzely and Peter Richt{\'{a}}rik.
\newblock {Federated learning of a mixture of global and local models}.
\newblock {\em arXiv}, 2020.

\bibitem{HARRISON197881}
David Harrison and Daniel~L Rubinfeld.
\newblock {Hedonic housing prices and the demand for clean air}.
\newblock {\em Journal of Environmental Economics and Management},
  5(1):81--102, 1978.

\bibitem{Hocking2011}
Toby Hocking, Jean-Philippe Vert, Francis Bach, and Armand Joulin.
\newblock {Clusterpath: an Algorithm for Clustering using Convex Fusion
  Penalties}.
\newblock In {\em Proceedings of the 28th International Conference on Machine
  Learning (ICML2011)}, ICML '11, pages 745--752, New York, NY, USA, June 2011.
  ACM.

\bibitem{Huang2021}
Shaoming Huang, Yong Zhou, Ting Wang, and Yuanming Shi.
\newblock Byzantine-resilient federated machine learning via over-the-air
  computation.
\newblock In {\em 2021 IEEE International Conference on Communications
  Workshops (ICC Workshops)}, pages 1--6, 2021.

\bibitem{Huang2021AAAI}
Yutao Huang, Lingyang Chu, Zirui Zhou, Lanjun Wang, Jiangchuan Liu, Jian Pei,
  and Yong Zhang.
\newblock {Personalized Cross-Silo Federated Learning on Non-IID Data}.
\newblock In {\em Proceedings of the AAAI Conference on Artificial
  Intelligence}, volume~35, pages 7865--7873, May 2021.

\bibitem{MAL-083}
Peter Kairouz, H.~Brendan McMahan, Brendan Avent, Aurélien Bellet, Mehdi
  Bennis, Arjun~Nitin Bhagoji, Kallista Bonawitz, Zachary Charles, Graham
  Cormode, Rachel Cummings, Rafael G.~L. D’Oliveira, Hubert Eichner, Salim~El
  Rouayheb, David Evans, Josh Gardner, Zachary Garrett, Adrià Gascón, Badih
  Ghazi, Phillip~B. Gibbons, Marco Gruteser, Zaid Harchaoui, Chaoyang He, Lie
  He, Zhouyuan Huo, Ben Hutchinson, Justin Hsu, Martin Jaggi, Tara Javidi,
  Gauri Joshi, Mikhail Khodak, Jakub Konecný, Aleksandra Korolova, Farinaz
  Koushanfar, Sanmi Koyejo, Tancrède Lepoint, Yang Liu, Prateek Mittal,
  Mehryar Mohri, Richard Nock, Ayfer Özgür, Rasmus Pagh, Hang Qi, Daniel
  Ramage, Ramesh Raskar, Mariana Raykova, Dawn Song, Weikang Song, Sebastian~U.
  Stich, Ziteng Sun, Ananda~Theertha Suresh, Florian Tramèr, Praneeth
  Vepakomma, Jianyu Wang, Li~Xiong, Zheng Xu, Qiang Yang, Felix~X. Yu, Han Yu,
  and Sen Zhao.
\newblock {Advances and Open Problems in Federated Learning}.
\newblock {\em Foundations and Trends® in Machine Learning}, 14(1-2):1--210,
  2021.

\bibitem{Karimireddy20a}
Sai~Praneeth Karimireddy, Satyen Kale, Mehryar Mohri, Sashank Reddi, Sebastian
  Stich, and Ananda~Theertha Suresh.
\newblock {SCAFFOLD}: Stochastic controlled averaging for federated learning.
\newblock In {\em Proceedings of the 37th International Conference on Machine
  Learning}, volume 119 of {\em Proceedings of Machine Learning Research},
  pages 5132--5143. PMLR, 13--18 Jul 2020.

\bibitem{konecny2016}
Jakub Konečný, H.~Brendan McMahan, Felix~X. Yu, Peter Richtárik,
  Ananda~Theertha Suresh, and Dave Bacon.
\newblock {Federated Learning: Strategies for Improving Communication
  Efficiency}.
\newblock {\em arXiv preprint arXiv:1610.05492}, 2016.

\bibitem{Kulkarni2020}
Viraj Kulkarni, Milind Kulkarni, and Aniruddha Pant.
\newblock Survey of personalization techniques for federated learning.
\newblock In {\em 2020 Fourth World Conference on Smart Trends in Systems,
  Security and Sustainability (WorldS4)}, 2020.

\bibitem{Lagakos2006}
S.W. Lagakos.
\newblock {The challenge of subgroup analyses--reporting without distorting}.
\newblock {\em New England Journal of Medicine}, 354(16):1667--1669, 2006.

\bibitem{Lamport1982}
Leslie Lamport, Robert Shostak, and Marshall Pease.
\newblock The byzantine generals problem.
\newblock {\em ACM Trans. Program. Lang. Syst.}, 4(3):382--401, jul 1982.

\bibitem{Lecun1998}
Y.~Lecun, L.~Bottou, Y.~Bengio, and P.~Haffner.
\newblock {Gradient-based learning applied to document recognition}.
\newblock In {\em Proceedings of the IEEE}, volume~86, pages 2278--2324, 1998.

\bibitem{Li2003}
Qing Li and Byeong~Man Kim.
\newblock {Clustering approach for hybrid recommender system}.
\newblock In {\em Proceedings IEEE/WIC International Conference on Web
  Intelligence (WI 2003)}, pages 33--38, Halifax, NS, Canada, 2003.

\bibitem{Federated2020}
T.~Li, A.~K. Sahu, A.~Talwalkar, and V.~Smith.
\newblock {Federated Learning: Challenges, Methods, and Future Directions}.
\newblock {\em IEEE Signal Processing Magazine}, 37(3):50--60, 2020.

\bibitem{Li2018}
Tian Li, Anit~Kumar Sahu, Manzil Zaheer, Maziar Sanjabi, Ameet Talwalkar, and
  Virginia Smith.
\newblock {Federated Optimization in Heterogeneous Networks}.
\newblock In {\em Proceedings of machine learning and systems}, volume~2, pages
  429--450, 2020.

\bibitem{Zhang2020}
Xiang Li, Kaixuan Huang, Wenhao Yang, Shusen Wang, and Zhihua Zhang.
\newblock {On the Convergence of FedAvg on Non-IID Data}.
\newblock In {\em International Conference on Learning Representations}, 2020.

\bibitem{liang2020think}
Paul~Pu Liang, Terrance Liu, Liu Ziyin, Ruslan Salakhutdinov, and
  Louis-Philippe Morency.
\newblock Think locally, act globally: Federated learning with local and global
  representations.
\newblock {\em arXiv preprint arXiv:2001.01523}, 2020.

\bibitem{Lin2022}
Siyun Lin, Yuze Han, Xiang Li, and Zhihua Zhang.
\newblock {Personalized Federated Learning towards Communication Efficiency,
  Robustness and Fairness}.
\newblock In {\em Advances in Neural Information Processing Systems}, 2022.

\bibitem{Liu2021}
Y.~Liu, Y.~Xu, and W.~Yin.
\newblock {Acceleration of Primal-Dual Methods by Preconditioning and Simple
  Subproblem Procedures}.
\newblock {\em Journal of Scientific Computing}, 86(21), 2021.

\bibitem{Ma2019}
S.~Ma, J.~Huang, Z.~Zhang, and M.~Liu.
\newblock {Exploration of heterogeneous treatment effects via concave fusion}.
\newblock {\em The International Journal of Biostatistics}, 16(1):20180026,
  2019.

\bibitem{Ma2017A}
Shujie Ma and Jian Huang.
\newblock {A concave pairwise fusion approach to subgroup analysis}.
\newblock {\em Journal of the American Statistical Association.},
  112(517):410--423, 2017.

\bibitem{Mansour2020}
Yishay Mansour, Mehryar Mohri, Jae Ro, and Ananda~Theertha Suresh.
\newblock {Three Approaches for Personalization with Applications to Federated
  Learning}.
\newblock {\em arXiv preprint arXiv:2002.10619}, 2020.

\bibitem{Marfoq2021}
O.~Marfoq, G.~Neglia, A.~Bellet, L.~Kameni, and R.~Vidal.
\newblock {Federated Multi-Task Learning under a Mixture of Distributions}.
\newblock In {\em Advances in Neural Information Processing Systems (NeurIPS
  2021)}, volume~35, 2021.

\bibitem{Mc2017}
B.~McMahan, E.~Moore, D.~Ramage, S.~Hampson, , and B.~A. y~Arcas.
\newblock {Communication-efficient learning of deep networks from decentralized
  data}.
\newblock In {\em Proceedings of the 20th International Conference on
  Artificial Intelligence and Statistics (AISTATS)}, pages 1273--1282, Fort
  Lauderdale, Florida, USA, 2017.

\bibitem{Miao2022}
Yinbin Miao, Ziteng Liu, Hongwei Li, Kim-Kwang~Raymond Choo, and Robert~H.
  Deng.
\newblock Privacy-preserving byzantine-robust federated learning via blockchain
  systems.
\newblock {\em IEEE Transactions on Information Forensics and Security},
  17:2848--2861, 2022.

\bibitem{mohri19a}
Mehryar Mohri, Gary Sivek, and Ananda~Theertha Suresh.
\newblock {Agnostic Federated Learning}.
\newblock In {\em Proceedings of the 36th International Conference on Machine
  Learning}, volume~97 of {\em Proceedings of Machine Learning Research}, pages
  4615--4625. PMLR, 2019.

\bibitem{Paszke2019}
Adam Paszke, Sam Gross, Francisco Massa, Adam Lerer, James Bradbury, Gregory
  Chanan, Trevor Killeen, Zeming Lin, Natalia Gimelshein, Luca Antiga, Alban
  Desmaison, Andreas K{\"{o}}pf, Edward Yang, Zach DeVito, Martin Raison,
  Alykhan Tejani, Sasank Chilamkurthy, Benoit Steiner, Lu~Fang, Junjie Bai, and
  Soumith Chintala.
\newblock {PyTorch: An imperative style, high-performance deep learning
  library}.
\newblock volume~33, pages 8024--8035, 2019.

\bibitem{Pelckmans2005}
K.~Pelckmans, J.~D. Brabanter, Jak Suykens, and B.~D. Moor.
\newblock {Convex Clustering Shrinkage}.
\newblock In {\em PASCAL Workshop on Statistics and Optimization of Clustering
  Workshop}, 2005.

\bibitem{Polyak1987}
Boris~T. Polyak.
\newblock {\em {Introduction to Optimization}}.
\newblock Optimization Software, Inc., New Yock, 1987.

\bibitem{Portnoy2022}
Amit Portnoy, Yoav Tirosh, and Danny Hendler.
\newblock Towards federated learning with byzantine-robust client weighting.
\newblock {\em Applied Sciences}, 12(17), 2022.

\bibitem{Sattler2021}
Felix Sattler, Klaus-Robert Müller, and Wojciech Samek.
\newblock {Clustered Federated Learning: Model-Agnostic Distributed Multitask
  Optimization Under Privacy Constraints}.
\newblock {\em IEEE Transactions on Neural Networks and Learning Systems},
  32(8):3710--3722, 2021.

\bibitem{shenaj2023ladd}
Donald Shenaj, Eros Fan\`i, Marco Toldo, Debora Caldarola, Antonio Tavera,
  Umberto Michieli, Marco Ciccone, Pietro Zanuttigh, and Barbara Caputo.
\newblock "learning across domains and devices: Style-driven source-free domain
  adaptation in clustered federated learning.
\newblock In {\em Winter Conference on Applications of Computer Vision (WACV)}.
  IEEE, 2023.

\bibitem{Smith2017}
Virginia Smith, Chao-Kai Chiang, Maziar Sanjabi, and Ameet~S Talwalkar.
\newblock {Federated Multi-Task Learning}.
\newblock In {\em Advances in Neural Information Processing Systems},
  volume~30. Curran Associates, Inc., 2017.

\bibitem{Spiridonoff2021}
Artin Spiridonoff, Alex Olshevsky, and Yannis Paschalidis.
\newblock Communication-efficient sgd: From local sgd to one-shot averaging.
\newblock In {\em Advances in Neural Information Processing Systems},
  volume~34, pages 24313--24326. Curran Associates, Inc., 2021.

\bibitem{Stich2018}
Sebastian~Urban Stich.
\newblock {Local SGD Converges Fast and Communicates Little}.
\newblock In {\em ICLR 2019-International Conference on Learning
  Representations}, New Orleans, USA, 2019.

\bibitem{Dinh2020}
Canh T.~Dinh, Nguyen Tran, and Josh Nguyen.
\newblock Personalized federated learning with moreau envelopes.
\newblock In H.~Larochelle, M.~Ranzato, R.~Hadsell, M.~F. Balcan, and H.~Lin,
  editors, {\em Advances in Neural Information Processing Systems}, volume~33,
  pages 21394--21405. Curran Associates, Inc., 2020.

\bibitem{Quoc2021}
Q.~Tran-Dinh, N.~H. Pham, D.~T. Phan, and L.~M. Nguyen.
\newblock {FedDR -- Randomized Douglas-Rachford Splitting Algorithms for
  Nonconvex Federated Composite Optimization}.
\newblock In {\em Advances in Neural Information Processing Systems (NeurIPS
  2021)}, volume~35, 2021.

\bibitem{Vahidian2023}
Saeed Vahidian, Mahdi Morafah, Weijia Wang, Vyacheslav Kungurtsev, Chen Chen,
  Mubarak Shah, and Bill Lin.
\newblock Efficient distribution similarity identification in clustered
  federated learning via principal angles between client data subspaces.
\newblock {\em Proceedings of the AAAI Conference on Artificial Intelligence},
  37:10043--10052, 06 2023.

\bibitem{Laurens2008}
Laurens {Van Der Maaten} and Geoffrey Hinton.
\newblock Visualizing data using t-sne.
\newblock {\em Journal of Machine Learning Research}, 9(2605):2579--2605, 2008.

\bibitem{Wang2019}
Kangkang Wang, Rajiv Mathews, Chlo{\'{e}} Kiddon, Hubert Eichner,
  Fran{\c{c}}oise Beaufays, and Daniel Ramage.
\newblock {Federated evaluation of on-device personalization}.
\newblock {\em arXiv}, 2019.

\bibitem{Xiao2017}
Han Xiao, Kashif Rasul, and Roland Vollgraf.
\newblock {Fashion-mnist:a novel image dataset for benchmarking machine
  learning algorithms}.
\newblock {\em arXiv preprint arXiv:11708.07747}, 2017.

\bibitem{Yu_Liu_Sun_Wang_2023}
Xue Yu, Ziyi Liu, Yifan Sun, and Wu~Wang.
\newblock Clustered federated learning for heterogeneous data (student
  abstract).
\newblock {\em Proceedings of the AAAI Conference on Artificial Intelligence},
  37(13):16378--16379, Sep. 2023.

\bibitem{Zantedeschi2019}
Valentina Zantedeschi, Aur\'elien Bellet, and Marc Tommasi.
\newblock Fully decentralized joint learning of personalized models and
  collaboration graphs.
\newblock In {\em Proceedings of the Twenty Third International Conference on
  Artificial Intelligence and Statistics}, volume 108 of {\em Proceedings of
  Machine Learning Research}, pages 864--874. PMLR, 26--28 Aug 2020.

\bibitem{Zhang2021FedPD}
X.~Zhang, M.~Hong, S.~Dhople, W.~Yin, and Y.~Liu.
\newblock {FedPD: A Federated Learning Framework With Adaptivity to Non-IID
  Data}.
\newblock {\em IEEE Transactions on Signal Processing}, 69:6055--6070, 2021.

\bibitem{Zhang2023}
Yu~Zhang, Duo Liu, Moming Duan, Li~Li, Xianzhang Chen, Ao~Ren, Yujuan Tan, and
  Chengliang Wang.
\newblock Fedmds: An efficient model discrepancy-aware semi-asynchronous
  clustered federated learning framework.
\newblock {\em IEEE Transactions on Parallel and Distributed Systems},
  34(3):1007--1019, 2023.

\end{thebibliography}

\appendix
\onecolumn

\part*{\Large Appendix\\Clustered Federated Learning based on Nonconvex Pairwise Fusion}

\section{Details of Algorithm 1}
\setcounter{equation}{0}
\renewcommand\theequation{A.\arabic{equation}}

\label{app:A}
We first reformulate (4) into  a constrained problem. Next, we apply the classical DR splitting scheme to the constrained reformulation and present the standard ADMM algorithm. Then, we revise the $\omega$-step in a distributed and inexact manner. Finally, we randomize its local update step to obtain the final algorithm. \\

\noindent{\bf Constrained reformulation.} 
By introducing a set of new parameters $\bm\theta_{ij}=\bm\omega_i-\bm\omega_j$, we can rewrite (4) as the following constrained optimization problem:
\begin{equation*}
\begin{aligned}
&\min_{\bm\omega,\bm\theta} \sum_{i=1}^m f_i(\bm\omega_i)+\frac{1}{2m}\sum_{i=1}^m\sum_{j=1}^m \tilde{g}(\|\theta_{ij}\|)\\
&\text{s.t.}\ \bm\omega_i-\bm\omega_j = \bm \theta_{ij},\ i,j\in[m].	
\end{aligned}
\end{equation*}
\noindent{\bf Standard ADMM algorithm.} 
The solution of the aforementioned problem can be obtained by minimizing the following augmented Lagrangian:
\begin{equation}
\begin{aligned}
\tilde{\mathcal{L}}_{\rho}(\bm\omega,\bm{\theta}, v)&= f(\bm{\omega})+ \frac{1}{2m}\sum_{i=1}^m\sum_{j=1}^m \big[\tilde{g}(\|\theta_{ij}\|) +\langle v_{ij},\bm\omega_i-\bm\omega_j-\theta_{ij}\rangle +\frac{\rho}{2}\|\bm\omega_i-\bm\omega_j-\theta_{ij}\|^2\big],
\end{aligned}	
\end{equation}
where $f(\bm{\omega}) = \sum_{i=1}^mf_i(\bm\omega_i) $, $\bm\theta=\{\bm\theta_{ij}^\top, i,j\in [m]\}^\top$, $v=\{v_{ij}^\top, i,j \in [m]\}^\top$, and $\rho>0$ is the penalty parameter. We apply the DR splitting method adopted by ADMM to \eqref{eq:lag} to obtain the following iterations:
\begin{small}
	\begin{align}
	\bm\omega^{k+1} &=\arg\min_{\bm\omega}\sum_{i=1}^m f_i(\bm\omega_i)+\frac{\rho}{2m}\sum_{i=1}^m\sum_{j=1}^m \|\bm\omega_i-\bm\omega_j-\bm\theta_{ij}^k+\frac{v_{ij}^k}{\rho}\|^2\nonumber\\
	\bm{\theta}_{ij}^{k+1} &= \arg\min\limits_{\bm{\theta}_{ij}} \tilde{g}(\|\theta_{ij}\|) + \frac{\rho}{2} \|\bm{\omega}_i^{k+1} - \bm{\omega}_j^{k+1}+\frac{v_{ij}^{k}}{\rho} - \bm{\theta}_{ij}\|^2 \label{app:theta}\\
	v_{ij}^{k+1} &= v_{ij}^{k} + \rho( \bm{\omega}_i^{k+1} - \bm{\omega}_j^{k+1}-\bm{\theta}_{ij}^{k+1}), \label{eq:v}
	\end{align} 
\end{small}
where the second and third steps are conducted for all $i<j$. For $i>j$, the updated quantities can be directly obtained via $\bm\theta_{ij}=-\bm\theta_{ji}$ and $ v_{ij}=- v_{ji}$, and for $i=j$, $\bm\theta_{ii}= v_{ii}= 0$. \\

\noindent{\bf Distributed and inexact update of $\omega$.}
In practice, the $\bm\omega$-update step, can be conducted in a distributed manner, i.e., $\bm\omega^{k+1}$ can be obtained by running the following sub-iteration for sufficiently many steps, for $c=0,\ldots, C-1 $:
\begin{equation}
\label{eq:subiteration}
\small
\bm\omega_i^{k,c+1}=\arg\min_{\bm \omega_i} f_i(\bm\omega_i)+\frac{\rho}{2m}\sum_{j=1}^m \|\bm\omega_i-\bm\omega_j^{k,c}-\bm\theta_{ij}^k+\frac{v_{ij}^k}{\rho}\|^2, \ i\in [m]
\end{equation} 
where $\bm\omega_i^{k,0}=\bm\omega_i^k$ and $\bm\omega_i^{k+1}\approx \bm\omega_i^{k,C}$ for $i\in [m]$. However, this parallel approach has three main drawbacks in the FL setting. First, the central server has to send all model parameters to all devices at each sub-iteration, resulting in large communication cost, especially when the number of devices or/and sub-iteration rounds is large. Second, each step must solve a minimization problem exactly, which is unpractical in the FL setup due to the heterogeneous computational power of devices. Lastly, each device must know the current model parameters of all other devices, which may result in privacy leaks. 

As such, to reduce communication cost, we only conduct sub-iteration \eqref{eq:subiteration} for one step, i.e., $C=1$, instead of multiple steps. Considering the system heterogeneity of a federated network, we allow the minimization $\eqref{eq:subiteration}$ to be solved approximately. Specifically, we define $h_i(\omega_i)=f_i(\omega_i)+\frac{\rho}{2m}\sum_{j=1}^m\|\omega_i-\omega_j^{k}-\theta^k_{ij}+\frac{v_{ij}^k}{\rho}\|^2$. We replace the exact
solution $\hat{\omega}_i^{k+1}= \arg\min\limits_{{\omega_i}} h_i(\omega_i)$ by an approximated solution $\omega_i^{k+1}\approx h_i(\omega_i)$ up to a given accuracy $\epsilon_i$ such that $\|\omega_i^{k+1}-\hat{\omega}_i^{k+1}\|\leq \epsilon \|\omega_i^{k+1}-\omega_i^k\|$. The approximated solution $\omega_i^{k+1}$ can be obtained by running $T_i$ epochs of local gradient decent updates 
\begin{equation} 
\small
\label{eq:local update1}
\bm{\omega}_{i}^{k,t+1} =  \bm{\omega}_{i}^{k,t} - \alpha[\nabla f_i(\bm{\omega}_{i}^{k,t})+ \rho( \bm{\omega}_{i}^{k,t} - \bm{\zeta}_{i}^{k})], t=0,\ldots, T_i-1,
\end{equation}
where $\alpha$ is the step size, $\bm\zeta_i^{k}=\frac{1}{m}\sum_{j=1}^m (\bm\omega_j^k+\bm\theta_{ij}^k-\frac{v_{ij}^k}{\rho})$ and $\bm\omega_i^{k,0}=\bm\omega_i^k$. We set $\bm \omega^{k+1}=\bm\omega^{k,T_i}$. Because $\bm\zeta_i^{k}$ is an average of parameters, the $i$-th device cannot infer the model parameters or private data of any other devices, which preserves the privacy of devices. In practice, $\nabla f_i(\bm{\omega}_{i}^{k,t})$ can be full or stochastic gradient of the local empirical loss $f_i$ at $\bm{\omega}_{i}^{k,t}$. 

Putting \eqref{app:theta}, \eqref{eq:v}, and  \eqref{eq:local update1} together, we obtain the following distributed and inexact ADMM variant:
\begin{equation}
\small
\begin{aligned}
\begin{cases}
\label{eq:parallel}
&\bm\omega_i^{k,0} =\bm\omega_i^k, \forall i\in [m]\\
&\bm{\omega}_{i}^{k,t+1} =  \bm{\omega}_{i}^{k,t} - \alpha[\nabla f_i(\bm{\omega}_{i}^{k,t})+ \rho( \bm{\omega}_{i}^{k,t} - \bm{\zeta}_{i}^{k})], t=0,\ldots, T_i-1,\\
&\bm\omega_i^{k+1} =\bm\omega_i^{k,T_i}, \forall i\in [m]\\
&\bm{\theta}_{ij}^{k+1} = \arg\min\limits_{\bm{\theta}_{ij}} \tilde{g}(\|\theta_{ij}\|) + \frac{\rho}{2} \|\bm{\omega}_i^{k+1} - \bm{\omega}_j^{k+1}+\frac{v_{ij}^{k}}{\rho} - \bm{\theta}_{ij}\|^2, \\
&v_{ij}^{k+1} = v_{ij}^{k} + \rho( \bm{\omega}_i^{k+1} - \bm{\omega}_j^{k+1}-\bm{\theta}_{ij}^{k+1}), 
\end{cases}
\end{aligned}
\end{equation}
and $\bm\theta_{ji}^{k+1}=-\bm\theta_{ij}^{k+1}, v_{ji}^{k+1}=- v_{ij}^{k+1}, \forall i<j  $. \\

\noindent{\bf Randomized-block update.} Instead of performing updates for all devices as in \eqref{eq:parallel}, we adopt a randomized-block strategy, where only a subset of devices $\mathcal{A}_k$ perform local updates then send their local models to server.  
Then, for the inactive devices $i\notin \mathcal{A}_k$, the local model is unchanged, i.e., $\omega_i^{k+1}=\omega_i^k$. Hence, there is no need for communication between these inactive devices and the server. 
Moreover, for any two inactive devices $i$ and $j$, the parameters $\theta_{ij}$ and $v_{ij}$ are also unchanged, i.e., $\theta_{ij}^{k+1}=\theta_{ij}^k$ and $v_{ij}^{k+1}=v_{ij}^k$. As a result, the server only needs to update the parameters between two devices with at least one active device. \\

\noindent{\bf Full version of Algorithm 1.} Finally, we obtain the full algorithm of {FPFC} (Algorithm \ref{alg:2}). For completeness, we also provide the algorithm of {FPFC-$\ell_{1}$}. \\
\begin{algorithm}
	\caption{FPFC and FPFC-$\ell_{1}$}
	\label{alg:2}
	\begin{algorithmic}
		\STATE {\bfseries Input:} number of devices $m$; number of communication rounds $K$; number of local gradient epochs $T_i$, $i\in [m]$; stepsize $\alpha$; parameters of $\tilde{g}(\cdot)$: $\xi$, $\lambda$, $a$; and $\rho$.
		\STATE {\bfseries Output:} $ \{\bm\omega_{i}^{K} \}_{i \in [m]} $.
		\STATE Initialize each device $i\in [m]$ with $\bm\omega_1^0=\ldots=\bm\omega_m^0$.  
		Initialize the server with $\bm\zeta_i^0=\bm\omega_i^0$ for $i\in[m]$, $\bm\theta_{ij}^0= 0$ and $v_{ij}^0= 0$ for $i,j\in [m]$.  
		\FOR{$k=0$ {\bfseries to} $K-1$}
		\STATE 1: \textbf{[Active devices]} Server randomly selects a subset of devices $\mathcal{A}_{k}$.
		\STATE 2: \textbf{[Communication]} Server sends $ \bm{\zeta}_{i}^{k}$ to each device $i\in \mathcal{A}_k$.
		\STATE 3: \textbf{[Local update]} For each device $ i \in \mathcal{A}_{k}$: 
		\item
		\begin{small}
			$\bm\omega_i^{k,0}= \bm\omega_i^{k}$,
		\end{small}
		\FOR{$ t = 0$ {\bfseries to} $T_i-1 $}
		\item 
		\vspace{-4mm}
		\begin{small}
			\begin{equation*}
			\bm{\omega}_{i}^{k,t+1} =  \bm{\omega}_{i}^{k,t} - \alpha[\nabla f_i(\bm{\omega}_{i}^{k,t})+ \rho( \bm{\omega}_{i}^{k,t} - \bm{\zeta}_{i}^{k})]
			\end{equation*}
		\end{small}
		\ENDFOR
		\item
		\begin{small}
			$\bm\omega_i^{k+1} = \bm\omega_i^{k,T_i}$.
		\end{small}\\
		Each device $i\notin \mathcal{A}_k$ does nothing, i.e., $\bm\omega_i^{k+1}=\bm\omega_i^k$. 
		\STATE 4: \textbf{[Communication]} Each device $i\in \mathcal{A}_k$ sends $ \bm{\omega}_{i}^{k+1} $ back to the server.
		\STATE 5: \textbf{[Server update]} For $ i \in \mathcal{A}_{k} $ or $ j \in \mathcal{A}_{k} $ ($ i<j $):
		\item Server computes $\bm{\delta}_{ij}^{k+1} = \bm{\omega}_{i}^{k+1} - \bm{\omega}_{j}^{k+1} +\frac{v_{ij}^{k}}{\rho}$ and updates
		\item  {FPFC}:  
		\begin{equation*}
		\small
		\hspace{-4mm}
		\bm{\theta}_{ij}^{k+1} = \begin{cases} 
		\frac{\xi \rho}{\lambda + \xi\rho} \bm{\delta}_{ij}^{k+1}, &\|\bm{\delta}^{k+1}_{ij}\|  \leq \xi + \frac{\lambda}{\rho} \\
		(1 - \frac{\lambda}{\rho \|\bm{\delta}_{ij}^{k+1}\|}) \bm{\delta}_{ij}^{k+1}, & \xi + \frac{\lambda}{\rho} \leq  \|\bm{\delta}_{ij}^{k+1}\| \leq \lambda + \frac{\lambda}{\rho} \\  
		\frac{\max\{0,1 - \frac{a \lambda}{(a-1)\rho \|\bm{\delta}_{ij}^{k+1}\|}\}}{1-1/[(a-1)\rho]}\bm{\delta}_{ij}^{k+1}, & \lambda + \frac{\lambda}{\rho} < \|\bm{\delta}_{ij}^{k+1}\| \leq  a\lambda \\ \bm{\delta}_{ij}^{k+1}, & \|\bm{\delta}_{ij}^{k+1}\| > a \lambda. \end{cases}
		\end{equation*}
		\item  {FPFC-$\ell_{1}$}:  
		\begin{equation*}
		\small
		\hspace{-4mm}
		\bm{\theta}_{ij}^{k+1} = \max\{0,1 - \frac{\lambda}{\rho \|\bm{\delta}_{ij}^{k+1}\|}\}\bm{\delta}_{ij}^{k+1}.
		\end{equation*}
		
		\item  $v_{ij}^{k+1} = v_{ij}^{k} + \rho(\bm{\omega}_{i}^{k+1} - \bm{\omega}_{j}^{k+1} - \bm{\theta}_{ij}^{k+1}).$
		\item   $ \bm{\theta}_{ji}^{k+1} = -\bm{\theta}_{ij}^{k+1} $, \ $ v_{ji}^{k+1} =- v_{ij}^{k+1} $.
		\item For $ i \notin \mathcal{A}_{k}  $ and $ j \notin \mathcal{A}_{k} $: $ \bm{\theta}_{ij}^{k+1} = \bm{\theta}_{ij}^{k}$, $ v_{ij}^{k+1} =v_{ij}^k $.
		\item For $ i \in [m] $, server updates $\bm\zeta_i^{k+1}=\frac{1}{m}\sum_{j=1}^m(\bm\omega_j^{k+1}+\bm\theta_{ij}^{k+1}-\frac{v_{ij}^{k+1}}{\rho})$.
		\ENDFOR
	\end{algorithmic}
\end{algorithm}

\noindent{\bf Algorithm of asyncFPFC.}
\label{app:A.3}
Here, we provide the full algorithm of {asyncFPFC} (Algorithm \ref{alg:3}). 

\begin{algorithm}[htb]
	\caption{Asynchronous FPFC (asyncFPFC)}
	\label{alg:3}
	\begin{algorithmic}
		\STATE {\bfseries Input:} number of devices $m$; number of communication rounds $K$; number of local gradient epochs $T_i$, $i\in [m]$; stepsize $\alpha$; parameters of $\tilde{g}(\cdot)$: $\xi$, $\lambda$, $a$; and $\rho$.
		\STATE {\bfseries Output:} $ \{\bm\omega_{i}^{K} \}_{i \in [m]} $.
		\STATE Initialize each device $i\in [m]$ with $\bm\omega_1^0=\ldots=\bm\omega_m^0$.  
		Initialize the server with $\bm\zeta_i^0=\bm\omega_i^0$ for $i\in[m]$, $\bm\theta_{ij}^0= 0$ and $v_{ij}^0= 0$ for $i,j\in [m]$.  
		\FOR{$k=0$ {\bfseries to} $K-1$}
		\STATE 1: \textbf{[Communication]} Device $i_{k}$ completes its local update and then sends $ \bm{\omega}_{i_{k}}^{k+1} $ back to the server. 	Other devices maintain $\bm\omega_i^{k+1}=\bm\omega_i^k$ for $ i \ne i_{k} $. 
		\STATE 2: \textbf{[Server update]} Server computes $\bm{\delta}_{i_{k}j}^{k+1} = \bm{\omega}_{i_{k}}^{k+1} - \bm{\omega}_{j}^{k+1} +\frac{v_{i_{k}j}^{k}}{\rho}$ and updates
		\begin{equation*}
		\hspace{-9mm}
		\bm{\theta}_{i_{k}j}^{k+1} = \begin{cases} 
		\frac{\xi \rho}{\lambda + \xi\rho} \bm{\delta}_{i_{k}j}^{k+1}, &\|\bm{\delta}^{k+1}_{i_{k}j}\|  \leq \xi + \frac{\lambda}{\rho} \\
		(1 - \frac{\lambda}{\rho \|\bm{\delta}_{i_{k}j}^{k+1}\|}) \bm{\delta}_{i_{k}j}^{k+1}, & \xi + \frac{\lambda}{\rho} \leq  \|\bm{\delta}_{i_{k}j}^{k+1}\| \leq \lambda + \frac{\lambda}{\rho} \\  
		\frac{\max\{0,1 - \frac{a \lambda}{(a-1)\rho \|\bm{\delta}_{i_{k}j}^{k+1}\|}\}}{1-1/[(a-1)\rho]}\bm{\delta}_{i_{k}j}^{k+1}, & \lambda + \frac{\lambda}{\rho} < \|\bm{\delta}_{i_{k}j}^{k+1}\| \leq  a\lambda \\ \bm{\delta}_{i_{k}j}^{k+1}, & \|\bm{\delta}_{i_{k}j}^{k+1}\| > a \lambda. \end{cases}
		\end{equation*}
		
		\item  $v_{i_{k}j}^{k+1} = v_{i_{k}j}^{k} + \rho(\bm{\omega}_{i_{k}}^{k+1} - \bm{\omega}_{j}^{k+1} - \bm{\theta}_{i_{k}j}^{k+1}).$
		\item   $ \bm{\theta}_{ji_{k}}^{k+1} = -\bm{\theta}_{i_{k}j}^{k+1} $, \ $ v_{ji_{k}}^{k+1} =- v_{i_{k}j}^{k+1} $.
		\item For other devices $ i \ne i_{k} $: $ \bm{\theta}_{ij}^{k+1} = \bm{\theta}_{ij}^{k}$, $ v_{ij}^{k+1} =v_{ij}^k $.
		\item Server updates $\bm\zeta_{i_{k}}^{k+1}=\frac{1}{m}\sum_{j=1}^m(\bm\omega_j^{k+1}+\bm\theta_{i_{k}j}^{k+1}-\frac{v_{i_{k}j}^{k+1}}{\rho})$.
		
		\STATE 3: \textbf{[Communication]} Server sends $ \bm{\zeta}_{i_{k}}^{k+1}$ to device $i_{k}$.		
		\ENDFOR
	\end{algorithmic}
\end{algorithm}

\section{Proof of Theorem 1}
\label{app:B}
\setcounter{equation}{0}
\renewcommand\theequation{B.\arabic{equation}}
\begin{proof}
	Because $f_i$ is $L_f$-smooth and $\mu =\rho-L_->0$, $h_i$ is $L_h$-smooth with $L_h=L_f+\rho$ and $\mu$-strongly convex. By the standard results of gradient descent \cite{Polyak1987}, after $ t $ epochs, we have 
	\begin{equation*}
	\|\bm{\omega}_{i}^{k+1,t+1} - \hat{\bm{\omega}}_{i}^{k+1} \|^{2} \leq c^{t} \|\bm{\omega}_{i}^{k}- \hat{\bm{\omega}}_{i}^{k+1}\|^2, 
	\end{equation*} 
	where $ \hat{\bm{\omega}}_{i}^{k+1} $ is the exact solution of $ \min_{\bm{\omega}_{i}} h_{i}(\bm{\omega}_{i}) $ and $ c = 1-\alpha \frac{2\mu L_{f}}{L_{f}+\rho+\mu}$. Then, we get 
	\begin{equation*}
	\small
	\begin{aligned}
	\|\bm{\omega}_{i}^{k+1,t+1} - \hat{\bm{\omega}}_{i}^{k+1} \| \leq (\sqrt{c})^{t} \|\bm{\omega}_{i}^{k}- \hat{\bm{\omega}}_{i}^{k+1}\| \leq (\sqrt{c})^{t} (\|\bm{\omega}_{i}^{k}- \bm{\omega}_{i}^{k+1,t+1}\| + \|\bm{\omega}_{i}^{k+1,t+1}- \hat{\bm{\omega}}_{i}^{k+1}\|).
	\end{aligned}
	\end{equation*}
	Thus, we obtain 
	\begin{equation}
	\small
	\label{eq:omega_i}
	\|\bm{\omega}_{i}^{k+1,t+1} - \hat{\bm{\omega}}_{i}^{k+1}\| \leq \frac{(\sqrt{c})^{t}}{1 - (\sqrt{c})^{t}} \|\bm{\omega}_{i}^{k+1,t+1} -\bm{\omega}_{i}^{k}\|.
	\end{equation}
	For any $\epsilon_i >0$, as long as $ t \geq \frac{2\log(\frac{\epsilon_i}{1+\epsilon_i})}{\log c}$, we have  
	\begin{equation}
	\small
	\label{eq:epsilon1}
	\frac{(\sqrt{c})^{t}}{1 - (\sqrt{c})^{t}} \leq \epsilon_i.
	\end{equation}
	The proof can be completed by combing \eqref{eq:omega_i} and \eqref{eq:epsilon1}. 
\end{proof}

\section{Proof of Theorem 2}
\label{app:C}
\setcounter{equation}{0}
\renewcommand\theequation{C.\arabic{equation}}
To save space, we introduce a matrix
\begin{equation*}
\mathbf{A}_{ij} = [\mathbf{0}_{d\times (i-1)d}, \mathbf{I}_{d},\mathbf{0}_{(j-i)d}, -\mathbf{I}_{d} , \mathbf{0}_{d \times (m - j-1)d}] \in \mathbb{R}^{d \times md},
\end{equation*} 
and then we have $\bm\omega_i-\bm\omega_j=\mathbf{A}_{ij}\bm\omega$ for $i,j\in [m]$. We define 
\begin{align}
\bm{\eta}_{ij}^{k} & = v_{ij}^{k+1} + \rho (\bm{\theta}_{ij}^{k+1} - \bm{\theta}_{ij}^{k} + \bm{\omega}_{j}^{k+1} - \bm{\omega}_{j}^{k}), \label{eq:eta}\\
\tilde{\bm{\eta}}_{ij}^{k} & = v_{ij}^{k+1} + \rho (\bm{\theta}_{ij}^{k+1} - \bm{\theta}_{ij}^{k}), \quad i, j \in [m]. \label{eq:tilde eta}
\end{align}
Note that $ \bm{\eta}_{ij}^{k} = \tilde{\bm{\eta}}_{ij}^{k} + \rho (\bm{\omega}_{j}^{k+1} - \bm{\omega}_{j}^{k}) $ and $ \tilde{\bm{\eta}}_{ij}^{k} = -\tilde{\bm{\eta}}_{ji}^{k} $. At each communication round, we allow only a subset of devices $ i \in \mathcal{A}_{k} $ to perform local updates. The subset $ \mathcal{A}_{k} $ is an i.i.d. realization of $ \hat{\mathcal{A}} $, and  $ \hat{\mathcal{A}} $ is the collection of all subsets of $ [m] $. Let $ \mathcal{F}_{k} = \sigma(\mathcal{A}_{0},...,\mathcal{A}_{k}) $ be the $ \sigma $-algebra generated by $\mathcal{A}_{0},...,\mathcal{A}_{k}$. Our analysis allows arbitrary sampling strategies that satisfy Assumption 3. For convenience, we assume that at each communication round of {FPFC}, all $ v_{ij} $ and $ \theta_{ij} $($ i,j \in [m] $) are updated by \eqref{app:theta} and \eqref{eq:v}, respectively. 

\subsection{Proof of Lemma 1}
\begin{proof}
	From the definition \eqref{eq:lag} of the augmented Lagrangian $\tilde{\mathcal{L}}_{\rho}(\bm\omega,\bm \theta, v)$, \eqref{eq:v}, \eqref{eq:eta}, and \eqref{eq:tilde eta}, we have 
	\begin{equation*}
	\small
	\begin{aligned}
	&\quad \tilde{\mathcal{L}}_{\rho}( \bm{\omega}^{k}, \bm{\theta}^{k},v^{k}) -  \tilde{\mathcal{L}}_{\rho}( \bm{\omega}^{k+1}, \bm{\theta}^{k}, v^{k}) \\
	&= \{f(\bm{\omega}^{k}) + \frac{1}{2m} \sum_{i=1}^{m}\sum_{j=1}^{m} [\tilde{g}(\|\theta_{ij}^{k}\|) + \left<v_{ij}^{k}, \mathbf{A}_{ij}\bm{\omega}^{k} -\bm{\theta}_{ij}^{k} \right> +  \frac{\rho}{2}\|\mathbf{A}_{ij}\bm{\omega}^{k} - \bm{\theta}_{ij}^{k}\|^2]\}  -\{f(\bm{\omega}^{k+1}) + \frac{1}{2m} \sum_{i=1}^{m}\sum_{j=1}^{m} [\tilde{g}(\|\theta_{ij}^{k}\|) \\
	&\quad + \left<v_{ij}^{k}, \mathbf{A}_{ij}\bm{\omega}^{k+1} -\bm{\theta}_{ij}^{k} \right> + \frac{\rho}{2}\|\mathbf{A}_{ij}\bm{\omega}^{k+1} - \bm{\theta}_{ij}^{k}\|^2]\} \\
	&= f(\bm{\omega}^{k}) - f(\bm{\omega}^{k+1}) + \frac{1}{2m}\sum_{i=1}^{m}\sum_{j=1}^{m} [\left<v_{ij}^{k}, \mathbf{A}_{ij}\bm{\omega}^{k} - \mathbf{A}_{ij}\bm{\omega}^{k+1} \right>+  \frac{\rho}{2} \| \mathbf{A}_{ij}\bm{\omega}^{k+1} - \bm{\theta}_{ij}^{k} + \mathbf{A}_{ij}\bm{\omega}^{k} - \mathbf{A}_{ij}\bm{\omega}^{k+1}\|^2 -\frac{\rho}{2}\|\mathbf{A}_{ij}\bm{\omega}^{k+1}- \bm{\theta}_{ij}^{k}\|^2]\\
	&=f(\bm{\omega}^{k}) - f(\bm{\omega}^{k+1}) + \frac{1}{2m}\sum_{i=1}^{m}\sum_{j=1}^{m} [\left<v_{ij}^{k}, \mathbf{A}_{ij}\bm{\omega}^{k} - \mathbf{A}_{ij}\bm{\omega}^{k+1} \right>  +  \frac{\rho}{2} \|\mathbf{A}_{ij}\bm{\omega}^{k} - \mathbf{A}_{ij}\bm{\omega}^{k+1}\|^2  + \rho \left<\mathbf{A}_{ij}\bm{\omega}^{k+1} - \bm{\theta}_{ij}^{k}, \mathbf{A}_{ij}\bm{\omega}^{k} - \mathbf{A}_{ij}\bm{\omega}^{k+1} \right>],
	\end{aligned}	
	\end{equation*}
	by \eqref{eq:v},	
	\begin{equation}
	\label{eq:027}
	\small
	\begin{aligned}
	&\quad \tilde{\mathcal{L}}_{\rho}( \bm{\omega}^{k}, \bm{\theta}^{k},v^{k}) -  \tilde{\mathcal{L}}_{\rho}( \bm{\omega}^{k+1}, \bm{\theta}^{k}, v^{k}) \\
	&=f(\bm{\omega}^{k}) - f(\bm{\omega}^{k+1}) + \frac{1}{2m}\sum_{i=1}^{m}\sum_{j=1}^{m} [\left<v_{ij}^{k+1} + \rho(\bm{\theta}_{ij}^{k+1} - \bm{\theta}_{ij}^{k}), (\bm{\omega}_{i}^{k} - \bm{\omega}_{j}^{k}) - (\bm{\omega}_{i}^{k+1} - \bm{\omega}_{j}^{k+1}) \right>  + \frac{\rho}{2} \| (\bm{\omega}_{i}^{k} - \bm{\omega}_{j}^{k}) - (\bm{\omega}_{i}^{k+1} - \bm{\omega}_{j}^{k+1}) \|^2]  \\
	&=f(\bm{\omega}^{k}) - f(\bm{\omega}^{k+1})  + \frac{1}{2m}\sum_{i=1}^{m}\sum_{j=1}^{m}[\left<v_{ij}^{k+1} + \rho(\bm{\theta}_{ij}^{k+1} - \bm{\theta}_{ij}^{k}), \bm{\omega}_{i}^{k} - \bm{\omega}_{i}^{k+1} \right>  + \left<v_{ij}^{k+1} + \rho(\bm{\theta}_{ij}^{k+1} - \bm{\theta}_{ij}^{k}), \bm{\omega}_{j}^{k+1} - \bm{\omega}_{j}^{k} \right> + \frac{\rho}{2} \|\bm{\omega}_{i}^{k} - \bm{\omega}_{i}^{k+1} \|^2 \\
	& \quad + \frac{\rho}{2} \|\bm{\omega}_{j}^{k} - \bm{\omega}_{j}^{k+1} \|^2 + \rho \left< \bm{\omega}_{i}^{k} - \bm{\omega}_{i}^{k+1}, \bm{\omega}_{j}^{k+1} - \bm{\omega}_{j}^{k}\right>]  \\
	&\overset{\eqref{eq:tilde eta}}{=}f(\bm{\omega}^{k}) - f(\bm{\omega}^{k+1}) + \frac{1}{2m}\sum_{i=1}^{m}\sum_{j=1}^{m} [\left<\bm{\eta}_{ij}^{k}, \bm{\omega}_{i}^{k} - \bm{\omega}_{i}^{k+1} \right>  +  \left<\tilde{\bm{\eta}}^{k}_{ij}, \bm{\omega}_{j}^{k+1} - \bm{\omega}_{j}^{k} \right>] + \frac{\rho}{2}\sum_{i=1}^{m}\|\bm{\omega}_{i}^{k} - \bm{\omega}_{i}^{k+1}\|^2,
	\end{aligned}	
	\end{equation}	
	where we use \eqref{eq:eta}, \eqref{eq:tilde eta}, and the elementary expression
	\begin{equation*}
	\begin{aligned}
	\frac{1}{2m}\sum_{i=1}^{m}\sum_{j=1}^{m}[\frac{\rho}{2}\|\bm{\omega}_{i}^{k} - \bm{\omega}_{i}^{k+1}\|^2 + \frac{\rho}{2}\|\bm{\omega}_{j}^{k} - \bm{\omega}_{j}^{k+1}\|^2] = \frac{\rho}{2}\sum_{i=1}^{m}\|\bm{\omega}_{i}^{k} - \bm{\omega}_{i}^{k+1}\|^2
	\end{aligned}
	\end{equation*}
	to obtain the last equality. 
	Using the relation between $ \bm{\eta}_{ij}^{k} $ and $ \tilde{\bm{\eta}}_{ij}^{k}$, we have
	\begin{equation*}
	\small
	\begin{aligned}
	&\quad \frac{1}{2m}\sum_{i=1}^{m}\sum_{j=1}^{m}[\left<\bm{\eta}_{ij}^{k}, \bm{\omega}_{i}^{k} -\bm{\omega}_{i}^{k+1} \right> + \left<\tilde{\bm{\eta}}_{ij}^{k}, \bm{\omega}_{j}^{k+1}  - \bm{\omega}_{j}^{k} \right>] \\
	&=\frac{1}{2m}\sum_{i=1}^{m}\sum_{j=1}^{m}[\left<\bm{\eta}_{ij}^{k}, \bm{\omega}_{i}^{k} -\bm{\omega}_{i}^{k+1} \right> + \left<\bm{\eta}_{ij}^{k}, \bm{\omega}_{j}^{k+1} - \bm{\omega}_{j}^{k} \right> - \rho \|\bm{\omega}_{j}^{k+1} - \bm{\omega}_{j}^{k}\|^2]\\
	&= \frac{1}{2m}\sum_{i=1}^{m}\sum_{j=1}^{m}[\left<\bm{\eta}_{ij}^{k}, \bm{\omega}_{i}^{k} -\bm{\omega}_{i}^{k+1} \right> +  \left<\bm{\eta}_{ij}^{k},(\bm{\omega}_{i}^{k} - \bm{\omega}_{i}^{k+1}) +(\bm{\omega}_{j}^{k+1} - \bm{\omega}_{j}^{k})  - (\bm{\omega}_{i}^{k} - \bm{\omega}_{i}^{k+1}) \right>- \rho \|\bm{\omega}_{j}^{k+1} - \bm{\omega}_{j}^{k}\|^2]\\
	&= \frac{1}{2m}\sum_{i=1}^{m}\sum_{j=1}^{m}[2\left<\bm{\eta}_{ij}^{k}, \bm{\omega}_{i}^{k} -\bm{\omega}_{i}^{k+1} \right> + \left<\tilde{\bm{\eta}}_{ij}^{k},(\bm{\omega}_{j}^{k+1} - \bm{\omega}_{j}^{k}) - (\bm{\omega}_{i}^{k} - \bm{\omega}_{i}^{k+1}) \right>  + \rho \left<\bm{\omega}_{j}^{k+1} - \bm{\omega}_{j}^{k}, \bm{\omega}_{i}^{k+1}-\bm{\omega}_{i}^{k} \right>].
	\end{aligned}
	\end{equation*}
	Because 
	\begin{equation*}
	\small
	\begin{aligned}
	&\quad \frac{1}{2m}\sum_{i=1}^{m}\sum_{j=1}^{m} \left<\tilde{\bm{\eta}}_{ij}^{k}, (\bm{\omega}_{j}^{k+1} - \bm{\omega}_{j}^{k}) -(\bm{\omega}_{i}^{k} - \bm{\omega}_{i}^{k+1}) \right>\\
	&= \frac{1}{2m}\sum_{i<j} \left<\tilde{\bm{\eta}}_{ij}^{k}, (\bm{\omega}_{j}^{k+1} - \bm{\omega}_{j}^{k}) -(\bm{\omega}_{i}^{k} - \bm{\omega}_{i}^{k+1}) \right> + \left<\tilde{\bm{\eta}}_{ji}^{k}, (\bm{\omega}_{i}^{k+1} - \bm{\omega}_{i}^{k}) -(\bm{\omega}_{j}^{k} - \bm{\omega}_{j}^{k+1}) \right>\\
	&= \frac{1}{2m}\sum_{i<j} \left<\tilde{\bm{\eta}}_{ij}^{k}, (\bm{\omega}_{j}^{k+1} - \bm{\omega}_{j}^{k}) -(\bm{\omega}_{i}^{k} - \bm{\omega}_{i}^{k+1}) \right>  + \left<-\tilde{\bm{\eta}}_{ij}^{k}, (\bm{\omega}_{j}^{k+1} - \bm{\omega}_{j}^{k}) -(\bm{\omega}_{i}^{k} - \bm{\omega}_{i}^{k+1}) \right>=0,
	\end{aligned}
	\end{equation*}
	we have
	\begin{equation}\label{eq:029}
	\begin{aligned} & \frac{1}{2m}\sum_{i=1}^{m}\sum_{j=1}^{m}[\left<\bm{\eta}_{ij}^{k}, \bm{\omega}_{i}^{k} -\bm{\omega}_{i}^{k+1} \right> + \left<\tilde{\bm{\eta}}_{ij}^{k}, \bm{\omega}_{j}^{k+1} - \bm{\omega}_{j}^{k} \right>]  = \frac{1}{m}\sum_{i=1}^{m}\sum_{j=1}^{m}[\left<\bm{\eta}_{ij}^{k}, \bm{\omega}_{i}^{k} - \bm{\omega}_{i}^{k+1} \right> + \frac{\rho}{2}\left<\bm{\omega}_{i}^{k+1} - \bm{\omega}_{i}^{k}, \bm{\omega}_{j}^{k+1}-\bm{\omega}_{j}^{k} \right>].
	\end{aligned}
	\end{equation}
	Combining \eqref{eq:027} and \eqref{eq:029}, we obtain
	\begin{equation*}
	\small
	\begin{aligned}
	\tilde{\mathcal{L}}_{\rho}(\bm{\omega}^{k}, \bm{\theta}^{k}, v^{k}) - \tilde{\mathcal{L}}_{\rho}(\bm{\omega}^{k+1}, \bm{\theta}^{k}, v^{k}) &= f(\bm{\omega}^{k}) - f(\bm{\omega}^{k+1}) + \frac{1}{m}\sum_{i=1}^{m}\sum_{j=1}^{m}[\left<\bm{\eta}_{ij}^{k}, \bm{\omega}_{i}^{k}-\bm{\omega}_{i}^{k+1} \right> \\ 
	& \quad + \frac{\rho}{2}\left<\bm{\omega}_{i}^{k+1}-\bm{\omega}_{i}^{k}, \bm{\omega}_{j}^{k+1} - \bm{\omega}_{j}^{k} \right> ] + \frac{\rho}{2} \sum_{i=1}^{m}\|\bm{\omega}_{i}^{k} - \bm{\omega}_{i}^{k+1}\|^2. 
	\end{aligned}
	\end{equation*}
	Let
	$
	r_{i}^{k} = f_{i}(\bm{\omega}_{i}^{k}) - f_{i}(\bm{\omega}_{i}^{k+1}) + \frac{1}{m} \sum_{j=1}^{m}[\left<\bm{\eta}_{ij}^{k}, \bm{\omega}_{i}^{k} - \bm{\omega}_{i}^{k+1}\right> + \frac{\rho}{2}\left<\bm{\omega}_{i}^{k+1}-\bm{\omega}_{i}^{k}, \bm{\omega}_{j}^{k+1}-\bm{\omega}_{j}^{k} \right>] + \frac{\rho}{2}\|\bm{\omega}_{i}^{k} -\bm{\omega}_{i}^{k+1}\|^2$. For $ i \notin \mathcal{A}_{k} $, $ \omega_{i}^{k+1} = \omega_{i}^{k} $; thus, we have 
	\begin{equation}
	\small
	\label{eq:rik}
	\tilde{\mathcal{L}}_{\rho}(\bm{\omega}^{k}, \bm{\theta}^{k}, v^{k}) - \tilde{\mathcal{L}}_{\rho}(\bm{\omega}^{k+1}, \bm{\theta}^{k}, v^{k})=\sum_{i \in \mathcal{A}_{k}} r_i^k.
	\end{equation}
	Define $h_i(\bm\omega_i)=f_i(\bm\omega_i)+\frac{\rho}{2m}\sum_{j=1}^m \|\bm\omega_i-\bm\omega_j^k-\bm\theta_{ij}^k+\frac{ v_{ij}^k}{\rho}\|^2$. For $ i \in \mathcal{A}_{k} $, using \eqref{eq:eta}, we have 
	\begin{equation}\label{eq:nable h}
	\small
	\begin{aligned}
	\nabla h_{i}(\bm{\omega}_{i}^{k+1}) &= \nabla f_{i}(\bm{\omega}_{i}^{k+1}) + \frac{\rho}{m}\sum_{j=1}^{m}(\bm{\omega}_{i}^{k+1} - \bm{\omega}_{j}^{k} - \bm{\theta}_{ij}^{k} + \frac{1}{\rho}v_{ij}^{k})  \overset{\eqref{eq:eta}}{= }\nabla f_{i}(\bm{\omega}_{i}^{k+1}) + \frac{1}{m}\sum_{j=1}^{m} \bm{\eta}^{k}_{ij}.
	\end{aligned}
	\end{equation}
	Using the last expression and $L_f$-smoothness of $ f_{i} $, we can derive that 
	\begin{equation}
	\label{eq:030}
	\begin{aligned}
	r_{i}^{k} &\geq \frac{\rho}{2m} \sum_{j=1}^{m} \left<\bm{\omega}_{i}^{k+1} - \bm{\omega}_{i}^{k}, \bm{\omega}_{j}^{k+1} - \bm{\omega}_{j}^{k} \right> + \frac{\rho - L_{f}}{2}\|\bm{\omega}_{i}^{k} - \bm{\omega}_{i}^{k+1}\|^2 + \left<\nabla h_{i}(\bm{\omega}_{i}^{k+1}), \bm{\omega}_{i}^{k} - \bm{\omega}_{i}^{k+1} \right> .
	\end{aligned}
	\end{equation}
	Summing \eqref{eq:030} for $ i \in \mathcal{A}_{k} $, we obtain 
	\begin{equation}\label{eq:sum r}
	\small
	\begin{aligned}
	\sum_{i\in \mathcal{A}_{k}} r_{i}^{k} &\geq \frac{\rho}{2m}\|\sum_{i\in \mathcal{A}_{k}} (\bm{\omega}_{i}^{k+1} - \bm{\omega}_{i}^{k})\|^2 +  \sum_{i\in \mathcal{A}_{k}} \frac{\rho - L_{f}}{2}\|\bm{\omega}_{i}^{k} - \bm{\omega}_{i}^{k+1}\|^2  + \sum_{i\in \mathcal{A}_{k}} \left<\nabla h_{i}(\bm{\omega}_{i}^{k+1}),\bm{\omega}_{i}^{k} - \bm{\omega}_{i}^{k+1} \right> \\
	& \geq \sum_{i \in \mathcal{A}_{k}} \frac{\rho - L_{f}}{2}\|\bm{\omega}_{i}^{k} - \bm{\omega}_{i}^{k+1}\|^2 - \sum_{i \in \mathcal{A}_{k}} \|\nabla h_{i}(\bm{\omega}_{i}^{k+1})\| \|\bm{\omega}_{i}^{k} - \bm{\omega}_{i}^{k+1}\|.
	\end{aligned}
	\end{equation}
	According to Theorem 1, for each device $ i \in \mathcal{A}_{k} $, there exists an $\epsilon_i\in [0,1]$ such that $T_{i} = \frac{2\log(\frac{\epsilon_{i}}{1+\epsilon_{i}})}{\log c} $, and we have $ \|\bm{\omega}_{i}^{k+1} - \hat{\bm{\omega}}_{i}^{k+1}\| \leq (c^{-\frac{T_{i}}{2}} -1)^{-1} \|\bm{\omega}_{i}^{k+1} - \bm{\omega}_{i}^{k}\|$. Then, due to the $L_h$-smoothness of $h_i$ with $L_h=L_f+\rho$, we have that for $ i \in \mathcal{A}_{k} $  
	\begin{equation}\label{eq:h e}
	\small
	\|\nabla h_{i}(\bm{\omega}_{i}^{k+1})\| \leq L_{h} \|\bm{\omega}_{i}^{k+1} - \hat{\bm{\omega}}_{i}^{k+1}\| 
	\leq (c^{-\frac{T_{i}}{2}} -1)^{-1} L_{h} \|\bm{\omega}_{i}^{k+1} - \bm{\omega}_{i}^{k}\|.
	\end{equation}
	Inserting \eqref{eq:h e} into \eqref{eq:sum r}, we obtain 
	\begin{equation*}
	\small
	\sum_{i\in \mathcal{A}_{k}} r_{i}^{k} \geq  \sum_{i \in \mathcal{A}_{k}} \frac{\rho - L_{f} - 2(c^{-\frac{T_{i}}{2}} -1)^{-1}L_{h}}{2}\|\bm{\omega}_{i}^{k} - \bm{\omega}_{i}^{k+1}\|^2.
	\end{equation*}
	Together with \eqref{eq:rik}, we then obtain  
	\begin{equation}\label{eq:descent of w}
	\small
	\begin{aligned}
	\tilde{\mathcal{L}}_{\rho}(\bm{\omega}^{k}, \bm{\theta}^{k}, v^{k}) -  \tilde{\mathcal{L}}_{\rho}(\bm{\omega}^{k+1}, \bm{\theta}^{k}, v^{k})  \geq  \sum_{i\in \mathcal{A}_{k}} \frac{\rho - L_{f} - 2(c^{-\frac{T_{i}}{2}} -1)^{-1}L_{h}}{2}\|\bm{\omega}_{i}^{k} - \bm{\omega}_{i}^{k+1}\|^2.
	\end{aligned}
	\end{equation}
	This completes the proof. 	
\end{proof}

\subsection{Proof of Lemma 2}

\begin{proof}
	From the definition of the augmented Lagrangian $\tilde{\mathcal{L}}_{\rho}(\bm\omega,\bm \theta, v)$ \eqref{eq:lag}, we have 
	\begin{equation}\label{eq:49}
	\small
	\begin{aligned}
	\tilde{\mathcal{L}}_{\rho}(\bm{\omega}^{k+1}, \bm{\theta}^{k+1}, v^{k+1})  &= \tilde{\mathcal{L}}_{\rho}(\bm{\omega}^{k+1}, \bm{\theta}^{k+1}, v^{k}) + \frac{1}{2m}\sum_{i=1}^{m}\sum_{j=1}^{m} \left< v_{ij}^{k+1} - v_{ij}^{k}, \mathbf{A}_{ij}\bm{\omega}^{k+1} - \bm{\theta}_{ij}^{k+1}\right> \\
	&\overset{\eqref{eq:v}}{=} \tilde{\mathcal{L}}_{\rho}(\bm{\omega}^{k+1}, \bm{\theta}^{k+1}, v^{k}) + \frac{1}{2m\rho}\sum_{i=1}^{m}\sum_{j=1}^{m}\|v_{ij}^{k+1} - v_{ij}^{k}\|^2
	\end{aligned}
	\end{equation}
	and 
	\begin{equation}\label{eq:50}
	\small
	\begin{aligned}
	\tilde{\mathcal{L}}_{\rho}(\bm{\omega}^{k+1}, \bm{\theta}^{k}, v^{k}) - \tilde{\mathcal{L}}_{\rho}(\bm{\omega}^{k+1}, \bm{\theta}^{k+1}, v^{k}) &=\frac{1}{2m}\sum_{i=1}^{m}\sum_{j=1}^{m} [\tilde{g}(\|\theta_{ij}^{k}\|) - \tilde{g}(\|\bm{\theta}_{ij}^{k+1}\|) + \left<v_{ij}^{k}, \bm{\theta}_{ij}^{k+1}-\bm{\theta}_{ij}^{k} \right> \\
	&\quad	 + \frac{\rho}{2} \|\mathbf{A}_{ij}\bm{\omega}^{k+1} - \bm{\theta}_{ij}^{k}\|^2 - \frac{\rho}{2}\|\mathbf{A}_{ij}\bm{\omega}^{k+1} - \bm{\theta}_{ij}^{k+1}\|^2].
	\end{aligned}
	\end{equation}
	Using \eqref{app:theta}, we obtain that 
	\begin{equation}
	\small
	\label{eq:theta v}
	\begin{aligned}
	\nabla \tilde{g}(\|\bm{\theta}_{ij}^{k+1}\|) = v_{ij}^{k} + \rho(\bm{\omega}_{i}^{k+1} + \bm{\omega}_{j}^{k+1} - \bm{\theta}_{ij}^{k+1}) =  v_{ij}^{k+1}.
	\end{aligned}
	\end{equation}
	From the definition of smoothed SCAD penalty, we know that $\tilde{g}$ is $L_{\tilde{g}}$-smooth with $L_{\tilde{g}}=\max\{\frac{\lambda}{\xi},\frac{1}{a-1}\}$. Using the $L_{\tilde{g}}$-smoothness of $\tilde{g}$   
	and \eqref{eq:theta v}, we can derive from \eqref{eq:50} that  
	\begin{equation}\label{eq:theta v2}
	\small
	\begin{aligned}
	\tilde{\mathcal{L}}_{\rho}(\bm{\omega}^{k+1}, \bm{\theta}^{k}, v^{k}) - \tilde{\mathcal{L}}_{\rho}(\bm{\omega}^{k+1}, \bm{\theta}^{k+1}, v^{k}) &\geq \frac{1}{2m} \sum_{i=1}^{m}\sum_{j=1}^{m}[\left<v_{ij}^{k+1} - v_{ij}^{k}, \bm{\theta}_{ij}^{k} - \bm{\theta}_{ij}^{k+1}\right> + \frac{\rho}{2}\|\mathbf{A}_{ij}\bm{\omega}^{k+1} - \bm{\theta}_{ij}^{k}\|^2 \\
	&\quad - \frac{\rho}{2}\|\mathbf{A}_{ij}\bm{\omega}^{k+1} - \bm{\theta}_{ij}^{k+1}\|^2 - \frac{L_{\tilde{g}}}{2}\|\bm{\theta}_{ij}^{k} - \bm{\theta}_{ij}^{k+1}\|^2].
	\end{aligned}
	\end{equation}
	By \eqref{eq:v}, we have 
	\begin{equation*}
	\small
	\begin{aligned}
	\left<v_{ij}^{k+1}-v_{ij}^{k}, \bm{\theta}_{ij}^{k}-\bm{\theta}_{ij}^{k+1} \right> + \frac{\rho}{2}\|\mathbf{A}_{ij}\bm{\omega}^{k+1} - \bm{\theta}_{ij}^{k}\|^2 
	&\overset{\eqref{eq:v}}{=} \left<v_{ij}^{k+1}-v_{ij}^{k}, \bm{\theta}_{ij}^{k} - \bm{\theta}_{ij}^{k+1} \right> + \frac{\rho}{2}\|\frac{1}{\rho}(v_{ij}^{k+1} - v_{ij}^{k}) + (\bm{\theta}_{ij}^{k+1} - \bm{\theta}_{ij}^{k})\|^2 \\
	&= \frac{1}{2\rho}\|v_{ij}^{k+1} - v_{ij}^{k}\|^2 + \frac{\rho}{2}\|\bm{\theta}_{ij}^{k+1} - \bm{\theta}_{ij}^{k}\|^2.
	\end{aligned}
	\end{equation*}
	Inserting the last equality into \eqref{eq:theta v2}, we can derive that  
	\begin{equation}\label{eq:theta v3}
	\small
	\begin{aligned}
	\tilde{\mathcal{L}}_{\rho}(\bm{\omega}^{k+1}, \bm{\theta}^{k}, v^{k}) - \tilde{\mathcal{L}}_{\rho}(\bm{\omega}^{k+1}, \bm{\theta}^{k+1}, v^{k}) &\geq \frac{1}{2m}\sum_{i=1}^{m}\sum_{j=1}^{m}[\frac{1}{2\rho}\|v_{ij}^{k+1} - v_{ij}^{k}\|^2 + \frac{\rho}{2}\|\bm{\theta}_{ij}^{k+1} - \bm{\theta}_{ij}^{k}\|^2 - 
	\frac{\rho}{2}\|\mathbf{A}_{ij}\bm{\omega}^{k+1} - \bm{\theta}_{ij}^{k+1}\|^2 \\
	&\quad - \frac{L_{\tilde{g}}}{2}\|\bm{\theta}_{ij}^{k} - \bm{\theta}_{ij}^{k+1}\|^2] \\
	&\overset{\eqref{eq:v}}{=} \frac{1}{2m}\sum_{i=1}^{m}\sum_{j=1}^{m}\frac{\rho - L_{\tilde{g}}}{2}\|\bm{\theta}_{ij}^{k} - \bm{\theta}_{ij}^{k+1}\|^2.
	\end{aligned}
	\end{equation}
	Using the $L_{\tilde{g}}$-smoothness of $\tilde{g}$ and \eqref{eq:theta v}, we can bound $\| v_{ij}^{k+1}- v_{ij}^k\|$ as  
	\begin{equation}\label{eq:theta v4}
	\begin{aligned}
	\|v_{ij}^{k+1} - v_{ij}^{k}\| \leq L_{\tilde{g}}\|\bm{\theta}_{ij}^{k+1} - \bm{\theta}_{ij}^{k}\|.
	\end{aligned}
	\end{equation}
	By \eqref{eq:theta v3} and \eqref{eq:theta v4}, we can further derive from \eqref{eq:49} that 
	\begin{equation}\label{eq:descent}
	\small
	\begin{aligned}
	\tilde{\mathcal{L}}_{\rho}(\bm{\omega}^{k+1}, \bm{\theta}^{k+1}, v^{k+1}) &=\tilde{\mathcal{L}}_{\rho}(\bm{\omega}^{k+1}, \bm{\theta}^{k+1},v^{k}) + \frac{1}{2m\rho}\sum_{i=1}^{m}\sum_{j=1}^{m}\|v_{ij}^{k+1} - v_{ij}^{k}\|^2 \\
	&\overset{\eqref{eq:theta v3},\eqref{eq:theta v4}}{\leq} \tilde{\mathcal{L}}_{\rho}(\bm{\omega}^{k+1}, \bm{\theta}^{k}, v^{k}) - \frac{1}{2m}(\frac{\rho - L_{\tilde{g}}}{2} - \frac{L_{\tilde{g}}^{2}}{\rho})\sum_{i=1}^{m}\sum_{j=1}^{m}\|\bm{\theta}_{ij}^{k} - \bm{\theta}_{ij}^{k+1}\|^2.
	\end{aligned}
	\end{equation}
	Together with Lemma \ref{lem:descent duo to w}, we have
	\begin{equation*}
	\small
	\begin{aligned}
	\tilde{\mathcal{L}}_{\rho}(\bm{\omega}^{k}, \bm{\theta}^{k}, v^{k}) -  \tilde{\mathcal{L}}_{\rho}(\bm{\omega}^{k+1}, \bm{\theta}^{k+1}, v^{k+1})
	&\geq  \sum_{i\in \mathcal{A}_{k}} \frac{\rho - L_{f} - 2(c^{-\frac{T_{i}}{2}} -1)^{-1}L_{h}}{2}\|\bm{\omega}_{i}^{k} - \bm{\omega}_{i}^{k+1}\|^2 \\
	&\quad + \frac{1}{2m}(\frac{\rho - L_{\tilde{g}}}{2} - \frac{L_{\tilde{g}}^{2}}{\rho})\sum_{i=1}^{m}\sum_{j=1}^{m}\|\bm{\theta}_{ij}^{k} - \bm{\theta}_{ij}^{k+1}\|^2.
	\end{aligned}
	\end{equation*}	
	This completes the proof. 
\end{proof}

\subsection{Proof of Lemma 3}
\begin{proof}
	Let us define $u_{ij}^k= \matr{A}_{ij} \bm{\omega}^{k}$. We have 
	\begin{equation*}
	\begin{aligned}
	\small
	f(\bm{\omega}^{k}) + \frac{1}{2m}\sum_{i=1}^{m}\sum_{i=1}^{m} \tilde{g}(\|u_{ij}^k\|)  \geq \tilde{F}(\tilde{\omega}^{*})= \min\limits_{\bm{\omega}, \bm{\theta}_{ij}}\{f(\bm{\omega}) + \frac{1}{2m}\sum_{i=1}^{m}\sum_{i=1}^{m} \tilde{g}(\|\bm{\theta}_{ij}\|): \matr{A}_{ij} \bm{\omega} - \bm{\theta}_{ij} = \bm{0}\}.
	\end{aligned}
	\end{equation*}
	Using this expression and definition of  $\tilde{\mathcal{L}}_{\rho}(\bm\omega,\bm \theta, v)$ \eqref{eq:lag}, we have
	\begin{equation}
	\label{eq:041}
	\begin{aligned}
	\tilde{\mathcal{L}}_{\rho}(\bm{\omega}^{k}, \bm{\theta}^{k}, v^{k}) &= f(\bm{\omega}^{k}) + \frac{1}{2m}\sum_{i=1}^{m}\sum_{j=1}^{m} [\tilde{g}(\|\theta_{ij}^{k}\|) + \left<v_{ij}^{k}, \matr{A}_{ij}\bm{\omega}^{k} - \bm{\theta}^{k}_{ij} \right>  + \frac{\rho}{2}\|\matr{A}_{ij}\bm{\omega}^{k} - \bm{\theta}^{k}_{ij}\|^2] \\
	&= f(\bm{\omega}^{k}) + \frac{1}{2m}\{\sum_{i=1}^{m}\sum_{j=1}^{m} [\tilde{g}(\|\theta_{ij}^{k}\|) + \left<\nabla \tilde{g}(\|\bm{\theta}_{ij}^{k}\|), u_{ij}^k - \bm{\theta}^{k}_{ij} \right> +  \frac{\rho}{2}\|\matr{A}_{ij}\bm{\omega}^{k} - \bm{\theta}^{k}_{ij}\|^2] \\
	&\geq f(\bm{\omega}^{k}) + \frac{1}{2m} \sum_{i=1}^{m}\sum_{j=1}^{m} [\tilde{g}(\|u^k_{ij}\|) -\frac{L_{\tilde{g}}}{2}\|\matr{A}_{ij}\bm{\omega}^{k} - \bm{\theta}^{k}_{ij}\|^2  + \frac{\rho}{2}\|\matr{A}_{ij}\bm{\omega}^{k} - \bm{\theta}^{k}_{ij}\|^2] \\
	&= f(\bm{\omega}^{k}) + \frac{1}{2m}\sum_{i=1}^{m}\sum_{j=1}^{m}[\tilde{g}(\|u^k_{ij}\|) + \frac{\rho - L_{\tilde{g}}}{2}\|\matr{A}_{ij}\bm{\omega}^{k} - \bm{\theta}^{k}_{ij}\|^2] \geq F^*,
	\end{aligned}
	\end{equation}
	where the first inequality holds from the $L_{\tilde{g}}$-smoothness of $\tilde{g}$, and the last inequality holds because of $ \rho> \max\{\frac{\lambda}{\xi},\frac{1}{a-1}\} $, Assumption 2, and 
	\begin{equation*}
	\begin{aligned}
	\small
	F^* - \tilde{F}(\tilde{\omega}^{*}) &= f(\omega^{*}) + \frac{1}{2m}\sum_{i=1}^{m}\sum_{j=1}^{m} g(\| \omega_{i}^{*}- \omega_{j}^{*}\|) - f(\tilde{\omega}^{*})  - \frac{1}{2m}\sum_{i=1}^{m}\sum_{j=1}^{m} \tilde{g}(\|\tilde{\omega}_{i}^{*}- \tilde{\omega}_{j}^{*}\|) \\
	& \leq f(\tilde{\omega}^{*}) + \frac{1}{2m}\sum_{i=1}^{m}\sum_{j=1}^{m} g(\| \tilde{\omega}_{i}^{*}- \tilde{\omega}_{j}^{*}\|) - f(\tilde{\omega}^{*}) - \frac{1}{2m}\sum_{i=1}^{m}\sum_{j=1}^{m} \tilde{g}(\|\tilde{\omega}_{i}^{*}- \tilde{\omega}_{j}^{*}\|) \leq 0.
	\end{aligned}
	\end{equation*}
	The last inequality holds because of Proposition 1.
\end{proof}

\subsection{Proof of Theorem 2} 
\begin{proof}	
	(a)
	First, 
	\begin{equation*}\label{eq:nabla lw}
	\small
	\begin{aligned}
	\nabla_{\bm{\omega}_{i}} \tilde{\mathcal{L}}_{0} (\bm{\omega}^{k+1}, \bm{\theta}^{k+1}, v^{k+1})	= \nabla f_{i}(\bm{\omega}_{i}^{k+1}) +\frac{1}{2m}\sum_{j=1}^{m}[(v_{ij}^{k+1}- v_{ji}^{k+1})] = \nabla f_{i}(\bm{\omega}_{i}^{k+1}) + \frac{1}{m}\sum_{j=1}^{m}v_{ij}^{k+1}.
	\end{aligned}
	\end{equation*}
	Using \eqref{eq:nable h},  we can derive from the last equality that 
	\begin{equation*}
	\small
	\begin{aligned}
	\nabla_{\bm{\omega}_{i}} \tilde{\mathcal{L}}_{0} ( \bm{\omega}^{k+1}, \bm{\theta}^{k+1},v^{k+1}) = \nabla h_{i}(\bm{\omega}_{i}^{k+1}) - \frac{\rho}{m}\sum_{j=1}^{m}(\bm{\omega}_{j}^{k+1} - \bm{\omega}_{j}^{k} + \bm{\theta}_{ij}^{k+1} - \bm{\theta}_{ij}^{k}).
	\end{aligned}
	\end{equation*}
	Then, by \eqref{eq:h e} and the Jensen inequality, for $ i \in \mathcal{A}_{k} $, we have  
	\begin{equation*}
	\small
	\begin{aligned}
	\|\nabla_{\bm{\omega}_{i}} \tilde{\mathcal{L}}_{0} ( \bm{\omega}^{k+1}, \bm{\theta}^{k+1},v^{k+1}) \|^2 \leq 3[(c^{-\frac{T_{i}}{2}}-1)^{-2}L^{2}_{h}\|\bm{\omega}_{i}^{k+1} - \bm{\omega}_{i}^{k}\|^2 + \frac{\rho^2}{m}\sum_{j=1}^{m} \|\bm{\omega}_{j}^{k+1} - \bm{\omega}_{j}^{k}\|^2 + \frac{\rho^2}{m} \sum_{j=1}^{m} \|\bm{\theta}_{ij}^{k+1} - \bm{\theta}_{ij}^{k}\|^2].
	\end{aligned}
	\end{equation*}
	Furthermore,
	\begin{equation*}
	\small
	\begin{aligned}
	\sum_{i \in \mathcal{A}_{k}}\|\nabla_{\bm{\omega}_{i}} \tilde{\mathcal{L}}_{0} ( \bm{\omega}^{k+1}, \bm{\theta}^{k+1},v^{k+1}) \|^2 \leq 3 \sum_{i \in \mathcal{A}_{k}}[(c^{-\frac{T_{i}}{2}}-1)^{-2}L^{2}_{h}+\rho^2]\|\bm{\omega}_{i}^{k+1} - \bm{\omega}_{i}^{k}\|^2 + \frac{3\rho^2}{m}\sum_{i=1}^{m}\sum_{j=1}^{m} \|\bm{\theta}_{ij}^{k+1} - \bm{\theta}_{ij}^{k}\|^2.
	\end{aligned}
	\end{equation*}	
	Because $ \rho> \max\{\frac{L_f}{1-2c^{\frac{T}{2}}},\frac{2\lambda}{\xi},\frac{2}{a-1}\} $, we have $ (\frac{\rho - L_{\tilde{g}}}{2} - \frac{L_{\tilde{g}}^{2}}{\rho})>0 $ and $ \frac{\rho - L_{f} - 2(c^{-\frac{T_{i}}{2}} -1)^{-1}L_{h}}{2} > 0 $. By Lemma \ref{lem:descent of L},
	\begin{equation*}
	\small
	\begin{aligned}
	\sum_{i \in \mathcal{A}_{k}}\|\nabla_{\bm{\omega}_{i}} \tilde{\mathcal{L}}_{0} ( \bm{\omega}^{k+1}, \bm{\theta}^{k+1},v^{k+1}) \|^2 \leq [\frac{6[(c^{-\frac{T}{2}} - 1)^{-2}L_{h}^2 + \rho^2]}{\rho - L_{f} - 2(c^{-\frac{T}{2}} - 1)^{-1}L_{h}} + \frac{12\rho^3}{\rho^2 - L_{\tilde{g}}\rho - 2L_{\tilde{g}}^2}][\tilde{\mathcal{L}}_{\rho}(\bm{\omega}^{k}, \bm{\theta}^{k}, v^{k})- \tilde{\mathcal{L}}_{\rho}(\bm{\omega}^{k+1}, \bm{\theta}^{k+1}, v^{k+1})].
	\end{aligned}	
	\end{equation*}			
	Taking the expectation of random variable $ \mathcal{A}_{k} $ conditioned on $ \mathcal{F}_{k-1} $, we then have 
	\begin{equation}\label{eq:right0}
	\small
	\begin{aligned}
	&\quad \mathbb{E}[\sum_{i \in \mathcal{A}_{k}}\|\nabla_{\bm{\omega}_{i}} \tilde{\mathcal{L}}_{0} ( \bm{\omega}^{k+1}, \bm{\theta}^{k+1},v^{k+1}) \|^2 | \mathcal{F}_{k-1}] \\
	&\leq [\frac{6[(c^{-\frac{T}{2}} - 1)^{-2}L_{h}^2 + \rho^2]}{\rho - L_{f} - 2(c^{-\frac{T}{2}} - 1)^{-1}L_{h}} + \frac{12\rho^3}{\rho^2 - L_{\tilde{g}}\rho - 2L_{\tilde{g}}^2}][\tilde{\mathcal{L}}_{\rho}(\bm{\omega}^{k}, \bm{\theta}^{k}, v^{k})- \mathbb{E}[\tilde{\mathcal{L}}_{\rho}(\bm{\omega}^{k+1}, \bm{\theta}^{k+1}, v^{k+1})|\mathcal{F}_{k-1}]].
	\end{aligned}
	\end{equation}
	Meanwhile, 
	\begin{equation*}
	\small
	\begin{aligned}
	\mathbb{E}[\sum_{i \in \mathcal{A}_{k}}\|\nabla_{\bm{\omega}_{i}} \tilde{\mathcal{L}}_{0} ( \bm{\omega}^{k+1}, \bm{\theta}^{k+1},v^{k+1}) \|^2 | \mathcal{F}_{k-1}] &= \sum_{\mathcal{A}} P(\mathcal{A}_{k} = \mathcal{A})\sum_{i \in \mathcal{A}}\|\nabla_{\bm{\omega}_{i}} \tilde{\mathcal{L}}_{0} ( \bm{\omega}^{k+1}, \bm{\theta}^{k+1},v^{k+1}) \|^2 \\
	&= \sum_{i=1}^{m}p_{i}\|\nabla_{\bm{\omega}_{i}} \tilde{\mathcal{L}}_{0} ( \bm{\omega}^{k+1}, \bm{\theta}^{k+1},v^{k+1}) \|^2 \geq \hat{p} \sum_{i=1}^{m}\|\nabla_{\bm{\omega}_{i}} \tilde{\mathcal{L}}_{0} ( \bm{\omega}^{k+1}, \bm{\theta}^{k+1},v^{k+1}) \|^2.
	\end{aligned}
	\end{equation*}
	Inserting this inequality into \eqref{eq:right0} and taking the total expectation $ \mathcal{F}_{k} $, we obtain
	\begin{equation*}
	\small
	\begin{aligned}
	& \quad \mathbb{E}[\|\nabla_{\bm{\omega}} \tilde{\mathcal{L}}_{0} ( \bm{\omega}^{k+1}, \bm{\theta}^{k+1},v^{k+1}) \|^2 ] \\
	& \leq \frac{1}{\hat{p}}[\frac{6[(c^{-\frac{T}{2}} - 1)^{-2}L_{h}^2 + \rho^2 ]}{\rho - L_{f} - 2(c^{-\frac{T}{2}} - 1)^{-1}L_{h}} + \frac{12\rho^3}{\rho^2 - L_{\tilde{g}}\rho - 2L_{\tilde{g}}^2}][\mathbb{E}\tilde{\mathcal{L}}_{\rho}(\bm{\omega}^{k}, \bm{\theta}^{k}, v^{k})- \mathbb{E}\tilde{\mathcal{L}}_{\rho}(\bm{\omega}^{k+1}, \bm{\theta}^{k+1}, v^{k+1})].
	\end{aligned}
	\end{equation*}
	Summing this inequality from $ k = 0 $ to $ K-1 $ and multiplying the result by $ \frac{1}{K} $, we get 
	\begin{equation*}
	\small
	\begin{aligned}  & \frac{1}{K}\sum_{k=0}^{K-1}\mathbb{E}[\|\nabla_{\bm{\omega}} \tilde{\mathcal{L}}_{0} ( \bm{\omega}^{k+1}, \bm{\theta}^{k+1},v^{k+1}) \|^2 ]  \leq \frac{1}{K\hat{p}}[\frac{6[(c^{-\frac{T}{2}} - 1)^{-2}L_{h}^2 + \rho^2 ]}{\rho - L_{f} - 2(c^{-\frac{T}{2}} - 1)^{-1}L_{h}} + \frac{12\rho^3}{\rho^2 - L_{\tilde{g}}\rho - 2L_{\tilde{g}}^2}](\tilde{\mathcal{L}}_{\rho}^{0} - \mathbb{E} \tilde{\mathcal{L}}_{\rho}^{K}),
	\end{aligned}
	\end{equation*}
	where $\tilde{\mathcal{L}}_{\rho}^{0} =  \tilde{\mathcal{L}}_{\rho}(\omega^{0}, \theta^{0}, v^{0})$. By Lemma \ref{lem:bounded}, we know $ \mathbb{E} \tilde{\mathcal{L}}_{\rho}^{K} \geq F^*$. $ \theta^{0} = v^{0} = 0 $, $ \omega_{1}^{0} = ... = \omega_{m}^{0} $, $\tilde{\mathcal{L}}_{\rho}^{0} =  \tilde{F}(\omega^0)$. By Proposition 1, we know $ \tilde{F}(\omega^0) \leq F(\omega^0)+ \frac{m \xi \lambda}{4}$. Finally, we have
	\begin{equation*}
	\small
	\begin{aligned}  \frac{1}{K}\sum_{k=0}^{K-1}\mathbb{E}[\|\nabla_{\bm{\omega}} \tilde{\mathcal{L}}_{0} ( \bm{\omega}^{k+1}, \bm{\theta}^{k+1},v^{k+1}) \|^2 ]  \leq \frac{C_{1}(F(\omega^0) - F^* + m \xi \lambda/4)}{K},
	\end{aligned}
	\end{equation*}
	where $ C_{1} =  \frac{1}{\hat{p}}[\frac{6[(c^{-\frac{T}{2}} - 1)^{-2}L_{h}^2 + \rho^2]}{\rho - L_{f} - 2(c^{-\frac{T}{2}} - 1)^{-1}L_{h}} + \frac{12\rho^3}{\rho^2 - L_{\tilde{g}}\rho - 2L_{\tilde{g}}^2}]$.
	
	(b)	First,
	\begin{equation*}
	\small
	\begin{aligned}
	\sum_{i=1}^{m}\sum_{j=1}^{m}\|\nabla_{\theta_{ij}} \tilde{\mathcal{L}}_{0} (\bm{\omega}^{k+1}, \bm{\theta}^{k+1}, v^{k+1})\|^2  = \frac{1}{4m^2} \sum_{i=1}^{m}\sum_{j=1}^{m} \|\nabla g(\|\bm{\theta}_{ij}^{k+1}\|) - v_{ij}^{k+1}\|^2 = 0.
	\end{aligned}
	\end{equation*}
	Taking the total expectation $ \mathcal{F}_{k} $, we obtain
	\begin{equation*}
	\mathbb{E}\|\nabla_{\bm{\theta}} \tilde{\mathcal{L}}_{0} (\bm{\omega}^{k+1},\bm{\theta}^{k+1},v^{k+1})\|^2 =  0.
	\end{equation*}		
	(c)
	By the update of $v_{ij}^k$ \eqref{eq:v}, we have  
	\begin{equation*}
	\begin{aligned}
	\nabla_{v_{ij}} \tilde{\mathcal{L}}_{0} (\bm{\omega}^{k+1}, \bm{\theta}^{k+1}, v^{k+1}) = \frac{1}{2m}(\matr{A}_{ij}\bm{\omega}^{k+1} - \bm{\theta}_{ij}^{k+1}) \overset{\eqref{eq:v}}{=} \frac{1}{2m\rho}(v_{ij}^{k+1} - v_{ij}^{k}).
	\end{aligned}	
	\end{equation*}
	Then, by \eqref{eq:theta v4}, we obtain  
	\begin{equation*}
	\small
	\sum_{i=1}^{m}\sum_{j=1}^{m}\|\nabla_{v_{ij}} \tilde{\mathcal{L}}_{0} (\bm{\omega}^{k+1}, \bm{\theta}^{k+1}, v^{k+1})\|^2  \leq \frac{L_{\tilde{g}}^2}{4m^2\rho^2}\sum_{i=1}^{m}\sum_{j=1}^{m}\|\bm{\theta}_{ij}^{k+1} - \bm{\theta}_{ij}^{k}\|^2.
	\end{equation*}
	By Lemma \ref{lem:descent of L},
	\begin{equation*}
	\small
	\begin{aligned}
	\sum_{i=1}^{m}\sum_{j=1}^{m}\|\nabla_{v_{ij}} \tilde{\mathcal{L}}_{0} (\bm{\omega}^{k+1}, \bm{\theta}^{k+1}, v^{k+1})\|^2  \leq \frac{L_{\tilde{g}}^2}{m\rho (\rho^2 - L_{\tilde{g}}\rho - 2L_{\tilde{g}}^2)}[\tilde{\mathcal{L}}_{\rho}(\bm{\omega}^{k}, \bm{\theta}^{k}, v^{k})- \tilde{\mathcal{L}}_{\rho}(\bm{\omega}^{k+1}, \bm{\theta}^{k+1}, v^{k+1})].
	\end{aligned}
	\end{equation*}
	Now, taking the total expectation $ \mathcal{F}_{k} $,
	\begin{equation}
	\small
	\begin{aligned}
	\mathbb{E}\|\nabla_{v} \tilde{\mathcal{L}}_{0} (\bm{\omega}^{k+1}, \bm{\theta}^{k+1}, v^{k+1})\|^2  \leq \frac{L_{\tilde{g}}^2}{m\rho (\rho^2 - L_{\tilde{g}}\rho - 2L_{\tilde{g}}^2)}[	\mathbb{E}\tilde{\mathcal{L}}_{\rho}(\bm{\omega}^{k}, \bm{\theta}^{k}, v^{k})
	-\mathbb{E} \tilde{\mathcal{L}}_{\rho}(\bm{\omega}^{k+1}, \bm{\theta}^{k+1}, v^{k+1})].
	\end{aligned}
	\end{equation}	
	Summing this inequality from $ k = 0 $ to $ K $ and multiplying the result by $ \frac{1}{K} $,
	\begin{equation*}
	\begin{aligned}
	\frac{1}{K} \sum_{k=0}^{K-1}\mathbb{E}\|\nabla_{v} \tilde{\mathcal{L}}_{0} (\bm{\omega}^{k+1},\bm{\theta}^{k+1},v^{k+1})\|^2 \leq  \frac{L_{\tilde{g}}^2}{m\rho  (\rho^2 - L_{\tilde{g}}\rho - 2L_{\tilde{g}}^2)K}(\tilde{\mathcal{L}}_{\rho}^{0} - \tilde{\mathcal{L}}_{\rho}^{*}) \leq \frac{C_{2}(F(\omega^0) - F^*+m \xi \lambda/4)}{K},
	\end{aligned}
	\end{equation*}
	where $ C_{2} = \frac{L_{\tilde{g}}^2}{m\rho  (\rho^2 - L_{\tilde{g}}\rho - 2L_{\tilde{g}}^2)}$.
	
	For the smoothed SCAD penalty, define $	\tilde{\mathcal{G}}(\omega, \theta, v)= [\theta - \text{prox}_{ \tilde{\mathcal{L}}_{0}}(\theta)]$. By the definition of $\tilde{\mathcal{L}}_{0}$, we have
	\begin{equation}
	\begin{aligned}
	\text{prox}_{ \tilde{\mathcal{L}}_{0}}(\bm{\theta}_{ij}) &= \arg \min_{x}[\tilde{g}(x) - <v_{ij}, x> + \frac{1}{2} \|x - \bm{\theta}_{ij}\|^2]\\
	&= \arg \min_{x}[\tilde{g}(x) + \frac{1}{2} \|x - (\bm{\theta}_{ij} +  v_{ij})\|^2]= \text{prox}_{\tilde{g}}(\bm{\theta}_{ij} +  v_{ij}).
	\end{aligned}
	\end{equation}
	Similarly, we have $\text{prox}_{ \mathcal{L}_{0}}(\bm{\theta}_{ij}) = \text{prox}_{ g}(\bm{\theta}_{ij} +  v_{ij})  $. Then, the difference between $ \tilde{\mathcal{G}}_{}(\omega, \theta, v) $ and $\mathcal{G}_{}(\omega, \theta, v) $ is
	\begin{equation}
	\begin{aligned}
	\|\tilde{\mathcal{G}}_{}(\omega, \theta, v) -  \mathcal{G}_{}(\omega, \theta, v) \|^2=\sum_{i=1}^{m}\sum_{j=1}^{m} \|\text{prox}_{ \tilde{g}}(\bm{\theta}_{ij} +  v_{ij})- \text{prox}_{ g}(\bm{\theta}_{ij} +  v_{ij}) \|^2.
	\end{aligned}
	\end{equation}
	Because
	\begin{equation}
	\small
	\hspace{-5mm}
	\begin{aligned}
	\text{prox}_{ \tilde{g}}(\bm{\theta}_{ij} +  v_{ij}) = \arg \min_{x}[\tilde{g}(x) + \frac{1}{2} \|x - (\bm{\theta}_{ij} +  v_{ij})\|^2] = 
	\begin{cases}
	\frac{\xi}{\lambda  + \xi}(\bm{\theta}_{ij} +  v_{ij}), &\|\bm{\theta}_{ij} +  v_{ij}\| \leq \lambda  + \xi, \\
	(1 - \frac{\lambda }{\|\bm{\theta}_{ij} +  v_{ij}\|})(\bm{\theta}_{ij} +  v_{ij}),  &\lambda  + 
	\xi < \|\bm{\theta}_{ij} +  v_{ij}\| \leq \lambda  + \lambda, \\
	\frac{\max\{0,1 - \frac{a \lambda }{(a-1) \|\bm{\theta}_{ij} +  v_{ij}\|}\}}{1-1/[(a-1)\rho]}(\bm{\theta}_{ij} +  v_{ij}), &\lambda  + 
	\lambda < \|\bm{\theta}_{ij} +  v_{ij}\| \leq a \lambda, \\
	\bm{\theta}_{ij} +  v_{ij}, &\|\bm{\theta}_{ij} +  v_{ij}\|> a \lambda,
	\end{cases}
	\end{aligned}
	\end{equation}
	and 
	\begin{equation}
	\small
	\begin{aligned}
	\text{prox}_{ g}(\bm{\theta}_{ij} +  v_{ij}) = \arg \min_{x}[g(x) + \frac{1}{2} \|x - (\bm{\theta}_{ij} +  v_{ij})\|^2] = 
	\begin{cases}
	(1 - \frac{\lambda }{\|\bm{\theta}_{ij} +  v_{ij}\|})_{+}(\bm{\theta}_{ij} +  v_{ij}),  &\|\bm{\theta}_{ij} +  v_{ij}\| \leq \lambda  + \lambda, \\
	\frac{\max\{0,1 - \frac{a \lambda }{(a-1) \|\bm{\theta}_{ij} +  v_{ij}\|}\}}{1-1/[(a-1)\rho]}(\bm{\theta}_{ij} +  v_{ij}), &\lambda  + 
	\lambda < \|\bm{\theta}_{ij} +  v_{ij}\| \leq a \lambda, \\
	\bm{\theta}_{ij} +  v_{ij}, &\|\bm{\theta}_{ij} +  v_{ij}\|> a \lambda,
	\end{cases}
	\end{aligned}
	\end{equation}
	we have
	\begin{equation}
	\hspace{-5mm}
	\begin{aligned}
	\|\text{prox}_{ \tilde{g}}(\bm{\theta}_{ij} +  v_{ij})- \text{prox}_{ g}(\bm{\theta}_{ij} +  v_{ij}) \| = 
	\begin{cases}
	\| [\frac{\xi}{\lambda  + \xi} -  (1 - \frac{\lambda }{\|\bm{\theta}_{ij} +  v_{ij}\|})_{+}](\bm{\theta}_{ij} +  v_{ij}) \|, & \|\bm{\theta}_{ij} +  v_{ij}\| \leq \lambda  +\xi, \\
	0,& \text{else}.			
	\end{cases}
	\end{aligned}
	\end{equation}
	Consequently, we obtain 
	\[\|\tilde{\mathcal{G}}_{}(\omega, \bm{\theta}, v) -  \mathcal{G}_{}(\omega, \bm{\theta},v)  \|^2 \leq \frac{m^2\lambda^2 \xi^2}{(\lambda  + \xi)^2}. \]
	Since $ \mathbb{E}\|\nabla_{\bm{\theta}} \tilde{\mathcal{L}}_{0}(\bm{\omega}^{k+1}, \bm{\theta}^{k+1}, v^{k+1})\|^2 = 0 $, which is equivalent to $ \mathbb{E}\|\tilde{\mathcal{G}}_{}(\bm{\omega}^{k+1}, \bm{\theta}^{k+1}, v^{k+1})\|^2 = 0 $, together with Proposition 1, we obtain 
	\begin{equation*}
	\small
	\hspace{-5mm}
	\begin{aligned}
	\frac{1}{K}\sum_{k=0}^{K-1}\mathbb{E}[\|\nabla_{\bm{\omega}} \mathcal{L}_{0} ( \bm{\omega}^{k+1}, \bm{\theta}^{k+1},v^{k+1}) \|^2 ]  &\leq \frac{C_{1}(F(\omega^0) - F^* + m \xi \lambda/4)}{K},\\
	\mathbb{E}\|\mathcal{G}(\bm{\omega}^{k+1},\bm{\theta}^{k+1},v^{k+1})\|^2 &\leq  \frac{m^2\lambda^2 \xi^2}{(\lambda  + \xi)^2},\\
	\frac{1}{K} \sum_{k=0}^{K-1}\mathbb{E}\|\nabla_{v} \mathcal{L}_{0} (\bm{\omega}^{k+1},\bm{\theta}^{k+1},v^{k+1})\|^2
	& \leq \frac{C_{2}(F(\omega^0) - F^*+ m \xi \lambda/4)}{K}.
	\end{aligned}		
	\end{equation*}
\end{proof}

\section{Statistical convergence rate under a linear model}
\label{app:D}
\setcounter{equation}{0}
\renewcommand\theequation{D.\arabic{equation}}

\subsection{Notations and preparation}
We provide the prerequisite notations, definitions, and conditions in the following. Note that some of them have been provided in the main text, but we also provide them here for completeness. 

Denote the true values of parameters as $\bm{\omega}^{true}=({\bm\omega_1^{true}}^{\top},\ldots,{\bm\omega_m^{true}}^{\top})^\top$. Let $\bm\alpha^{true}=({\bm\alpha^{true}_1}^{\top},\ldots,{\bm\alpha^{true}_L}^{\top})^{\top}$ be the distinct values of $\bm{\omega}^{true}$, and then $\bm\omega_i^{true}=\bm\alpha_l^{true}$ for $i\in G_l$. Denote the minimal differences of the common model parameters between two clusters as 
\begin{equation*}
b=\min_{i\in G_l,j\in G_{l'},l\neq l'}\|\bm\omega_i^{true}-\bm\omega_j^{true}\|=\min_{l\neq l'}\|\bm\alpha_l^
{true}-\bm\alpha_{l'}^{true}\|.	
\end{equation*}
Let $n=\sum_{i=1}^m{n_i}$, $n_{G_l}=\sum_{i\in {G}_l}n_i$, $ n_{\min}=\min_{l\in [L]} n_{G_l} $ and $n_{\max}=\max_{l\in [L]} n_{G_l}$. 
Write $|G_{\min}|=\min_{l\in [L]}|G_{l}|$, where $|\cdot|$ is the cardinality of the set. For a matrix $ \matr{A} $, denote its smallest / largest eigenvalue as $ \lambda_{\min}(\matr{A}) $ / $  \lambda_{\max}(\matr{A}) $, $ \ell_{2} $-norm as $ \|\matr{A}\| = \sqrt{\lambda_{\max}(\matr{A}^\top \matr{A})} $, and Frobenius norm as $ \|\matr{A}\|_{F} = \sqrt{\text{tr}(\matr{A}^\top\matr{A})}$.

Define $\mathcal{M}_G=\{\bm\omega\in\mathbb{R}^{md}: \bm\omega_i=\bm\omega_j, \text{for any}\ i,j\in {G}_l, l \in [L]\}$. Let $\tilde{\bm Z}=\{Z_{il}\}$ be a $m\times L$ matrix with  
$Z_{il}=1$ for $i \in G_l$ and $Z_{il}=0$ otherwise. Let $\bm Z=\tilde{\bm Z}\otimes \bm I_d$. Therefore, each $\bm \omega \in \mathcal{M}_G$ can be rewritten as $\bm\omega = \bm Z \bm \alpha$, where $\bm \alpha=(\bm\alpha_1^\top, \ldots, \bm \alpha_L^\top)^\top$ and $\bm\alpha_l$ is the $d\times 1$ vector of the $l$-th cluster-specific parameters for $l=1,\ldots,L$. For $i\in [m]$, write $\bm X_i=((\bm X_i^1)^\top,\ldots,(\bm X_i^{n_i})^\top)^\top$, $\bm y_i=(y_i^s,\ldots,y_{n_i}^s)^\top$, and $\bm \tau_i=(\tau_i^1,\ldots,\tau_i^{n_i})^\top$. Define $\bm\tau_{G_l}=\{\bm\tau_i^\top,i\in G_l\}^\top$ for $l\in [L]$. Denote the $j$-th column of $\bm X_i$ as $\bm X_{ij}$, and then $\bm X_i=(\bm X_{i1},\ldots,\bm X_{id})$. Let $\bm{X}=\text{diag}(n_1^{-1/2}\bm X_1,\cdots,n_m^{-1/2}\bm X_m)$, $\bm y =(n_1^{-1/2}\bm y_{1}^{\top},\cdots,n_m^{-1/2}\bm y_{m}^{\top})^{\top}$, $\bm\tau =(n_1^{-1/2}\bm\tau_{1}^{\top},\cdots,n_m^{-1/2}\bm\tau_{m}^{\top})^{\top}$. Then, the linear model (9) and loss function $f(\omega)$ can be written compactly as follows: $\bm y=\bm X\bm \omega^{true}+\bm \tau$ and $f(\bm\omega)=\|\bm y-\bm X\bm\omega\|^2$.  

Consider the oracle estimator $\bm{\hat\omega^{\text{or}}}$ for $ \bm{\omega}$, under which the underlying clusters of devices $G_1,\ldots, G_L$ are known. Specifically,
\begin{equation}
\label{eq:alpha_or}
\bm{\hat\omega^{\text{or}}}=\underset{\bm{\omega}\in\mathcal{M}_{G}}{\arg\min}\|\bm{y}-\bm{X}\bm{\omega}\|^2.
\end{equation}
Correspondingly, the oracle estimator for the common parameters $\bm \alpha$ is 
\[\bm{\hat\alpha^{\text{or}}}=\underset{\bm{\alpha}\in\mathbb{R}^{Ld}}{\arg\min}\|\bm{y}-\bm{X}\bm{Z}\bm{\alpha}\|^2.\]
For simplicity, we denote the scaled SCAD penalty by $\rho(t)=\lambda^{-1}P_{a}(t,\lambda)$. Note that $ \rho(t) $ is nondecreasing and concave in $ t$ for $t \in [0, \infty)$ and $\rho(0)=0$. When $ |t| > a \lambda $, $\rho(t)$ is a constant. $\rho'(t)$ exists and is continuous except for a finite number of $t$ and $\rho'(0+)=1$. 

Let $P(\bm\omega,\lambda)=\frac{\lambda}{2m}\sum_{i=1}^{m}\sum_{j=1}^{m}\rho(\Vert\bm\omega_i-\bm\omega_j\Vert)$. Then, we can rewrite the objective function $F(\bm{\omega})$ as
\begin{equation}
\label{eq:obj1}
F(\bm{\omega})=f(\bm{\omega})+P(\bm{\omega}, \lambda).
\end{equation}
The following conditions are assumed:  
\begin{itemize}
	\item[(1)]  There exists a constant $C>0$ such that $\lambda_{\min}(\bm X_i^{\top}\bm X_i)\geq Cn_i$ for $i\in [m]$.
	\item[(2)] $\Vert\bm X_{ij}\Vert =\sqrt{n_i}$ for $i \in [m]$, $j \in [d]$.
	\item[(3)] For $l\in [L]$, the noise vector $\bm\tau_{G_{l}}$ has sub-Gaussian tails such that $P(|\langle \bm{c}, \bm{\tau}_{{G}_l}\rangle|>\Vert\bm{c}\Vert x)\leq 2e^{-c_1 x^2}$ for any vector $\bm{c}\in \mathbb{R}^{|G_l|}$ and $x>0$, where $0<c_1<\infty$ is a constant.
\end{itemize}
These three conditions are common assumptions in penalized regression in high-dimensional settings. Under these mild assumptions, we can establish the statistical convergence rate.

\subsection{Proof of Theorem 3}
The proof of Theorem 3 is equivalent to establishing the following two results. 

{\bf Result 1}: Suppose Conditions (1)-(3) hold. Then, we have
\[P(\underset{i\in [m]}{\sup}\Vert{\hat\omega}_i^{\text{or}}-\bm\omega_i^{true}\Vert \leq\Lambda_n)\geq 1-2Ldn^{-n_{\min} n_{\max}^{-1}},\]
where $\Lambda_n=C^{-1} |{G}_{\min}|^{-1}c_1^{-1/2}\sqrt{2n_{\min}^{-1}Ld\log n }$.

{\bf Result 2}: Suppose $b>a\lambda$, $\lambda\gg m |{G}_{\min}|^{-1}\sqrt{n_{\min}^{-1}Ld^3\log n }+m\sqrt{n_{\min}^{-1}d\log n }$, and the conditions in Result 1 hold. Then, there exists a local minimizer $ \bm{\omega}^{*} $ of $ F(\bm\omega) $ such that 
\begin{equation*}
P({\bm \omega}^{*}= \hat{\bm{\omega}}^{\text{or}})\geq 1-2Ldn^{-n_{\min}n_{\max}^{-1}}-2dn^{-1}.
\end{equation*}

\paragraph{Proof of Result 1.}\label{proofres1}
Recall that 
\begin{equation*}
\begin{aligned}
{\hat\alpha^{\text{or}}}= \underset{\bm{\alpha}\in\mathbb{R}^{Ld}}{\arg\min}\|\bm{y}-\bm{X}\bm{Z}\bm{\alpha}\|^2 = (\bm{Z^{\top}}\bm{X^{\top}}\bm{X}\bm{Z})^{-1}(\bm{Z^{\top}}\bm{X^{\top}})\bm{y}. 
\end{aligned}
\end{equation*}
Then, using the relation $\bm y=\bm X\bm \omega+\bm \tau$, we have 
\begin{equation*}
{\hat \alpha^{\text{or}}}-\bm\alpha^{true}=(\bm{Z^{\top}}\bm{X^{\top}}\bm{X}\bm{Z})^{-1}\bm{Z^{\top}}\bm{X^{\top}}\bm{\tau},
\end{equation*}
and 
\begin{equation}
\label{eq:alpha_or-alpha_true}
\Vert{\hat\alpha^{\text{or}}}-\bm\alpha^{true}\Vert\leq\Vert(\bm{Z^{\top}}\bm{X^{\top}}\bm{X}\bm{Z})^{-1}\Vert\Vert\bm{Z^{\top}}\bm{X^{\top}}\bm{\tau}\Vert.
\end{equation}
By Condition (1),
\begin{eqnarray}\label{21}
\Vert(\bm{Z^{\top}}\bm{X^{\top}}\bm{X}\bm{Z})^{-1}\Vert = \lambda_{
	\min}^{-1}(\bm{Z^{\top}}\bm{X^{\top}}\bm{X}\bm{Z})\leq C^{-1} |G_{\min}|^{-1}.
\end{eqnarray}
Moreover, 
\begin{equation}
\label{eq:inftynorm}
\Vert\bm{Z^{\top}}\bm{X^{\top}}\bm{\tau}\Vert_{\infty}=\underset{j \in [d], l\in [L]}{\sup}|\sum_{i=1}^m \bm X_{ij}^{\top}\bm\tau_i I(i\in G_{l})|. 
\end{equation}
Write $\bm X_{G_l,j}=\{\bm X_{ij}^\top,i \in G_l\}$ for $l \in [L]$. By \eqref{eq:inftynorm}, we have 
\begin{equation}
\small
\begin{aligned}
P(\Vert\bm{Z^{\top}}\bm{X^{\top}}\bm{\tau}\Vert_{\infty}>c_1^{-\frac{1}{2}}\sqrt{2n_{\min}^{-1}\log n})&\overset{\eqref{eq:inftynorm}}{\leq}\sum_{j=1}^{d}\sum_{l=1}^{L}P(|\sum_{i=1}^m n_i^{-1}\bm X_{ij}^{\top}\bm{\tau}_iI(i\in G_{l})|>c_1^{-\frac{1}{2}}\sqrt{2n_{\min}^{-1}\log n})\nonumber\\
&\leq \sum_{j=1}^{d}\sum_{l=1}^{L}P(n_{\min}^{-1}|\bm X_{G_{l},j}^{\top}\bm{\tau}_{G_{l}}|>c_1^{-\frac{1}{2}}\sqrt{2n_{\min}^{-1}\log n})\nonumber\\
&\leq \sum_{j=1}^{d}\sum_{l=1}^{L}P(|\bm X_{G_{l},j}^{\top}\bm{\tau}_{G_{l}}|>\Vert \bm X_{G_{l},j}\Vert n_{\min}^{1/2}n_{\max}^{-1/2}c_1^{-\frac{1}{2}}\sqrt{\log n})\nonumber\\
&\leq 2Ld e^{(-n_{\min}n_{\max}^{-1}\log n)} \nonumber= 2Ldn^{-n_{\min} n_{\max}^{-1}},\nonumber
\end{aligned}
\end{equation}
where the third inequality follows from $\Vert\bm X_{G_{l},j}\Vert \leq\sum_{i\in G_{l}}\Vert\bm X_{ij}\Vert \leq\sqrt{2n_{\max}}$.

Note that $\Vert\bm{Z^{\top}}\bm{X^{\top}}\bm{\tau}\Vert\leq\sqrt{Ld}\Vert\bm{Z^{\top}}\bm{X^{\top}}\bm{\tau}\Vert_{\infty}$. Then,
\begin{equation}
\label{22}
\begin{aligned}
P(\Vert\bm{Z^{\top}}\bm{X^{\top}}\bm{\tau}\Vert>C_0\sqrt{2Ldn_{\min}^{-1}\log n})\leq P(\Vert\bm{Z^{\top}}\bm{X^{\top}}\bm{\tau}\Vert_{\infty}>c_1^{-\frac{1}{2}}\sqrt{2n_{\min}^{-1}\log n})\leq 2Ldn^{-n_{\min} n_{\max}^{-1}}.
\end{aligned}
\end{equation}
By \eqref{eq:alpha_or-alpha_true}, \eqref{21}, and \eqref{22}, with probability at least $1-2Ldn^{- n_{\min} n_{\max}^{-1}}$, we have
\begin{align*}
\underset{i \in [m]}{\sup}\Vert{\hat\omega}_i^{\text{or}}-\bm\omega_i^{true}\Vert\leq\underset{l \in [L]}{\sup}\Vert{\hat\alpha}_l^{\text{or}}-\bm\alpha_l^{true}\Vert\leq\Vert{\hat\alpha^{\text{or}}}-\bm\alpha^{true}\Vert\leq\Lambda_n,
\end{align*}
where $\Lambda_n=C^{-1} |G_{\min}|^{-1}c_1^{-1/2}\sqrt{2Ldn_{\min}^{-1}\log n}$.
This completes the proof. 

\paragraph{Proof of Result 2}\label{proofres2}
Let $T:\mathcal{M}_{G}\rightarrow \mathbb{R}^{Ld}$ be the mapping such that $T(\bm\omega)$ is the $Ld\times 1$ vector consisting of $L$ vectors with dimension $d$, and its $l$-th vector component equals the common value of $\bm\omega_i$ for $i\in G_{l}$. Let $T^*:\mathbb{R}^{md}\rightarrow\mathbb{R}^{Ld}$ be the mapping such that $T^*(\bm{\omega})=\{\sum_{i\in G_{l}}\frac{n_i}{n_{G_{l}}}\bm{\omega}_i^\top, l \in [L]\}^\top$, where $n_{G_{l}}=\sum_{i\in G_{l}}n_i$. Obviously, when $\bm\omega\in\mathcal{M}_{G}$, $T(\bm\omega)=T^*(\bm\omega)$. For any $\bm{\omega}\in\mathbb{R}^{md}$, let $T^*(\bm{\omega})=\bm\alpha=({\bm\alpha_1}^{\top},\ldots,{\bm\alpha_L}^{\top})^{\top}$ and $\bm{\omega}^{inv}=T^{-1}(T^*(\bm{\omega}))=T^{-1}(\bm\alpha)$.

Consider the neighborhood of $\bm\omega^{true}$: 
$$\Theta=\{\bm{\omega}\in\mathbb{R}^{nd}: \underset{i \in [m]}{\sup}\Vert\bm\omega_i-\bm\omega_i^{true}\Vert \leq\Lambda_n.\}$$
By Result 1, there exists an event $E_1$ such that $\underset{i \in [m]}{\sup}\Vert{\hat\omega}_i^{\text{or}}-\bm\omega_i^{true}\Vert \leq\Lambda_n$ and $P(E_1^C)\leq 2Ldn^{-n_{\min} n_{\max}^{-1}}$. Thus, ${\hat\omega}^{\text{or}}\in\Theta$ in $E_1$.
We show that ${\hat\omega}^{\text{or}}$ is a strict local minimizer of the objective function $ F(\bm{\omega})$ with probability at least $ 1-2Ldn^{-n_{\min}n_{\max}^{-1}}-2dn^{-1}$ through the following two steps: \\
\noindent(i) In the event $E_1$, $F(\bm{\omega}^{inv})>F({\hat\omega}^{\text{or}})$ for any $\bm\omega\in\Theta$ and $\bm{\omega}^{inv}\neq{\hat\omega}^{\text{or}}$. \\
\noindent(ii) There is an event $E_2$ such that $P(E_2^C)<2dn^{-1}$. In $E_1\cap E_2$, $F(\bm{\omega})\geq F(\bm{\omega}^{inv})$ for any $\bm\omega\in\Theta$ for sufficiently large $n$.

With the results in (i) and (ii), for any $\bm\omega\in \Theta$ and $\bm\omega \neq {\hat\omega}^{\text{or}}$ in $E_1\cap E_2$, we have $ F(\bm\omega)> F({\hat\omega}^{\text{or}})$, so ${\hat\omega}^{\text{or}}$ is a strict local minimizer of $ F(\bm\omega)$ over the event $E_1\cap E_2$ with $P(E_1\cap E_2)\geq 1-2Ldn^{-n_{\min}n_{\max}^{-1}}-2dn^{-1}$ for sufficiently large $n$. 

We first prove the results in (i). Because ${\hat\omega}^{\text{or}}$ is the unique global minimizer of $f(\bm{\omega})$ for $\bm\omega\in\mathcal{M}_{G}$, $f(\bm{\omega}^{inv})>f({\hat\omega}^{\text{or}})$, so we need to consider $P(\bm{\omega}^{inv},\lambda)$. Because 
\begin{equation*}
\begin{aligned}
\Vert\bm\alpha_l-\bm\alpha_{l'}\Vert \geq\Vert\bm\alpha_l^{true}-\bm\alpha_{l'}^{true}\Vert-\Vert\bm\alpha_l-\bm\alpha_l^{true}\Vert-\Vert\bm\alpha_{l'}^{true}-\bm\alpha_{l'}\Vert\geq \Vert\bm\alpha_l^{true}-\bm\alpha_{l'}^{true}\Vert-2\underset{l \in [L]}{\sup}\Vert\bm\alpha_l-\bm\alpha_{l'}^{true}\Vert
\end{aligned}
\end{equation*}
and
\begin{equation}
\label{eq:alpha2}
\begin{aligned}
\sup_{l\in [L]}\|\bm\alpha_l-\bm\alpha_{l'}^{true}\|^2 
&= \sup_{l\in [L]}\||G_l|^{-1}\sum_{i\in G_l}\bm\omega_i-\bm\alpha_l^{true}\|^2 \\
&=\sup_{l\in [L]}\||G_l|^{-1}\sum_{i\in G_l}(\bm\omega_i-\bm\omega_i^{true})\|^2\\
& = \sup_{l\in [L]}|G_l|^{-2}\|\sum_{i\in G_l}(\bm\omega_i-\bm\omega_i^{true})\|^2\\
&\leq \sup_{l\in [L]}|G_l|^{-1}\sum_{i\in G_l}\|\bm\omega_i-\bm\omega_i^{true})\|^2\\
& \leq \sup_{l\in [L]}\|\bm\omega_i-\bm\omega_i^{true}\|^2 \leq \Lambda_n^2, 
\end{aligned}
\end{equation}
for all $l, l'\in [L]$, we have 
\begin{equation*}
\Vert\bm\alpha_l-\bm\alpha_{l'}\Vert \geq b-2\Lambda_n \geq a\lambda, 
\end{equation*}
where the last inequality follows from the assumption that $b>a\lambda\gg\Lambda_n$. Consequently, $\rho(\Vert\bm\alpha_l-\bm\alpha_{l'}\Vert)$ is a constant by the property of the SCAD function. Also note that 
\begin{equation*}
\begin{aligned}
P(\bm\omega^{inv},\lambda) =\frac{\lambda}{2m}\sum_{i=1}^{m}\sum_{j=1}^{m}\rho(\Vert\bm\omega_i^{inv}-\bm\omega_j^{inv}\Vert)=\frac{\lambda}{2m}\sum_{l\neq l'}{|G_{l}||G_{l'}|}\rho(\Vert\bm\alpha_l-\bm\alpha_{l'}\Vert).
\end{aligned}
\end{equation*}
So $P(\bm\omega^{inv},\lambda)$ is a constant. Therefore, $f(\bm{\omega}^{inv})+P(\omega^{inv},\lambda)>f({\hat\omega}^{\text{or}})+P(\hat{\bm\omega}^{\text{or}},\lambda)$, i.e., $F({\omega}^{inv})>F({\hat\omega}^{\text{or}})$ for all $\bm{\omega}^{inv}\neq {\hat\omega}^{\text{or}}$, and the result in (i) is proved. 

Next we prove the result in (ii). For $\bm\omega\in\Theta$, by Taylor's expansion, we have 
\[ F(\bm{\omega})- F(\bm{\omega}^{inv})=\Gamma_1+\Gamma_2,\]
where 
$\Gamma_1=-\langle \bm y-\bm{X} \tilde{\bm{\omega}}, \bm X(\bm\omega-\bm\omega^{inv})\rangle,\Gamma_2=\frac{\partial P(\bm{\omega},\lambda)}{\partial\bm{\omega}}\big|_{\bm{\omega}=\tilde{\bm{\omega}}}(\bm\omega-\bm{\omega}^{inv}),$
and $\tilde{\bm{\omega}}=c\bm{\omega}+(1-c)\bm{\omega}^{inv}$ for some constant $c\in(0,1)$. 

First, we consider $\Gamma_2$. We have  
\begin{equation}
\small
\hspace{-5mm}
\begin{aligned}
\Gamma_2 =\frac{\partial P(\bm{\omega},\lambda)}{\partial\bm{\omega}}\big|_{\bm{\omega}=\tilde{\bm{\omega}}}(\bm{\omega}-\bm{\omega}^{inv}) =\frac{\lambda}{2m}\sum_{i \ne j}\rho'(\Vert\tilde{\bm{\omega}}_{i}-\tilde{\bm{\omega}}_{j}\Vert)\langle\frac{\tilde{\bm{\omega}}_{i}-\tilde{\bm{\omega}}_{j}}{\Vert\tilde{\bm{\omega}}_{i}-\tilde{\bm{\omega}}_{j}\Vert} \left[(\bm\omega_i-\bm\omega_i^{inv})-(\bm\omega_j-\bm\omega_j^{inv})\right]\rangle.
\end{aligned}
\end{equation}
When $i,j\in G_{l}$, $\bm\omega_i^{inv}=\bm\omega_j^{inv}$ and $\tilde{\bm{\omega}}_{i}-\tilde{\bm\omega}_j=c(\bm\omega_i-\bm\omega_j)$. Thus, 
\begin{equation*} 
\small
\begin{aligned}
\Gamma_2  = &\frac{\lambda}{2m}\sum_{l=1}^{L} \sum_{\{i,j\in G_{l},i \ne j\}}\rho'(\Vert\tilde{\bm{\omega}}_{i}-\tilde{\bm\omega}_{j}\Vert)\frac{(\tilde{\bm{\omega}}_{i}- \tilde{\bm\omega}_{j})^\top}{\Vert\tilde{\bm{\omega}}_{i}-\tilde{\bm\omega}_{j}\Vert} \left[(\bm\omega_i-\bm\omega_i^{inv})-(\bm\omega_j-\bm\omega_j^{inv})\right]\\
&+ \frac{\lambda}{2m}\sum_{l\neq l'} \sum_{\{i\in G_l, j\in G_{l'}\}}\rho'(\Vert\tilde{\bm{\omega}}_{i}-\tilde{\bm\omega}_{j}\Vert)\frac{(\tilde{\bm{\omega}}_{i}- \tilde{\bm\omega}_{j})^\top}{\Vert\tilde{\bm{\omega}}_{i}-\tilde{\bm\omega}_{j}\Vert}\left[(\bm\omega_i-\bm\omega_i^{inv})-(\bm\omega_j-\bm\omega_j^{inv})\right].
\end{aligned}
\end{equation*}
Note that $\underset{i \in [m]}{\sup}\Vert\bm\omega_i^{inv}-\bm\omega_i^{true}\Vert= \underset{l \in [L]}{\sup}\Vert\bm\alpha_l-\bm\alpha_{l}^{true}\Vert \leq\Lambda_n $, which follows from \eqref{eq:alpha2}. Then, \begin{equation}\label{eq:Lambda_n}
\underset{i \in [m]}{\sup}\Vert\tilde{\bm{\omega}}_{i}-\bm\omega_i^{true}\Vert \leq c  \underset{i \in [m]}{\sup}\Vert\bm{\omega}_{i}-\bm\omega_i^{true}\Vert+(1-c) \underset{i \in [m]}{\sup}\Vert \bm{\omega}_{i}^{inv}-\bm\omega_i^{true}\Vert.
\end{equation}
Hence, for $i\in G_{l}$, $j\in G_{l'}$, $l\neq l'$, 
\begin{equation}
\begin{aligned}
\Vert\tilde{\bm{\omega}}_{i}-\tilde{\bm{\omega}}_{j}\Vert&\geq \min\limits_{i\in G_{l}, j\in G_{l'}}\Vert\bm\omega_i^{true}-\bm\omega_j^{true}\Vert-2\underset{k}{\sup}\Vert \tilde{\bm{\omega}}_{k}-\bm\omega_k^{true}\Vert \\\nonumber
&=\Vert\bm\alpha_l^{true}-\bm\alpha_{l'}^{true}\Vert-2\underset{k}{\sup}\Vert \tilde{\bm\omega}_k-\bm\omega_k^{true}\Vert \geq b-2\Lambda_n>a\lambda,
\end{aligned}
\end{equation}
and thus, $\rho'(\Vert\tilde{\bm{\omega}}_{i}-\tilde{\bm{\omega}}_{j}\Vert)=0$. Consequently, 
\begin{equation*} 
\begin{aligned}
\Gamma_2 &= \frac{\lambda}{2m}\sum_{l=1}^{L} \sum_{\{i,j\in G_{l},i \ne j\}}\rho'(\Vert\tilde{\bm{\omega}}_{i}-\tilde{\bm\omega}_{j}\Vert)\frac{(\tilde{\bm{\omega}}_{i}- \tilde{\bm\omega}_{j})^\top}{\Vert\tilde{\bm{\omega}}_{i}-\tilde{\bm\omega}_{j}\Vert} \left[(\bm\omega_i-\bm\omega_i^{inv})-(\bm\omega_j-\bm\omega_j^{inv})\right]\\
& =\frac{\lambda}{2m}\sum_{l=1}^{L}\sum_{\{i,j\in G_{l},i \ne j\}}\rho'(\Vert \tilde{\bm\omega}_i- \tilde{\bm\omega}_j \Vert)\Vert\bm\omega_i-\bm\omega_j\Vert,
\end{aligned}
\end{equation*}
where the last equality follows from $\tilde{\bm{\omega}}_{i}-\tilde{\bm\omega}_j=c(\bm\omega_i-\bm\omega_j)$. 
Furthermore, by the same reasoning as \eqref{eq:alpha2}, we have 
\begin{equation*}
\sup_{i\in[m]}\|\bm\omega_i^{inv}-\bm\omega_i^{true}\|=\sup_{l\in [L]}\|\bm\alpha_l-\hat{\bm\alpha}_l^{\text{or}}\|\leq \sup_{i\in [m]}\|\bm\omega_i-\hat{\omega}_i^{\text{or}}\|.
\end{equation*}
Then, it can be shown that 
\begin{equation*}
\begin{aligned}
\sup_{i\in [m]}\|\tilde{\bm\omega}_i-\tilde{\bm\omega}_j\|&\leq 2 \sup_{i\in [m]}\|\tilde{\bm\omega}_i-\bm\omega_i^{inv}\|\leq 2 \sup_{i\in [m]}\|{\bm\omega}_i-\bm\omega_i^{inv}\|\\
&\leq 2( \sup_{i\in [m]}\|{\bm\omega}_i-\hat{\bm\omega}_i^{\text{or}}\|+\sup_{i\in [m]}\|{\bm\omega}_i^{inv}-\bm\omega_i^{\text{or}}\|)\\
&\leq 4 \sup_{i\in [m]}\|{\bm\omega}_i-\hat{\bm\omega}_i^{\text{or}}\| \leq 4\Lambda_n.
\end{aligned}
\end{equation*}
Hence, $\rho'(\|\tilde{\bm\omega}_i-\tilde{\bm\omega}_j\|)\geq \rho'(4\Lambda_n)$ due to the concavity of $\rho(\cdot)$. As a result, 
\begin{equation}\label{eq:gamma3}
\begin{aligned}
\Gamma_2 \geq\frac{\lambda}{2m}\sum_{l=1}^{L}\sum_{\{i,j\in G_{l},i \ne j\}}\rho'(4\Lambda_n)\Vert\bm\omega_i-\bm\omega_j\Vert \geq\frac{\lambda}{m}\sum_{l=1}^{L}\sum_{\{i,j\in G_{l},i < j\}}\rho'(4\Lambda_n)\Vert\bm\omega_i-\bm\omega_j\Vert.
\end{aligned}
\end{equation}
Next, we consider $\Gamma_1$. 
Let $\tilde{v}=(\tilde{v}_1^\top,\cdots,\tilde{v}_m^\top)^\top=((\bm y-\bm{X}\tilde{\bm \omega})^\top\bm X)^\top$. Then, we have  
\begin{equation}\label{eq:gamma1}
\begin{split}
\Gamma_1  =-\tilde{v}^\top(\bm{\omega}-\bm{\omega}^{inv})
&=-\sum_{i=1}^{m}\tilde{v}_i^\top(\bm\omega_i-\bm\omega_i^{inv})\\
&=-\sum_{l=1}^{L}\sum_{\{i,j\in G_{l}, i<j\}}\frac{(n_j \tilde{v}_i-n_i \tilde{v}_{j})^\top(\bm\omega_i-\bm\omega_j)}{n_{ G_{l}}}\\
&\geq -\sum_{l=1}^{L}\sum_{\{i,j\in G_{l}, i<j\}}\frac{(n_j\Vert \tilde{v}_i\Vert + n_i\Vert \tilde{v}_j\Vert)\cdot\Vert\bm\omega_i-\bm\omega_j\Vert}{n_{ G_{l}}},
\end{split}
\end{equation}
where the second equality follows from $\bm{\omega}^{inv}=T^{-1}(T^*(\bm{\omega}))$ and $\sum_{i=1}^{m}\tilde{v}_i^\top \bm{\omega}_i^{inv} = \sum_{l=1}^L\sum_{i,j\in G_{l}}\frac{n_j}{n_{ G_{l}}}\tilde{v}_i^\top\bm{\omega}_j$. Moreover, 
\[\tilde{v}_i = n_i^{-1}(\bm X_i^\top \bm y_i - \bm X_i^\top\bm{X}_i \tilde{\bm{\omega}}_i)=n_i^{-1}(\bm X_i^\top\bm{X}_i(\bm{\omega}_i^{true}-\tilde{\bm{\omega}}_i)+\bm X_i^\top \bm\tau_i),\] 
and then
\begin{equation}\label{eq:vi}
\begin{split}
\Vert \tilde{v}_i\Vert&\leq n_i^{-1} (\Vert\bm X_i^\top\bm{X}_i\Vert_F\Vert\bm{\omega}_i^{true}-\tilde{\bm{\omega}}_i \Vert+\Vert\bm X_i^\top\bm\tau_i\Vert)\leq n_i^{-1}(\Vert\bm X_i^\top\bm{X}_i\Vert_F\Lambda_n+\Vert\bm X_i^\top\bm\tau_i\Vert).
\end{split}
\end{equation}
By Condition (2),
\[\Vert\bm X_i^\top\bm{X}_i\Vert_F=\sqrt{\sum_{l=1}^d \sum_{k=1}^d (\bm X_{il}^\top \bm X_{ik})^2}\leq dn_i.\]
By Condition (3),
\begin{equation}
\small
\begin{split}
P(\Vert\bm X_i^\top\bm\tau_i\Vert_{\infty}>\sqrt{c_1^{-1}n_i\log n}) &\leq\sum_{j=1}^{d}P(|\bm X_{ij}^\top\bm\tau_i|>\sqrt{c_1^{-1}n_i\log n})\leq \sum_{j=1}^{d}P(|\bm X_{ij}^\top\bm\tau_i|>\Vert\bm X_{ij}\Vert \sqrt{c_1^{-1}\log n})\leq 2dn^{-1}.
\end{split}
\end{equation}
Note that $\Vert\bm X_i^\top\bm\tau_i\Vert\leq\sqrt{d}\Vert\bm X_i^\top\bm\tau_i\Vert_{\infty}$, so
$P(\Vert\bm X_i^\top\bm\tau_i\Vert>\sqrt{c_1^{-1}dn_i\log n})\leq2dn^{-1}.$
Thus, there is an event $E_2$ such that $P(E_2^C)\leq 2dn^{-1}$, and over the event $E_2$,
\begin{equation}\label{eq:vi2}
\begin{split}
\Vert \tilde{v}_i\Vert \leq \sqrt{d}(\sqrt{d}\Lambda_n+\sqrt{c_1^{-1}n_i^{-1}\log n})&=\sqrt{d}(C^{-1} |G_{\min}|^{-1}c_1^{-1/2}d\sqrt{2n_{\min}^{-1}L\log n}+\sqrt{c_1^{-1}n_i^{-1}\log n})\\
&=\mathcal{O} (|G_{\min}|^{-1}\sqrt{n_{\min}^{-1}Ld^3 \log n}+ \sqrt{n_{\min}^{-1}d\log n})
\end{split}
\end{equation}
Because $n_j\leq n_{ G_{l}}$ for $j\in G_{l}$, we have 
\begin{equation*}
\begin{aligned}
n_j n_{ G_{l}}^{-1} |G_{\min}|^{-1}\sqrt{n_{\min}^{-1}Ld^3\log n}+mn_j n_{ G_{l}}^{-1}\sqrt{n_{\min}^{-1}d\log n}  \leq |G_{\min}|^{-1}\sqrt{n_{\min}^{-1} Ld^3\log n}+m\sqrt{n_{\min}^{-1}d\log n}.\nonumber
\end{aligned}
\end{equation*}
By the assumption that  $\lambda\gg m|G_{\min}|^{-1}\sqrt{Ld^3n_{\min}^{-1}\log n}+m\sqrt{n_{\min}^{-1}d\log n}$, we have 
\begin{equation*}
\frac{\lambda}{2m}\gg\frac{n_j\Vert \tilde{v}_i\Vert+n_i\Vert \tilde{v}_j\Vert}{n_{G_{l}}}
\end{equation*}
for $i,j\in G_{l}$, $i\neq j$, and $l\in [L]$. Note that $\rho'(4\Lambda_n)\rightarrow 1$ for the SCAD penalty.
Therefore, for sufficiently large $n$, we have
\begin{equation}
\begin{split}
F(\bm{\omega})-F(\bm{\omega}^{inv})=\Gamma_1+\Gamma_2 &\geq\sum_{l=1}^{L}\sum_{\{i,j\in G_{l},i<j\}}\Vert\bm\omega_i-\bm\omega_j\Vert\left(\frac{\lambda\rho'(4\Lambda_n)}{m}-\frac{n_j\Vert \tilde{v}_i\Vert+n_i\Vert \tilde{v}_j\Vert}{n_{G_{l}}}\right)>0.\nonumber
\end{split}
\end{equation}

Therefore, in $E_1\cap E_2$, $F(\bm{\omega})\geq F(\bm{\omega}^{inv})$ for any $\bm\omega\in\Theta$. The result in (ii) is proved.

\section{Simulation details and additional numerical experiments}
\setcounter{figure}{0}
\setcounter{table}{0}

\renewcommand\thefigure{E.\arabic{figure}}
\renewcommand\thetable{E.\arabic{table}}

In this section, we provide more details on the settings of numerical experiments and more extensive experimental results to demonstrate the performance of our method compared to others. 
\subsection{Datasets and models}
\noindent\textbf{Synthetic.} Regarding cluster structures, we consider five scenarios: (S1) balanced with $L=4$, where each cluster has 25 devices; (S2) unbalanced with $L=4$, where the cluster sizes are 10, 40, 10 and 40; (S3) balanced with $L=2$, where each cluster has 50 devices; (S4) unstructured with $L=1$, where all $m=50$ devices have an identical optimal model and parameters; and (S5) personalized with $L=m=50$, where each device owns a local personalized model. We use the cross-entropy loss functions. 

\noindent\textbf{Housing and Body fat (H\&BF).} The Housing dataset \cite{HARRISON197881,Chen2018} has 506 samples and 13 features. The response vector $ y $ is the median house price, and features include some factors that may affect the price of housing, such as per capita crime rate. The Body fat dataset contains 252 samples and 14 features. The response vector $ y $ is the percentage of body fat, and features include various body measurements. For consistency of the number of features, we randomly generate a vector from a normal distribution as the 14-th feature of the housing dataset. 

\noindent \textbf{MNIST \& FMNIST.}   We use two public benchmark data sets, MNIST \cite{Lecun1998} and FMNIST (Fashion-MNIST) \cite{Xiao2017}. We followed the procedures described in \cite{Sattler2021} to create the practical non-IID data setting, which has a clear cluster structure. Let us take MNIST as an example. We first partitioned 60000 training images and 10000 testing images into $m=20$ devices by a Dirichlet distribution, each of which belonging to one of $ L = 4 $ clusters. We then modify every devices’ data by randomly swapping two labels, depending on which cluster a device belongs to. Specifically, the first cluster consists of devices 1-5, where each devices' data labeled as "0" could be relabeled as "8" and vice versa. The second cluster consists of devices 6-10, where each devices' data labeled as  "1" and "7" could be switched out similarly, and so on. The testing data are processed in the same way. We use a multilayer convolutional neural network with cross-entropy loss. To account for some common properties of the data from all devices, we adopt the weight sharing technique from multi-task learning \cite{Caruana1997}, as used in {IFCA} \cite{Ghosh2020}. Specifically, when we train the neural network models, we share the weights for the first three layers and run our method only on the last layer, producing a predicted label for each image.

\noindent\textbf{Machines and libraries.} All the methods are implemented in PyTorch \cite{Paszke2019} version 1.9.0 running on a public computing cloud with an Intel(R) Xeon Gold 5218 CPU and 192GB memory. 

\subsection{Additional experiments and results}\label{app:E}
\subsubsection{Synthetic data}  
\begin{table*}
	\caption{Experimental results on synthetic datasets under scenario S2. }
	\label{table:unbalance}
	\centering
	\resizebox{\linewidth}{!}{
		\begin{tabular}{cccccccccc}
			\toprule
			Methods & LOCAL &FedAvg  &LG &Per-FedAvg  &IFCA  & CFL  &PACFL &FPFC-$\ell_{1}$ &FPFC  \\
			\midrule
			Acc  & 85.49\%$\pm$ 0.03 & 40.63\% $ \pm $ 0.01 & 73.25\%$ \pm $ 0.05 &65.67\% $ \pm  $ 0.06 & 58.30\%$\pm$ 0.18 & 89.31\%$\pm$ 0.03  &\textbf{91.36\% $ \pm $ 0.02}  &84.91\%$\pm$ 0.04 & 90.22\%$\pm$ 0.03\\
			Num  & $\times $   &$ \times $ &$ \times $  &$ \times $ & \textbf{2.00$ \pm $0.82}   &36.00$ \pm $6.48 & 8.00 $ \pm $ 5.10 &9.67$ \pm $4.19&11.67$\pm $8.81  \\
			ARI   & $ \times $   &$ \times $ &$ \times $ &$ \times $ &0.39$ \pm $0.41   & 0.19$\pm$0.02  &  0.78 $ \pm $ 0.15&0.94$ \pm $0.04   & \textbf{0.97$\pm$0.03} \\
			\bottomrule
		\end{tabular}
	}
\end{table*}
\begin{table*}
	\caption{Experimental results on synthetic datasets under scenario S3. }
	\label{table:group2}
	\centering
	\resizebox{\linewidth}{!}{
		\begin{tabular}{cccccccccc}
			\toprule
			Methods & LOCAL &FedAvg &LG &Per-FedAvg  &IFCA  &	CFL &PACFL  &FPFC-$\ell_{1}$ &FPFC  \\
			\midrule
			Acc  & 85.60\%$\pm$ 0.01  & 50.74\% $ \pm $ 0.03 & 77.62\%$ \pm $0.08  &67.86\% $ \pm  $ 0.02 & 64.11\%$\pm$ 0.19 & 90.52\%$\pm$ 0.01 & \textbf{91.64\% $ \pm $ 0.01} &87.12\%$\pm$ 0.02 & 90.06\%$\pm$ 0.01 \\
			Num  & $\times $   &$ \times $ & $\times $ & $\times $ & 1.33$ \pm $0.47   &9.67$ \pm $2.05 &\textbf{2.00$\pm$0.00}  &\textbf{2.00$\pm$0.00} &\textbf{2.00$\pm$0.00}   \\
			ARI   & $ \times $  & $ \times $   &$ \times $ & $\times $ &0.33$ \pm $0.47    &  0.52$\pm$0.10    & \textbf{1.00$\pm$0.00}  &\textbf{1.00$\pm$0.00}   & \textbf{1.00$\pm$0.00} \\
			\bottomrule
		\end{tabular}
	}
\end{table*}
\begin{table*}
	\caption{Experimental results on synthetic datasets under scenario S4.}
	\label{table:group1}
	\centering
	\resizebox{\linewidth}{!}{
		\begin{tabular}{cccccccccc}
			\toprule
			Methods & LOCAL &FedAvg &LG &Per-FedAvg &IFCA  &	CFL   &PACFL &FPFC-$\ell_{1}$ &FPFC  \\
			\midrule
			Acc  & 87.02\%$\pm$ 0.04 &  91.96\% $ \pm $ 0.02 &87.90\% $ \pm  $ 0.04   &91.99\% $ \pm  $ 0.02 & 91.90\%$\pm$ 0.02 & \textbf{93.00\%$\pm$ 0.03} &92.71\% $ \pm  $ 0.02 &90.99\%$\pm$ 0.02 &  90.99\%$\pm$ 0.02\\
			Num  & $\times $   & $\times $ & $\times $ &$ \times $ & \textbf{1.00$\pm$0.00} &\textbf{1.00$\pm$0.00} &\textbf{1.00$\pm$0.00}  &\textbf{1.00$\pm$0.00} &\textbf{1.00$\pm$0.00}   \\
			ARI   & $ \times $   & $\times $ & $\times $ &$ \times $  &\textbf{1.00$\pm$0.00}   & \textbf{1.00$\pm$0.00}    &\textbf{1.00$\pm$0.00}   & \textbf{1.00$\pm$0.00}  & \textbf{1.00$\pm$0.00} \\
			\bottomrule
		\end{tabular}
	}
\end{table*}
\begin{table*}
	\caption{Experimental results on synthetic datasets under scenario S5.}
	\label{table:group50}
	\centering
	\resizebox{\linewidth}{!}{
		\begin{tabular}{cccccccccc}
			\toprule
			Methods & LOCAL &FedAvg &LG &Per-FedAvg &IFCA  &	CFL   &PACFL &FPFC-$\ell_{1}$ &FPFC  \\
			\midrule
			Acc  &  \textbf{81.83\%$\pm$ 0.02} &9.44\% $ \pm $ 0.01 & 63.41\% $ \pm  $ 0.01 & 30.56\% $ \pm  $ 0.07 & 60.17\%$\pm$ 0.03  & 65.47\%$\pm$ 0.02 & 81.16\% $ \pm  $ 0.02 &80.27\%$\pm$ 0.02 & 80.27\%$\pm$ 0.02\\
			Num  & $\times $  & $\times $ & $\times $  &$ \times $ & 28.33$ \pm $2.05 &37.67$ \pm $1.25 &\textbf{50.00$\pm $0.00} &\textbf{50.00$\pm $0.00} &\textbf{50.00$\pm $0.00}    \\
			ARI   & $ \times $   & $\times $ & $\times $  &$ \times $  &0.00$ \pm $0.00   &0.00$\pm$0.00  & \textbf{1.00$\pm$0.00}      & \textbf{1.00$\pm$0.00} & \textbf{1.00$\pm$0.00}  \\
			\bottomrule
		\end{tabular}
	}
\end{table*}
Table \ref{table:unbalance} reports the results for scenario S2, i.e., unbalanced cluster structure with $L=4$. FPFC has favorable clustering performance. It can be observed that only {FPFC} can identify the general cluster structure and the overestimation of the number of clusters is due to the over-subdivision of two small clusters. The results of scenario S3 are presented in Table \ref{table:group2}. {FPFC} has the best or close to the best performance in prediction and clustering. Tables \ref{table:group1} and \ref{table:group50} list the results of scenarios S4 and S5. It can be seen that in the two worst scenarios, i.e., unstructured with $L=1$ and personalized with $L=m$, our {FPFC} still has competitive performance; in particular, it still can identify the cluster structure exactly, suggesting that {FPFC} is a safe choice in practice. 
\begin{figure*}[t]
	\centering
	\includegraphics[width=0.8\columnwidth]{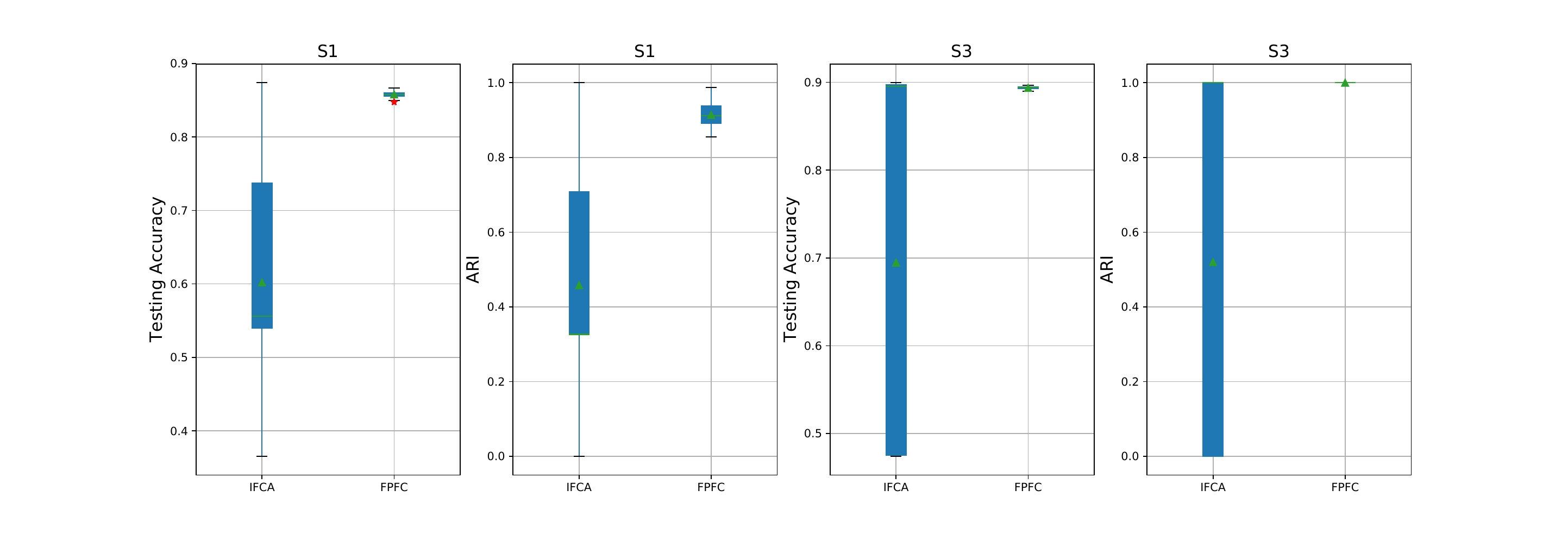}
	\caption{Distribution of testing accuracy and ARI over 50 random initializations on synthetic datasets.}
	\label{fig:initial}
\end{figure*}
\subsubsection{Sensitivity to Initial Values} \label{app:E.2.2}
Due to the nonconvexity of our objective function, the results of {FPFC} would rely on the initial values. As such, we randomly generate 50 initial values to test the initial value sensitivity. In Fig. \ref{fig:initial}, we present the distributions of testing accuracy and ARI for a random replicate under the scenarios S1 and S3. As we can see,  {FPFC} is very robust to the initial values, while {IFCA}, an iterative optimization algorithm that is also dependent on the initial values, displays large fluctuations in the quality of results. 
\begin{figure*}
	\centering
	\includegraphics[width=1\columnwidth]{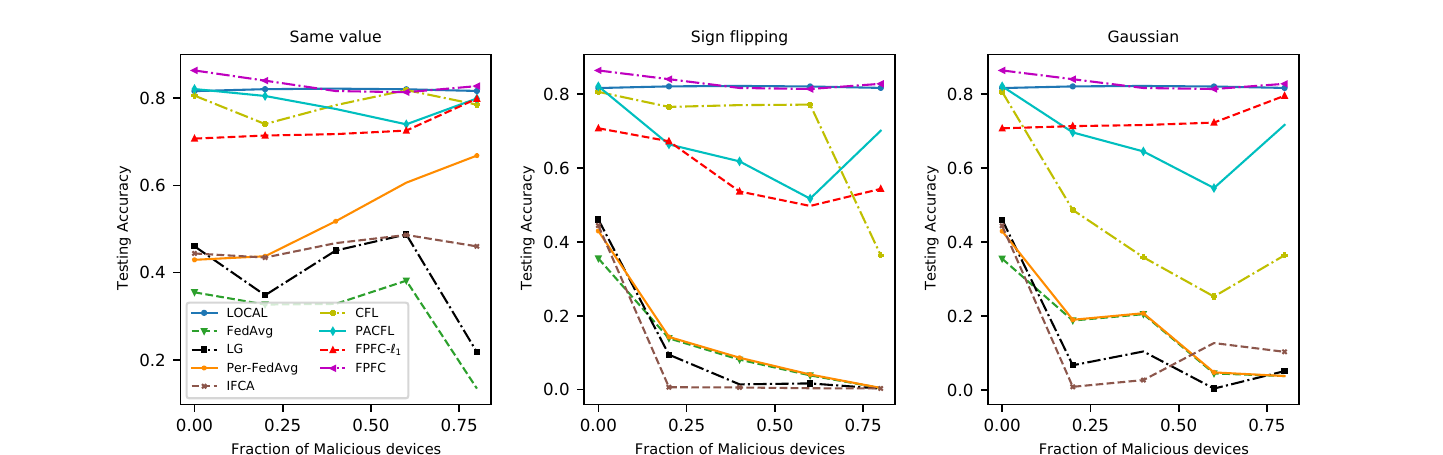}
	\caption{Robustness comparison of different methods on synthetic dataset.}
	\label{fig:syn_attack}
\end{figure*}
\begin{figure*}
	\centering
	\includegraphics[width=1\columnwidth]{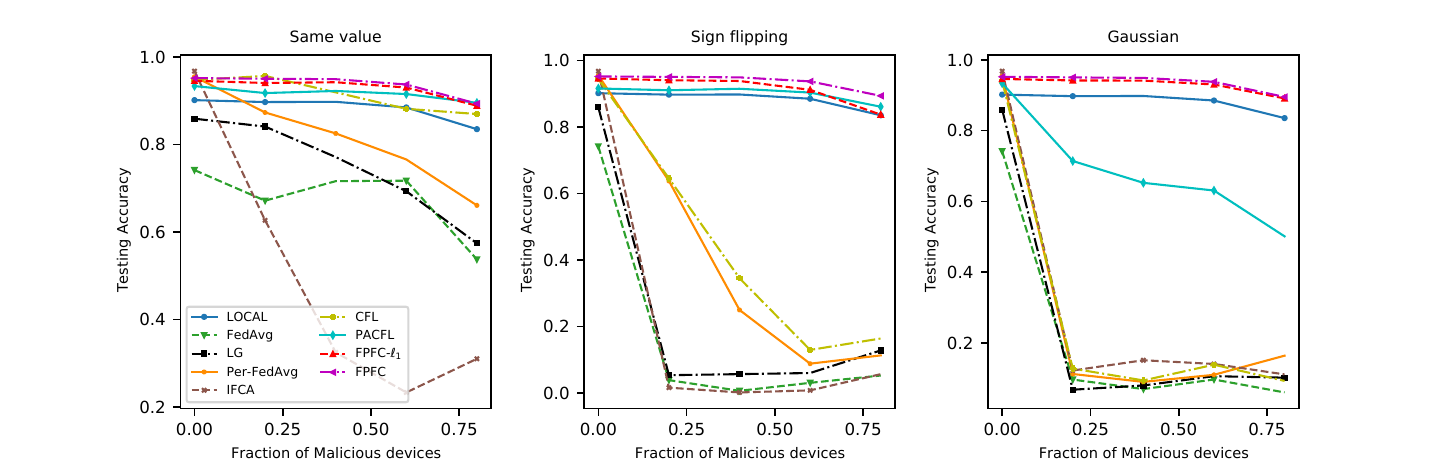}
	\caption{Robustness comparison of different methods on MNIST dataset.}
	\label{fig:mnist_attack}
\end{figure*}
\begin{figure}[h]
	\centering
	\vspace{-0.3cm}
	\includegraphics[width=0.7\columnwidth]{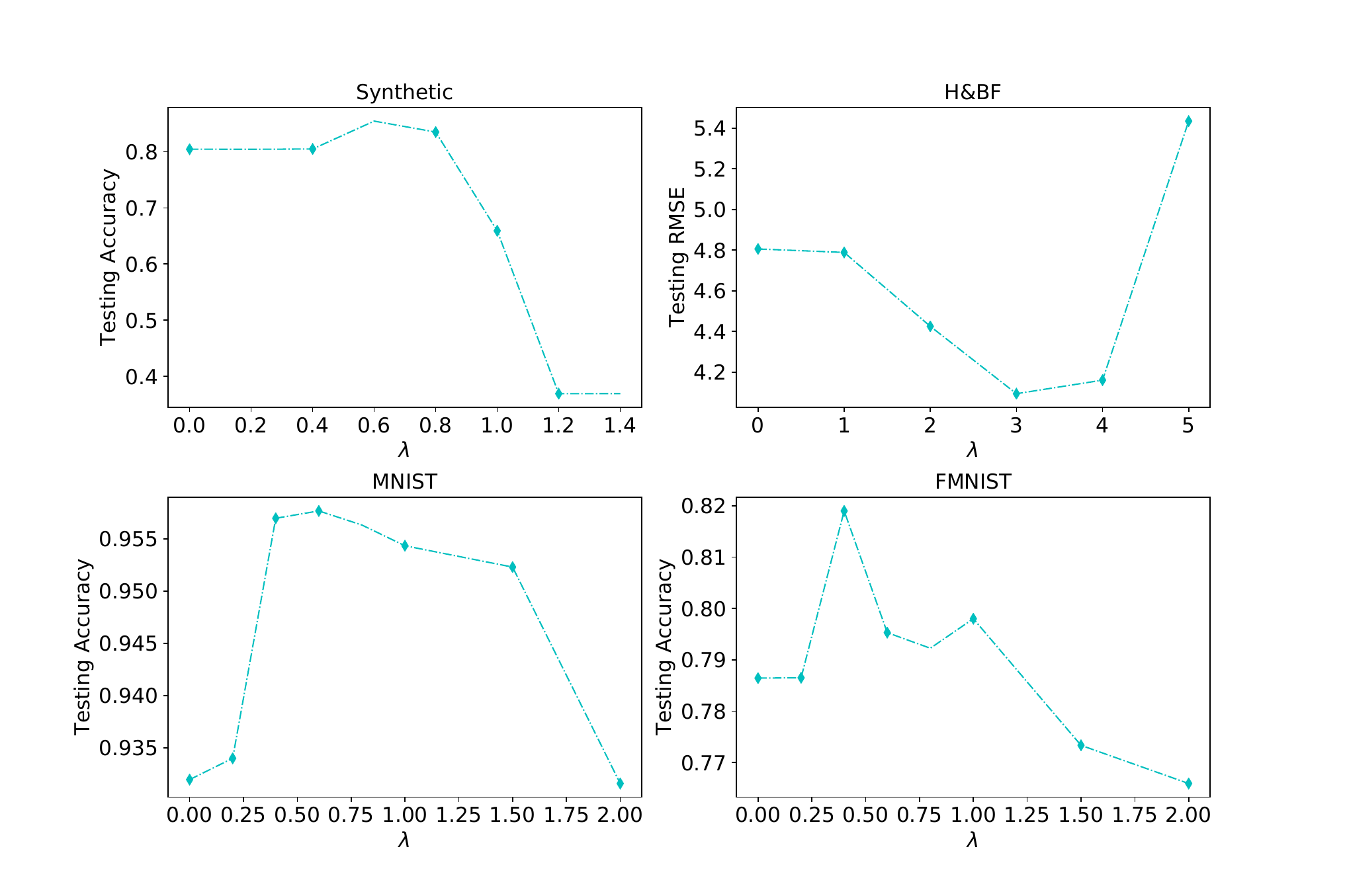}
	\vspace{-0.3cm}
	\caption{Testing accuracy/RMSE as a function of $\lambda$.}
	\label{fig:lambda}
\end{figure}

\begin{figure}[h]
	\centering
	\vspace{-0.3cm}
	\includegraphics[width=0.6\columnwidth]{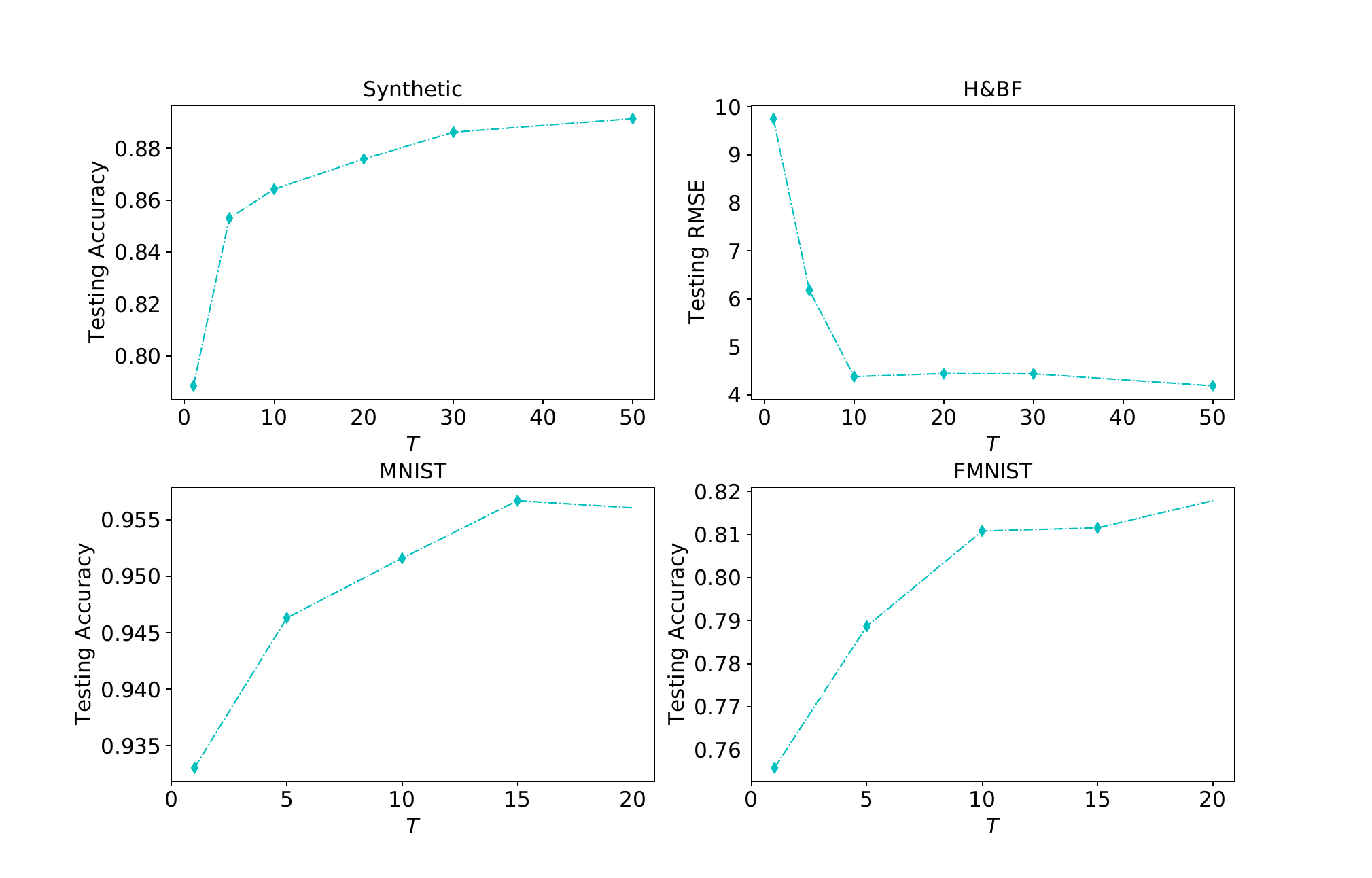}
	\vspace{-0.3cm}
	\caption{Testing accuracy/RMSE versus (maximal) number of local epochs.}
	\label{fig:epochs}
\end{figure}
\begin{figure}[h]
	\centering
	\vspace{-0.5cm}
	\includegraphics[width=0.6\columnwidth]{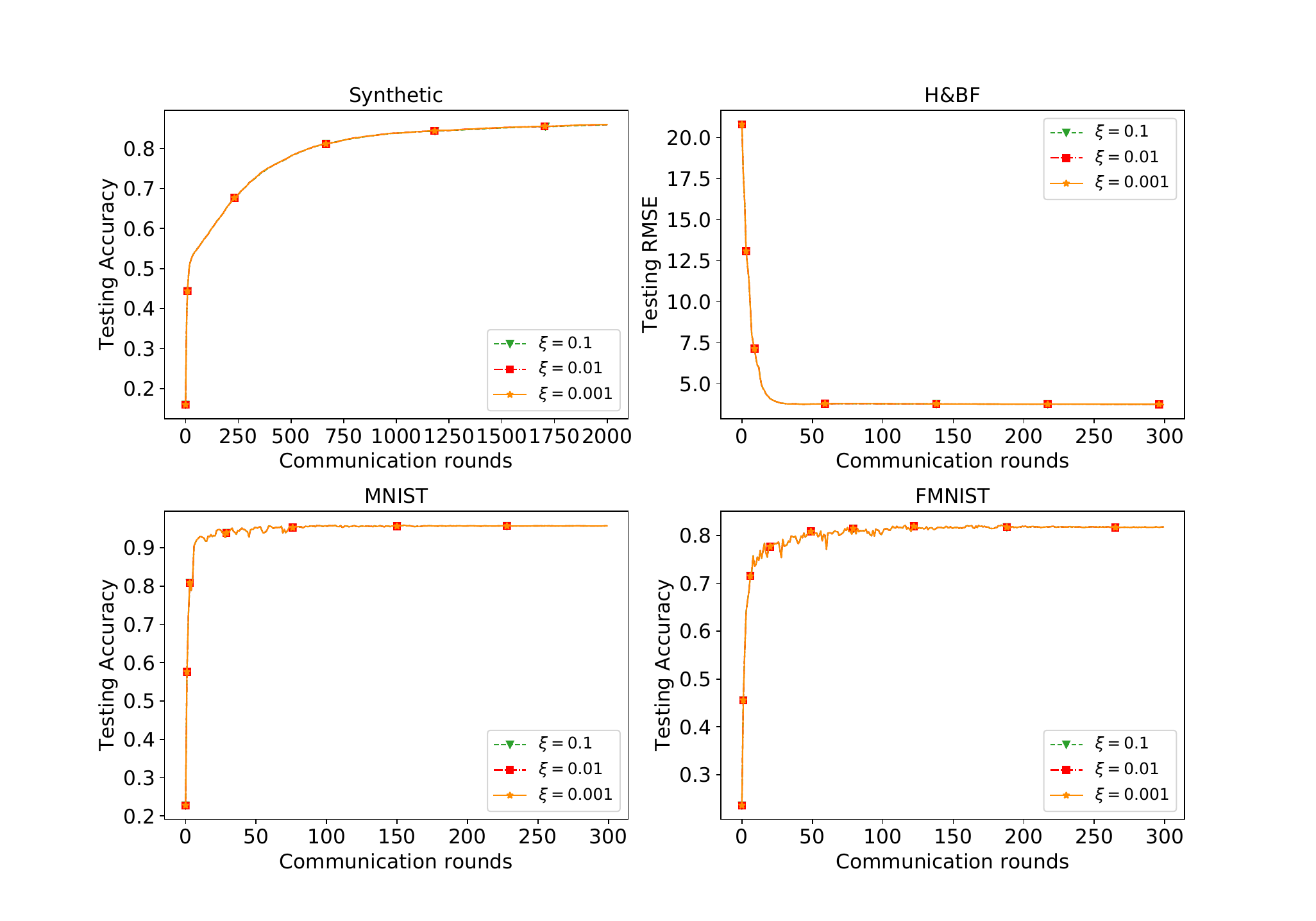}
	\vspace{-0.3cm}
	\caption{Testing accuracy/RMSE with different values of $ \xi $.}
	\label{fig:xi}
\end{figure}
\subsubsection{Complete Results on Robustness}
Fig. \ref{fig:syn_attack} and Fig. \ref{fig:mnist_attack} show the robustness results  on the synthetic and MNIST datasets, respectively, for different methods under the three Byzantine attacks. FPFC consistently exhibits higher robustness compared to other algorithms under various attack scenarios.

\subsubsection{Effect of Regularization Parameter}
In FPFC, the regularization parameter $\lambda$ controls the strength of fusion penalty. When $\lambda=0$, the penalty is absent, and the original model will degenerate into the fully personalized model; When $\lambda\to \infty$, all devices will be shrunk into one cluster, and the original model will degenerate into the global model in standard FL. Fig. \ref{fig:lambda} presents the testing RMSE/accuracy as a function of $\lambda$ on different datasets. Take synthetic data as an example, it can be observed that as $\lambda$ increases, the testing accuracy increases at first, peaks at approximately $ \lambda = 0.6 $, and then decreases, implying that a suitable $\lambda$ can improve prediction performance.

\subsubsection{Heterogeneous training}

Due to the different local storage and computational power, different devices may perform varied epochs of local updates. To simulate such heterogeneous training, 
we fix a global number of epochs $T$ and assign $T_i$ epochs (chosen uniformly at random in the range $[1,T]$) to each participating device. Fig. \ref{fig:epochs} shows the testing RMSE/accuracy versus $T$. We observe that {FPFC} has a high tolerance to heterogeneous epochs as long as the number of epochs is not too small. 

\subsubsection{Effect of $ \xi $}\label{app:E.2.6}
Due to the non-differentiability of the SCAD penalty at all points, we approximate the non-differentiable SCAD penalty by a continuously differentiable function, smoothed SCAD, and $\xi (\xi < \lambda)$ is a newly introduced parameter. Fig. \ref{fig:xi} shows the performance of different choices of $ \xi $. We can see that the parameter $ \xi $ will not affect
the results as long as it is small enough.

\end{document}